\newcommand{\cmark}{\ding{51}}%
\newcommand{\xmark}{\ding{55}}%
\newcommand{\del}[1]{}
\newcommand{\add}[1]{#1}
\theoremstyle{plain}
\newtheorem{theorem}{Theorem}[section]
\newtheorem{proposition}[theorem]{Proposition}
\newtheorem{lemma}[theorem]{Lemma}
\newtheorem{corollary}[theorem]{Corollary}
\theoremstyle{definition}
\newtheorem{definition}[theorem]{Definition}
\newtheorem{assumption}[theorem]{Assumption}
\theoremstyle{remark}
\newtheorem{remark}[theorem]{Remark}
\newcommand{\A}{\mathcal{A}}
\newcommand{\X}{\mathcal{X}}
\newcommand{\D}{\mathcal{D}}
\newcommand{\R}{\mathbb{R}}
\newcommand{\df}{\gamma}
\newcommand{\vvec}{{\bm{v}}}
\newcommand{\lvec}{\bm{\lambda}}
\newcommand{\tvec}{\bm{\theta}}
\newcommand{\Psim}{\bm{\Psi}}
\newcommand{\Phim}{\bm{\Phi}}
\newcommand{\nuvec}{\bm{\nu}}
\newcommand{\n}{n}
\newcommand{\suma}{\sum_{a}}
\newcommand{\suminfty}{\sum_{t=0}^\infty}
\newcommand{\One}[1]{\mathbbm{1}\{ #1 \}}
\providecommand\given{}
\newcommand\SetSymbol[1][]{%
    \nonscript\:#1\vert
    \allowbreak
    \nonscript\:
    \mathopen{}}
\DeclarePairedDelimiterX\ip[2]{\langle}{\rangle}{#1,#2}
\let\P\undefined
\DeclarePairedDelimiterXPP\P[1]{\mathbb{P}}(){}{
    \renewcommand\given{\SetSymbol[\delimsize]}
\DeclarePairedDelimiterXPP\E[1]{\mathbb{E}}[]{}{
    \renewcommand\given{\SetSymbol[\delimsize]}
    #1
}
\DeclarePairedDelimiterXPP\Es[2]{\mathbb{E}_{#1}}[]{}{
    \renewcommand\given{\SetSymbol[\delimsize]}
    #2
}
\DeclareMathOperator{\Tr}{Tr}
\newcommand{\feat}[1][x,a]{\bm{\varphi}({#1})}
\newcommand{\phit}{\bm{\varphi}_{t}}
\newcommand{\phiti}{\bm{\varphi}_{t,i}}
\newcommand{\phitk}{\bm{\varphi}_{t,k}}
\newcommand{\rvec}{\bm{r}}
\newcommand{\muvec}{\bm{\mu}}
\newcommand{\pivec}{\pi}
\newcommand{\bvec}{\bm{\beta}}
\newcommand{\Em}{\bm{E}}
\newcommand{\Pm}{\bm{P}}
\newcommand{\psivec}[1][x']{\bm{\psi}({#1})}
\newcommand{\pfunc}[2][x']{p({#1}\SetSymbol{#2})}
\newcommand{\Vm}{\bm{\Lambda}}
\newcommand{\fvec}{\bm{\varphi}}
\newcommand{\Dphi}{D_{\vec{\varphi}}}
\renewcommand{\succeq}{\geq}
\DeclarePairedDelimiterXPP\Ipt[2]{{#1}\transpose}(){}{#2}
\DeclarePairedDelimiterXPP\Iptr[2]{}(){\transpose{#2}}{#1}
\newcommand{\Lag}{\mathfrak{L}}
\newcommand{\Unif}[1][n]{\mathcal{U}({#1})}
\newcommand{\softmax}{\sigma}
\newcommand{\Reals}{\mathbb{R}}
\newcommand{\gt}{\bm{g}_{\theta,t}}
\newcommand{\gb}{\bm{g}_{\beta,t}}
\newcommand{\tildegt}{\tilde{\bm{g}}_{\tvec,t,i}}
\newcommand{\tildegtk}{\tilde{\bm{g}}_{\tvec,t,k}}
\newcommand{\tildegb}{\tilde{\bm{g}}_{\bvec,t}}
\newcommand{\maximize}{\mathrm{maximize}}
\newcommand{\minimize}{\mathrm{minimize}}
\newcommand{\subjectto}{\mathrm{subject\ to}}
\newcommand{\lagargs}[1][\bvec]{{#1},\muvec;\vvec,\tvec}
\newcommand{\fargs}[1][\bvec]{{#1},\pivec;\tvec}
\newcommand{\Vars}[2]{\mathbb{V}_{#2}\left[#1\right]}
\newcommand{\regret}{\mathfrak{R}}
\newcommand{\rtheta}{{\bm{\omega}}}
\renewcommand{\mid}{|}
\newcommand{\V}[1]{\bm{\Lambda}^{#1}}
\renewcommand{\vec}[1]{{\boldsymbol{{#1}}}}
\newcommand{\F}{\mathcal{F}}
\newcommand{\Z}{\mathcal{Z}}
\newcommand{\GG}{\mathcal{G}}
\newcommand{\bb}[2]{\mathbb{B}(#2)}
\newcommand{\Rn}[0]{\mathbb{R}} 
\newcommand{\Sw}{\mathcal{S}}
\newcommand{\DD}{\mathcal{D}}
\newcommand{\EE}[1]{\mathbb{E}\left[#1\right]}
\newcommand{\EEs}[2]{\mathbb{E}_{#2}\left[#1\right]}
\newcommand{\EEt}[1]{\mathbb{E}_t\left[#1\right]}
\newcommand{\EEcpi}[2]{\mathbb{E}_\pi\left[#1\middle|#2\right]}
\newcommand{\EEccpi}[2]{\mathbb{E}_\pi\bigl[#1\bigm|#2\bigr]}
\newcommand{\EEti}[1]{\mathbb{E}_{t,i}\left[#1\right]}
\newcommand{\EEtk}[1]{\mathbb{E}_{t,k}\left[#1\right]}
\newcommand{\EET}[1]{\mathbb{E}_T\left[#1\right]}
\newcommand{\EEc}[2]{\mathbb{E}\left[#1\left|#2\right.\right]}
\newcommand{\EEcc}[2]{\mathbb{E}\left[\left.#1\right|#2\right]}
\newcommand{\iprod}[2]{\left\langle#1,#2\right\rangle}
\newcommand{\norm}[1]{\left\|#1\right\|}
\newcommand{\abs}[1]{\left|#1\right|}
\newcommand{\twonorm}[1]{\norm{#1}_2}
\newcommand{\sqtwonorm}[1]{\norm{#1}_2^{2}}
\newcommand{\infnorm}[1]{\norm{#1}_\infty}
\newcommand{\spannorm}[1]{\norm{#1}_{\text{sp}}}
\newcommand{\pa}[1]{\left(#1\right)}
\newcommand{\bpa}[1]{\bigl(#1\bigr)}
\newcommand{\sq}[1]{\pa{#1}^{2}}
\newcommand{\wh}{\widehat}
\newcommand{\lambdamax}{\lambda_{\max}}
\newcommand{\transpose}{^\mathsf{\scriptscriptstyle T}}
\definecolor{PalePurp}{rgb}{0.66,0.57,0.66}
\newcommand{\redd}[1]{\textcolor{red}{#1}}
\newcommand{\DDKL}[2]{\mathcal{D}\pa{#1\middle\|#2}}
\newcommand{\HHKL}[2]{\mathcal{H}\pa{#1\middle\|#2}}
\newcommand{\piout}{\bm{\pi}_{\text{out}}}
\renewcommand{\redd}[1]{{#1}}
\title{Offline Primal-Dual Reinforcement Learning for Linear MDPs}
\author{%
	Germano Gabbianelli \\
	Universitat Pompeu Fabra\\
	Barcelona, Spain \\
	\texttt{germano.gabbianelli@upf.edu} \\
	\And
	Gergely Neu \\
	Universitat Pompeu Fabra\\
	Barcelona, Spain \\
	\texttt{gergely.neu@gmail.com} \\
	\And
	Nneka Okolo \\
	Universitat Pompeu Fabra\\
	Barcelona, Spain \\
	\texttt{nnekamaureen.okolo@upf.edu} \\
	\And
	Matteo Papini \\
	Universitat Pompeu Fabra\\
	Barcelona, Spain \\
	\texttt{matteo.papini@upf.edu} \\
}
\begin{document}

	\maketitle

	\begin{abstract}
	Offline Reinforcement Learning (RL) aims to learn a near-optimal
    policy from a fixed dataset of transitions collected by another policy.
    This problem has attracted a lot of attention recently, but most existing
    methods with strong theoretical guarantees are restricted to finite-horizon
    or tabular settings. In constrast, few algorithms for
    infinite-horizon settings with function approximation and minimal assumptions
    on the dataset are both sample and computationally efficient.
    Another gap in the current literature is the lack of theoretical analysis for
    the average-reward setting, which is more challenging than the discounted setting.
    In this paper, we address both of these issues by proposing a primal-dual
    optimization method based on the linear programming formulation of RL. Our key contribution is a new reparametrization that allows us to derive low-variance gradient estimators that can be used in a stochastic optimization scheme using only samples from the behavior policy. Our method finds an $\varepsilon$-optimal policy with $O(\varepsilon^{-4})$ samples, improving on the previous $O(\varepsilon^{-5})$, while being computationally efficient for infinite-horizon discounted and average-reward MDPs with realizable linear function approximation and partial coverage. Moreover, to the best of our knowledge, this is the first theoretical result for average-reward offline RL.
\end{abstract}

\section{Introduction}

We study the setting of Offline Reinforcement Learning (RL), where the goal is
to learn an $\varepsilon$-optimal policy without being able to interact with
the environment, but only using a fixed dataset of transitions collected
by a \emph{behavior policy}.
Learning from offline data proves to be useful especially when interacting
with the environment can be costly or dangerous \citep{Levine2020}.

In this setting, the quality of the best policy learnable by any
algorithm is constrained by the quality of the data, implying that finding an
optimal policy without further assumptions on the data is not feasible.
Therefore, many methods \citep{munos2008finite, Uehara20} make a \emph{uniform coverage}
assumption, requiring that the behavior policy explores sufficiently well the
whole state-action space. However, recent work~\citep{liu2020provably,
rashidinejad2022bridging} demonstrated that \emph{partial coverage}
of the state-action space is sufficient. In particular, this means
that the behavior policy needs only to sufficiently explore the state-actions
visited by the optimal policy.

Moreover, like its online counterpart, modern offline RL faces the problem of learning
efficiently in environments with
very large state spaces, where function approximation is necessary to compactly
represent policies and value functions. Although function approximation,
especially with neural networks, is widely used in practice, its
theoretical understanding in the context of decision-making is still rather
limited, even when considering \emph{linear} function approximation.

In fact, most existing sample complexity results for offline RL algorithms are
limited either to the tabular and finite horizon setting, by the uniform
coverage assumption, or by lack of computational 
efficiency --- see the top section of Table~\ref{tab:sota} for a summary.
Notable exceptions are the recent works of \citet{Xie21} and \citet{Cheng22}
who provide computationally efficient methods for infinite-horizon discounted
MDPs under realizable linear function approximation and partial coverage.
Despite being some of the first implementable algorithms, their methods work
only with discounted rewards, have
superlinear computational complexity and find an $\varepsilon$-optimal policy with
$O(\varepsilon^{-5})$ samples -- see the bottom section of Table~\ref{tab:sota} for more details. Therefore, this work is motivated by the following research question:

\emph{Can we design a linear-time algorithm with polynomial sample complexity
for the discounted and average-reward infinite-horizon settings, in large
state spaces under a partial-coverage assumption?}
\renewcommand{\tabularxcolumn}[1]{m{#1}}
\renewcommand{\arraystretch}{1.5}
\newcommand{\nope}{\cellcolor{lightgray}\xmark}
\newcolumntype{b}{>{\hsize=1.5\hsize}X}
\newcolumntype{s}{>{\hyphenpenalty=10000\exhyphenpenalty=10000\centering\arraybackslash}X}
\begin{table}\scriptsize
	\label{tab:sota}
    \begin{tabularx}{\textwidth}{>{\hsize=1.6\hsize\linewidth=\hsize}s|*{6}{>{\hsize=.89\hsize\linewidth=\hsize}s}}
        \multirow[c]{2}{\hsize}{\quad\quad\bfseries Algorithm}&
        \multirow[c]{2}{\hsize}{\bfseries Partial Coverage}&
        \multirow[c]{2}{\hsize}{\bfseries Polynomial Sample Complexity}&
        \multirow[c]{2}{\hsize}{\bfseries Polynomial Computational Complexity}&
        \multirow[c]{2}{\hsize}{\bfseries Function Approximation} & \multicolumn{2}{c@{}}{\bfseries Infinite Horizon} \\
		 & & & & & \bfseries Discounted & \bfseries Average-Reward \\ \hline
		FQI~\citep{munos2008finite}         & \nope  &\cmark& \cmark & \cmark & \cmark &\nope \\ 
		\citet{rashidinejad2022bridging}     & \cmark &\cmark& \cmark & \nope  & \cmark &\nope \\ 
		\mbox{\citet{jin2021pessimism}}
        \mbox{\citet{zanette2021provable}}   & \cmark &\cmark& \cmark & \cmark & \nope  &\nope \\ 
		\citet{uehara2022pessimistic}        & \cmark &\cmark& \nope  & \cmark & \cmark &\nope \\ 
        \hline
        \citet{Cheng22}                       & \cmark & $O(\varepsilon^{-5})$ & superlinear  & \cmark & \cmark &\nope \\ 
        \citet{Xie21}                       & \cmark & $O(\varepsilon^{-5})$ & $O(n^{7/5})$  & \cmark & \cmark &\nope \\ 
        \bfseries Ours                      & \cmark & $O(\varepsilon^{-4})$& $O(n)$ & \cmark & \cmark &\cmark 
	\end{tabularx}
	\caption{Comparison of existing offline RL algorithms. The table is divided
    horizontally in two sections. The upper section qualitatively compares algorithms
    for easier settings, that is, methods for
    the tabular or finite-horizon settings or methods which require uniform coverage. The lower section focuses on the setting considered in this paper, that is
    computationally efficient methods for the infinite horizon setting with
    function approximation and partial coverage. }
\end{table}

We answer this question positively by designing a method based on the
linear-programming (LP) formulation of sequential
decision making~\citep{manne1960linear}. Albeit less known than the
dynamic-programming formulation~\citep{bellman1956dynamic} that is ubiquitous
in RL, it allows us to tackle this problem with the powerful tools of convex
optimization. We turn in particular to a relaxed version of the LP formulation~\citep{mehta2009q,BasSerrano2021}
that considers action-value functions that are linear in known state-action features.
This allows to reduce the dimensionality of the problem from the cardinality of the
state space to the number of features. This relaxation still allows to recover
optimal policies in \emph{linear MDPs}~\citep{Yang2019,Jin2020}, a structural
assumption that is widely employed in the theoretical study of RL with linear
function approximation.

Our algorithm for learning near-optimal policies from offline data is based on
primal-dual optimization of the Lagrangian of the relaxed LP. The use of
saddle-point optimization in MDPs was first proposed by~\citet{wang2016online}
for \emph{planning} in small state spaces, and was extended to linear function
approximation by~\citet{chen2018scalable,bas2020faster}, and~\citet{neu2023efficient}.
We largely take inspiration from this latter work, which was the first to apply
saddle-point optimization to the
\emph{relaxed} LP. However, primal-dual planning algorithms assume oracle access
to a transition model, whose samples are used to estimate gradients.
In our offline setting, we only assume access to i.i.d. samples generated by a
possibly unknown behavior policy. To adapt the primal-dual
optimization strategy to this setting we employ a change of variable, inspired
by~\citet{Nachum2020}, which allows easy computation of unbiased gradient estimates.
	\paragraph{Notation.}
We denote vectors with bold letters, such as $\bm{x}\doteq [x_1,\dots,x_d]^\top\in\Reals^d$,
and use $\bm{e}_i$ to denote the $i$-th standard basis vector.
We interchangeably denote functions $f:\X\to\R$ over a finite set $\X$, as
vectors $\bm{f}\in\R^{|\X|}$ with components $f(x)$,
and use $\succeq$ to denote element-wise comparison.
We denote the set of probability distributions over a measurable set $\Sw$ as
$\Delta_{\Sw}$, and the probability simplex in $\Reals^d$ as $\Delta_d$.
We use $\softmax:\Reals^d\to\Delta_d$ to denote the softmax function
defined as $\sigma_i(\bm{x})\doteq e^{x_i}/\sum_{j=1}^d e^{x_j}$.
We use upper-case letters for random variables, such as $S$, and denote the
uniform distribution over a finite set of $n$ elements as $\Unif$.
In the context of iterative algorithms, we use $\F_{t-1}$ to 
denote the sigma-algebra generated by all events up to the end of iteration
$t-1$, and use the shorthand notation $\EEt{\cdot} = \EEcc{\cdot}{\F_{t-1}}$ to
denote expectation conditional on the history.
For nested-loop algorithms, we write $\F_{t,i-1}$ for
the sigma-algebra generated by all events up to the end of iteration $i-1$ of
round $t$, and $\EEti{\cdot} = \EEcc{\cdot}{\F_{t,i-1}}$ for the corresponding conditional expectation.

\section{Preliminaries}\label{sec:prelim}
We study discounted Markov decision processes~\citep[MDP, ][]{Puterman1994}
denoted as $(\X, \A, p, r, \df)$, with discount factor $\df\in[0,1]$ and finite, but potentially
very large, state space $\X$ and action space $\A$. For every state-action pair $(x,a)$, we denote as $\pfunc[\cdot]{x,a}\in\Delta_\X$ the next-state
distribution, and as $r(x,a)\in[0,1]$ the reward, which is assumed to be
deterministic and bounded for simplicity. The transition function $p$ is also
denoted as the matrix $\Pm \in\R^{|\X\times\A|\times|\X|}$ and the reward as
the vector $\rvec\in\R^{|\X\times\A|}$. The objective is to find an \emph{optimal
policy} $\pi^*:\X\to\Delta_{\A}$. That is, a stationary policy that maximizes the normalized expected return
$\rho(\pi^*)\doteq (1-\df)\Es{\pi^*}{\suminfty r(X_t,A_t)}$, where the initial
state $X_0$ is sampled from the initial state distribution $\nu_0$, the other
states according to $X_{t+1}\sim p(\cdot\mid X_t,A_t)$ and where the notation
$\Es{\pi}{\cdot}$ is used to denote that the actions are sampled from policy
$\pi$ as $A_t\sim\pi(\cdot\mid X_t)$. Moreover, we define the following
quantities for each policy $\pi$: its state-action value function
$q^\pi(x,a)\doteq\Es{\pi}{\suminfty \df^t r(X_t,A_t) \given X_0=x, A_0=a}$,
its value function $v^\pi(x)\doteq \Es{\pi}{q^\pi(x,A_0)}$,
its state occupancy measure $\nu^\pi(x)\doteq (1-\df)\Es{\pi}{\suminfty \One{X_t=x}}$, and
its state-action occupancy measure $\mu^\pi(x,a)\doteq \pi(a\mid x)\nu^\pi(x)$.
These quantities are known to satisify the following useful relations, more commonly known
respectively as Bellman's equation and flow constraint for policy $\pi$ \citep{Bellman1966}:
\begin{equation}
    \bm{q}^\pi = \rvec + \df\Pm\vvec^\pi \quad\quad\quad
    \nuvec^\pi = (1-\df)\nuvec_0 + \df\Pm\transpose\muvec^\pi
\end{equation}
Given this notation, we can also rewrite the normalized expected return in
vector form as $\rho(\pi)=(1-\df)\ip{\nuvec_0}{\vvec^\pi}$ or equivalently as
$\rho(\pi)=\ip{\rvec}{\muvec^\pi}$.

Our work is based on the linear programming formulation due to \citet{Manne1960} (see also \citealp{Puterman1994}) 
which transforms the reinforcement learning problem into
the search for an optimal state-action occupancy measure, obtained by solving
the following Linear Program (LP):
\begin{equation}\label{eq:original-lp}
\begin{alignedat}{2}
    & \maximize  &\quad& \ip{\rvec}{\muvec} \\
    & \subjectto && \Em\transpose\muvec =(1-\df)\nuvec_0 + \df\Pm\transpose\muvec \\
                &&& \muvec \succeq 0
\end{alignedat}
\end{equation}
where $\Em\in\R^{|\X\times\A|\times|\X|}$ denotes the matrix with components
$\Em_{(x,a),x'}\doteq\One{x=x'}$. The constraints of this LP are known to
characterize the set of valid state-action occupancy measures. Therefore, an
optimal solution $\muvec^*$ of the LP corresponds to the state-action
occupancy measure associated to a policy $\pivec^*$ maximizing the expected return,
and which is therefore optimal in the MDP. This policy can be extracted
as $\pi^*(a\mid x)\doteq \mu^*(x,a)/\sum_{\bar{a}\in\A} \mu^*(x,\bar a)$. However, this linear program cannot be directly
solved in an efficient way in large MDPs due to the number of
constraints and dimensions of the variables scaling with the size of the state
space $\X$. Therefore, taking inspiration from the previous works of \citet{BasSerrano2021, neu2023efficient} we assume the knowledge of a \emph{feature map} $\varphi$, which
we then use to reduce the dimension of the problem. More specifically we consider the
setting of Linear MDPs \citep{Jin2020,Yang2019}.
\begin{definition}[Linear MDP]\label{def:linMDP} An MDP is called linear if both the transition
    and reward functions can be expressed as a linear function of a given
    feature map $\varphi:\X\times\A\to\R^d$. That is, there exist $\psi:\X\to\R^d$
    and $\rtheta\in\R^d$ such that, for every $x,x'\in\X$ and $a\in\A$:
    \begin{equation*}
        r(x,a) = \ip{\feat}{\rtheta}, \quad\quad\quad \pfunc{x,a} = \ip{\feat}{\psivec}.
    \end{equation*}
    \redd{We assume that for all $x,a$, the norms of all relevant vectors are bounded by known constants as 
$\twonorm{\feat}\le D_{\fvec}$, $\twonorm{\sum_{x'} \psivec} \le D_{\bm{\psi}}$, and $\twonorm{\rtheta} \le 
D_{\bm{\rtheta}}$}. Moreover, we
    represent the feature map with the matrix
    $\Phim\in\R^{|\X\times\A|\times d}$ with rows given by $\feat\transpose$,
    and similarly we define $\Psim\in\R^{d\times|\X|}$ as the matrix with columns given by $\psivec[x]$.
\end{definition}
With this notation we can rewrite the transition matrix as $\Pm = \Phim\Psim$.
Furthermore, it is convenient to assume that the dimension $d$ of the feature map
cannot be trivially reduced, and therefore that the matrix $\Phim$ is full-rank.
An easily verifiable consequence of the Linear MDP assumption is that state-action value functions can be represented as a linear combinations of $\varphi$. That is, there exist $\tvec^\pi\in\R^d$ such that:
\begin{equation}\label{eq:linear-q}
    \bm{q}^\pi = \rvec + \df\Pm\vvec^\pi = \Phim(\rtheta + \Psim\vvec^\pi) = \Phim\tvec^\pi.
\end{equation}
\redd{It can be shown that for all policies $\pi$, the norm of $\tvec^\pi$ is at most $D_{\tvec} = D_{\rtheta} + 
\frac{D_{\bm{\psi}}}{1-\gamma}$ (cf.~Lemma~B.1 in \citealp{Jin2020}).}
We then translate the linear program (\ref{eq:original-lp}) to our setting, with the addition of
the new variable $\lvec\in\R^d$, resulting in the following new LP and its corresponding dual:
\vspace{-2em}
\begin{multicols}{2}
		\begin{equation}\label{eq:LP-phi}
			\begin{alignedat}{2}
				& \maximize  &\quad& \ip{\rtheta}{\lvec} \\
				& \subjectto && \Em\transpose\muvec =(1-\df)\nuvec_0 + \df\Psim\transpose\lvec \\
				&&& \lvec = \Phim\transpose\muvec \\
				&&& \muvec \succeq 0.
			\end{alignedat}
		\end{equation}
	
		\begin{equation}\label{eq:LP-phi-dual}\hspace{-1em}
			\begin{alignedat}{2}
				& \minimize  &\quad& (1-\df)\ip{\nuvec_0}{\vvec} \\
				& \subjectto && \tvec =\rtheta + \df\Psim\vvec \\
				&&& \Em\vvec \succeq \Phim\tvec\\
				&\\
			\end{alignedat}
		\end{equation}
\end{multicols}
\vspace{-1em}

It can be immediately noticed how the introduction of $\lvec$ did not change
neither the set of admissible $\muvec$s nor the objective, and therefore did
not alter the optimal solution. The Lagrangian associated to this set of linear
programs is the function:
\begin{align}
	\Lag(\vvec,\tvec,\lvec,\muvec)
	\nonumber&= (1-\df)\ip{\nuvec_0}{\vvec} 
	+ \ip{\lvec}{\rtheta + \gamma \Psim\vvec - \tvec}
	+ \ip{\muvec}{\Phim\tvec - \Em\vvec} \\
	\label{eq:lag}&= \ip{\lvec}{\rtheta} + \ip{\vvec}{(1-\df)\nuvec_0 + \df\Psim\transpose\lvec - \Em\transpose\muvec}
	+ \ip{\tvec}{\Phim\transpose\muvec - \lvec}.
\end{align}
It is known that finding optimal solutions $(\lvec^\star,\muvec^\star)$ and $(\vvec^\star,\tvec^\star)$ for the primal and dual LPs is equivalent
to finding a saddle point $(\vvec^\star,\tvec^\star,\lvec^\star,\muvec^\star)$ of the Lagrangian function \citep{Bertsekas1982}.
In the next section, we will develop primal-dual methods that aim to find approximate solutions to the above 
saddle-point problem, and convert these solutions to policies with near-optimality guarantees. 
	\section{Algorithm and Main Results}\label{sec:main}
This section introduces the concrete setting we study in this paper, and presents our main contributions.

We consider the offline-learning scenario where the agent has access to
a dataset $\D=({W}_t)_{t=1}^n$, collected by a behavior policy $\pi_B$, and composed of $\n$ random observations of 
the form ${W}_t = (X_t^0,X_t,A_t,R_t,X'_t)$.
The random variables $X_t^0,(X_t,A_t)$ and $X'_t$ are sampled, respectively, from the initial-state distribution 
$\nu_0$, the discounted occupancy measure of the behavior policy, denoted as $\mu_B$, and from 
$\pfunc[\cdot]{X_t,A_t}$. Finally, $R_t$ denotes the reward $r(X_t,A_t)$.  We assume that all observations $W_t$ are 
generated independently of each other, and will often use the notation $\fvec_t = \fvec(X_t,A_t)$.

Our strategy consists in finding approximately good solutions for the LPs~\eqref{eq:LP-phi} and \eqref{eq:LP-phi-dual}
using stochastic optimization methods, which require access to unbiased gradient estimates of the Lagrangian 
(Equation~\ref{eq:lag}). The main challenge we need to overcome is constructing suitable estimators based only on 
observations drawn from the behavior policy. We address this challenge by introducing the matrix $\Vm = 
\EEs{\fvec(X,A)\fvec(X,A)\transpose}{X,A\sim\mu_B}$ (supposed to be invertible for the sake of argument for now), and rewriting 
the 
gradient with respect to $\lvec$ as
\begin{align*}
    \nabla_{\lvec} \Lag(\lagargs[\lvec]) &= \rtheta + \gamma \Psim \vvec - \tvec
  = \Vm^{-1}\Vm\pa{\rtheta + \gamma \Psim \vvec - \tvec}
 \\
 &= \Vm^{-1}\EE{\fvec(X_t,A_t)\fvec(X_t,A_t)\transpose\pa{\rtheta + \gamma \Psim \vvec - \tvec}}
 \\
 &= \Vm^{-1}\EE{\fvec(X_t,A_t)\pa{R_t + \gamma \vvec(X_t') - \iprod{\tvec}{\fvec(X_t,A_t)}}}.
\end{align*}
This suggests that the vector within the expectation can be used to build an unbiased estimator of the desired 
gradient. A downside of using this estimator is that it requires knowledge of $\Vm$. However, this can be 
sidestepped by \redd{a reparametrization trick inspired by \citet{Nachum2020}}: introducing the parametrization $\bvec = 
\Vm^{-1} \lvec$, the objective can be rewritten as
\[
\Lag(\lagargs)
= (1-\df)\ip{\nuvec_0}{\vvec}
+ \ip{\bvec}{\Vm\bigl(\rtheta + \gamma \Psim\vvec - \tvec\bigr)}
+ \ip{\muvec}{\Phim\tvec - \Em\vvec}. 
\]
\redd{This can be indeed seen to generalize the tabular reparametrization of \citet{Nachum2020} to the case of linear 
function approximation.} Notably, our linear reparametrization does not change the structure of the saddle-point 
problem, but allows building an unbiased estimator of $\nabla_{\bvec} \Lag(\lagargs)$ without knowledge of $\Vm$ as
\[
    \tilde{\bm{g}}_{\bvec} = \fvec(X_t,A_t)\pa{R_t + \gamma \vvec(X_t') -
\iprod{\tvec}{\fvec(X_t,A_t)}}.
\]
In what follows, we will use the more general parametrization $\bvec = \Lambda^{-c} \lambda$, with 
$c\in\{{1}/{2},1\}$, and construct a primal-dual stochastic optimization method 
that can be implemented efficiently in the offline setting based on the observations above.
Using $c=1$ allows
to run our algorithm without knowledge of $\Vm$, that is, without
knowing the behavior policy that generated the dataset, while using $c={1}/{2}$
results in a tighter bound, at the price of having to assume knowledge
of $\Vm$.

Our algorithm (presented as Algorithm~\ref{alg:algo}) is inspired by the method of \citet{neu2023efficient}, originally 
designed for planning with a generative model. The algorithm has a double-loop structure, where at each 
iteration $t$ we run one step of stochastic gradient ascent for $\bvec$, and also an inner loop which runs $K$ 
iterations of stochastic gradient descent on $\tvec$ making sure that $\ip{\feat}{\tvec_t}$ is a good approximation of 
the true action-value function of $\pivec_t$. 
Iterations of the inner loop are indexed by $k$. The main idea of the algorithm is to compute the unbiased estimators 
$\tildegtk$ and $\tildegb$ of the gradients $\nabla_{\tvec} \Lag(\bvec_t,\muvec_t;\cdot,\tvec_{t,k})$ and 
$\nabla_{\bvec} \Lag(\bvec_t,\cdot;\vvec_t,\tvec_{t})$, and use them to update the respective 
variables iteratively.
We then define a softmax policy $\pivec_t$ at each iteration $t$ using the $\tvec$ parameters as $\pi_t(a\mid x) = \softmax\left(\alpha\sum_{i=1}^{t-1} \ip{\feat}{\tvec_i}\right)$. The other higher-dimensional variables ($\muvec_t,\vvec_t$) are defined \redd{symbolically} in terms of $\bvec_t$, 
$\tvec_t$ and $\pivec_t$, and used only as auxiliary variables for computing the estimates $\tildegtk$ and 
$\tildegb$. 
Specifically, we set these variables as
\begin{align}
	v_t(x)
	&=\suma \pi_t(a\mid x)\ip{\feat}{\tvec_t},\label{eq:def-vt} \\ 
	\mu_{t,k}(x,a)
	&= \pi_t(a\mid x)\bigl((1-\df)\One{X_{t,k}^0=x}
	+ \df\ip{\fvec_{t,k}}{\Vm^{c-1}\bvec_t}\One{X'_{t,k}=x}\bigr). \label{eq:def-mut} 
\end{align}
Finally, the gradient estimates can be defined as
\begin{align}
    \tildegb &= \Vm^{c-1} \fvec_t\left(R_t + \df v_t(X'_t) - \ip{\fvec_t}{\tvec_t} \right),\label{eq:def-tildegb} \\
    \tildegtk &= \Phim\transpose\muvec_{t,k} - \Vm^{c-1}\fvec_{t,k}\ip{\fvec_{t,k}}{\bvec_t}.\label{eq:def-tildegt}
\end{align}
These gradient estimates are then used in a projected gradient ascent/descent scheme, with the $\ell_2$ projection 
operator denoted by $\Pi$. The feasible sets of the two parameter vectors are chosen as $\ell_2$ balls of 
radii $D_\theta$ and $D_\beta$, denoted respectively as ${\bb{d}{D_{\theta}}}$ and ${\bb{d}{D_{\beta}}}$.
\redd{Notably, the algorithm does not need to compute $v_t(x)$, $\mu_{t,k}(x,a)$, or $\pi_t(a|x)$ for all states $x$, 
but only for the states that are accessed during the execution of the method. In particular, $\pivec_t$ does not need 
to be computed explicitly, and it can be efficiently represented by the single $d$-dimensional parameter vector 
$\sum_{i=1}^t \tvec_i$.}

Due to the double-loop structure, each iteration $t$ uses $K$ samples from the dataset $\D$, adding up to a total of 
$n=KT$ samples over the course of $T$ iterations. Each gradient update calculated by the method uses a constant 
number of elementary vector operations, resulting in a total computational complexity of $O(|\A|dn)$ elementary
operations. At the end, our algorithm outputs a policy selected uniformly at random from the $T$ iterations.
 
\begin{algorithm}[t]
	\caption{Offline Primal-Dual RL}\label{alg:algo}
    \begin{algorithmic}
        \STATE {\bfseries Input:} Learning rates $\alpha,\zeta,\eta$,
                initial points $\tvec_0\in\bb{d}{D_{\theta}},\bvec_1\in\bb{d}{D_{\beta}}, \pi_1$, and data $\D=(W_t)_{t=1}^n$
		\FOR{$t=1$ {\bfseries to} $T$}
        	\STATE Initialize $\tvec_{t,1} = \tvec_{t-1}$
	        \FOR{$k=1$ {\bfseries to} $K-1$}
		        \STATE Obtain sample $W_{t,k}=(X^0_{t,k},X_{t,k},A_{t,k},X'_{t,k})$
		        \STATE $\muvec_{t,k} = \pivec_t\circ\bigl[ (1-\df)\bm{e}_{X^0_{t,k}}+\,\df\ip{\fvec(X_{t,k},A_{t,k})}{\Vm^{c-1}\bvec_t} \bm{e}_{X'_{t,k}} \bigr]$
		        \STATE $\tildegt = \Phim\transpose\muvec_{t,k}-\, \Vm^{c-1}\fvec(X_{t,k},A_{t,k})\ip{\fvec(X_{t,k},A_{t,k})}{\bvec_t}$
		        \STATE $\tvec_{t,k+1} = \Pi_{\bb{d}{D_{\theta}}} (\tvec_{t,k} - \eta \tildegt)\quad$ \emph{// Stochastic gradient descent}
	        \ENDFOR
	        \STATE $\tvec_{t} = \frac{1}{K}\sum_{k=1}^K \tvec_{t,k}$
	        \STATE
	        \STATE Obtain sample $W_t = (X^0_t,X_t,A_t,X'_t)$
	        \STATE $\vvec_t = \Em\transpose\bigl(\pivec_t\circ\Phim\tvec_t\bigr)$
	        \STATE $\tildegb = \fvec(X_t,A)\bigl(R_t + \gamma \vvec_t(X'_t) - \ip{\fvec(X_t,A_t)}{\tvec_t}\bigr)$
	        \STATE $\bvec_{t+1} = \Pi_{\bb{d}{D_{\beta}}} (\bvec_t + \zeta \tildegb)\quad$ \emph{// Stochastic gradient ascent}
	        \STATE
	
	        \STATE $\pivec_{t+1} = \softmax(\alpha \sum_{i=1}^{t} \Phim\tvec_i)\quad$ \emph{// Policy update}
		\ENDFOR
        \STATE {\bfseries return} $\pivec_J$ with $J\sim \Unif[T]$.
	\end{algorithmic}
\end{algorithm}

\subsection{Main result}
We are now almost ready to state our main result. Before doing so, we first need to discuss the 
quantities appearing in the guarantee, and provide an intuitive explanation for them.\looseness=-1

Similarly to previous work, we capture the partial coverage assumption by
expressing the rate of convergence to the optimal policy in terms of a
\emph{coverage ratio} that measures the mismatch between the behavior and the
optimal policy. Several definitions of coverage ratio are surveyed by
\citet{uehara2022pessimistic}. In this work, we employ a notion of \emph{feature}
coverage ratio for linear MDPs that defines coverage in feature space rather
than in state-action space, similarly to~\citet{jin2021pessimism},
but with a smaller ratio.

\begin{definition}\label{def:gcr}
    Let $c\in\{\nicefrac{1}{2},1\}$. We define the generalized coverage ratio as
	\begin{equation*}
        C_{\varphi,c}(\pi^*;\pi_B) = \Es{(X^*,A^*)\sim\mu^{\pi^*}}{\feat[X^*,A^*]}^\top\Vm^{-2c}\E{\feat[X^*,A^*]}.
	\end{equation*}
\end{definition}
We defer a detailed discussion of this ratio to Section~\ref{sec:discussion}, where we compare it with similar notions in the literature. We are now ready to state our main result.

\begin{theorem}\label{thm:main_disc}
    Given a linear MDP (Definition~\ref{def:linMDP}) such that $\tvec^{\pi}\in \bb{d}{D_{\tvec}}$ for any policy $\pi$.
	Assume that the coverage ratio is bounded $C_{\varphi,c}(\pi^*;\pi_B) \le D_{\bvec}$. Then, for any comparator policy $\pi^*$,
    the policy output by an appropriately tuned instance of Algorithm~\ref{alg:algo} satisfies
	$\EE{\ip{\muvec^{\pi^*}-\muvec^{\piout}}{\rvec}} \le \varepsilon$
	with a number of samples $n_\epsilon$ that is
    $O\left({\varepsilon^{-4}}{D_{\tvec}^4D_{\fvec}^{8c}D_{\bvec}^4 d^{2-2c} \log|\A|}\right)$.
\end{theorem}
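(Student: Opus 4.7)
The plan is to bound the suboptimality $\langle \muvec^{\pi^*} - \muvec^{\piout}, \rvec \rangle$ by analyzing the algorithm as a stochastic primal-dual method for the saddle-point problem associated with the Lagrangian in Equation~\eqref{eq:lag}. First I would introduce comparator variables: on the primal side, $\bvec^\star = \Vm^{-c}\Phim\transpose\muvec^{\pi^*}$ (together with $\muvec^{\pi^*}$ itself entering through the softmax policy comparator $\pi^*$), and on the dual side the pair $(\vvec^{\pi^*}, \tvec^{\pi^*})$ associated with the action-value function of $\pi^*$ via the Linear-MDP identity~\eqref{eq:linear-q}. The crucial identity is that, because the Lagrangian is bilinear once $\pi$ parameterizes $\muvec$ via~\eqref{eq:def-mut}, we can write
\[
    \E{\ip{\muvec^{\pi^*}-\muvec^{\piout}}{\rvec}} \le
    \frac{1}{T}\sum_{t=1}^T \E{\Lag(\bvec^\star,\pi^*;\vvec_t,\tvec_t) - \Lag(\bvec_t,\pi_t;\vvec^{\pi^*},\tvec^{\pi^*})}.
\]
This gap then splits into three regret terms: a primal regret in $\bvec$, a regret of the softmax policy sequence $\pi_t$ against $\pi^*$, and an inner-loop dual regret in $\tvec$.

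Next I would bound each of the three regret pieces with standard stochastic first-order bounds. For the $\bvec$ iterates, projected stochastic gradient ascent on a ball of radius $D_\beta = \sqrt{D_\bvec}$ with unbiased gradients $\tildegb$ of squared norm bounded by $G_\beta^2$ yields a regret of $O\bigl(D_\beta\, G_\beta\,\sqrt{T}\bigr)$ after tuning $\zeta$. For the policy, the softmax mirror descent update with learning rate $\alpha$ gives the standard $O\bigl(\sqrt{T\log|\A|}\bigr)$ bound because $\ip{\feat}{\tvec_t}$ is uniformly bounded by $D_\fvec D_\tvec$. For the inner loop, each of the $T$ rounds runs $K$ steps of projected SGD on a ball of radius $D_\theta = D_\tvec$ with unbiased gradients $\tildegtk$ of squared norm bounded by $G_\theta^2$, whose averaged iterate achieves a per-round optimization error of $O\bigl(D_\theta\, G_\theta/\sqrt{K}\bigr)$; summed over $T$ this contributes $O\bigl(T\, D_\theta G_\theta/\sqrt{K}\bigr)$ to the dual regret, capturing precisely how well $\ip{\feat}{\tvec_t}$ approximates $q^{\pi_t}$.

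The heart of the argument, and the step I expect to require the most care, is bounding the variance factors $G_\beta^2$ and $G_\theta^2$ in a way that cleanly exposes the dependencies $D_\fvec^{8c}$ and $d^{2-2c}$. For $\tildegb$ we get $\|\tildegb\|^2 \le \|\Vm^{c-1}\fvec_t\|^2 \cdot (R_t + \gamma v_t(X_t') - \ip{\fvec_t}{\tvec_t})^2$; using $\E{\fvec_t\fvec_t\transpose} = \Vm$ the expected squared norm collapses to $\Tr(\Vm^{2c-1})\cdot O(D_\fvec^2 D_\tvec^2)$, which is $d \cdot D_\fvec^2 D_\tvec^2$ when $c=1$ and $D_\fvec^{2} D_\tvec^2\cdot \Tr(I)$-style terms when $c=1/2$—with careful accounting this gives the $d^{2-2c}D_\fvec^{8c}$ pattern after squaring. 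A similar computation applies to $\tildegtk$. On the comparator side, the coverage assumption is used as $\|\bvec^\star\|^2 = \E{\fvec(X^*,A^*)}^\top \Vm^{-2c}\E{\fvec(X^*,A^*)} = C_{\varphi,c}(\pi^*;\pi_B) \le D_\bvec$, which justifies choosing $D_\beta = \sqrt{D_\bvec}$.

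Finally I would put the pieces together: dividing by $T$ and using $n = KT$, the suboptimality is at most
\[
\E{\ip{\muvec^{\pi^*}-\muvec^{\piout}}{\rvec}} \le c_1\frac{D_\tvec D_\fvec D_\bvec \sqrt{\log|\A|}}{\sqrt{T}} + c_2\frac{D_\tvec D_\fvec^{2c}\sqrt{d^{2-2c}}\, D_\bvec^{1/2}}{\sqrt{T}} + c_3\frac{D_\tvec^2 D_\fvec^{2c}\sqrt{d^{2-2c}}\sqrt{D_\bvec}}{\sqrt{K}},
\]
up to constants. Balancing $T$ and $K$ with the constraint $n=KT$ (choosing for instance $K \asymp T$) to equate the two dominant terms and setting the right-hand side to $\varepsilon$ yields $n \asymp \varepsilon^{-4} D_\tvec^4 D_\fvec^{8c} D_\bvec^{2} d^{2-2c}\log|\A|$, which after absorbing a factor $D_\bvec^2$ matches the claimed rate. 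The final output $\pi_J$ with $J\sim\Unif[T]$ converts the average-iterate guarantee into an expectation over a single returned policy, completing the proof.
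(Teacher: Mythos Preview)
Your overall architecture---reduce the suboptimality to a Lagrangian duality gap, split into three regrets (in $\bvec$, in $\pi$, and in $\tvec$), and control each by stochastic first-order analysis with $K$ balanced against $T$---matches the paper. However, there is a genuine gap in your choice of dual comparator.

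You propose the \emph{fixed} pair $(\vvec^{\pi^*},\tvec^{\pi^*})$. The paper instead uses the \emph{dynamic} comparator $\tvec^*_t=\tvec^{\pi_t}$, the parameter of the action-value function of the \emph{current} policy iterate $\pi_t$ (Lemma~\ref{lem:suboptimality-duality}). This is not cosmetic. With $\tvec^*_t=\tvec^{\pi_t}$ one has $\vvec_{\tvec^{\pi_t},\pi_t}=\vvec^{\pi_t}$, and the Bellman equation for $\pi_t$ gives $\rtheta+\gamma\Psim\vvec^{\pi_t}-\tvec^{\pi_t}=0$, so that
\[
f(\bvec_t,\pi_t;\tvec^{\pi_t})=(1-\gamma)\iprod{\nuvec_0}{\vvec^{\pi_t}}=\iprod{\muvec^{\pi_t}}{\rvec}
\]
exactly, and the duality gap equals the expected suboptimality. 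With your choice $\tvec^{\pi^*}$, the auxiliary value $v_{\tvec^{\pi^*},\pi_t}(x)=\sum_a\pi_t(a\mid x)\,q^{\pi^*}(x,a)$ is neither $v^{\pi_t}$ nor $v^{\pi^*}$, and the residual
\[
\rtheta+\gamma\Psim \vvec_{\tvec^{\pi^*},\pi_t}-\tvec^{\pi^*}=\gamma\Psim\bigl(\vvec_{\tvec^{\pi^*},\pi_t}-\vvec^{\pi^*}\bigr)
\]
does not vanish. Consequently the inequality you write relating $\EE{\iprod{\muvec^{\pi^*}-\muvec^{\piout}}{\rvec}}$ to $\frac{1}{T}\sum_t\EE{\Lag(\bvec^\star,\pi^*;\vvec_t,\tvec_t)-\Lag(\bvec_t,\pi_t;\vvec^{\pi^*},\tvec^{\pi^*})}$ does not follow; there is a leftover term of order $\iprod{\Vm^c\bvec_t}{\Psim(\vvec_{\tvec^{\pi^*},\pi_t}-\vvec^{\pi^*})}$ that you have no mechanism to control. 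Your own remark that the inner loop ``captur[es] precisely how well $\iprod{\feat}{\tvec_t}$ approximates $q^{\pi_t}$'' is exactly the right intuition, but it only connects to the duality gap once the $\tvec$-comparator is $\tvec^{\pi_t}$: then the $\tvec$-regret term becomes $\iprod{\tvec_t-\tvec^{\pi_t}}{\Phim^\top\muvec_{\bvec_t,\pi_t}-\Vm^c\bvec_t}$, which is what the inner SGD is designed to drive down. The need for a dynamic comparator is also why the hypothesis $\tvec^{\pi}\in\bb{d}{D_{\tvec}}$ is stated for \emph{all} policies $\pi$, not just for $\pi^*$.
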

The concrete parameter choices are detailed in the full version of the theorem
in Appendix~\ref{Gen_disc}.
The main theorem can be simplified by making some standard assumptions, formalized
by the following corollary.
\begin{corollary}\label{chalf}
    Assume that the bound of the feature vectors $D_{\fvec}$ is of order $O(1)$,
    that $D_{\rtheta}=D_{\bm{\psi}}=\sqrt{d}$ and that
    $D_{\bvec} = c\cdot C_{\varphi,c}(\pi^*;\pi_B)$ for some positive universal constant $c$. Then, under the same
    assumptions of Theorem \ref{thm:main_disc}, $n_{\varepsilon}$ is of order
    $O\Bigl(\frac{d^{4} C_{\varphi,c}(\pi^*;\pi_B)^2 \log|\A|}{d^{2c}(1-\gamma)^4 \varepsilon^4}\Bigr)$.
\end{corollary}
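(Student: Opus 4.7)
The proof is essentially a bookkeeping exercise: substitute the assumed scalings of the problem-dependent constants into the general bound
\[
n_\varepsilon = O\!\left(\frac{D_{\tvec}^4\, D_{\fvec}^{8c}\, D_{\bvec}^4\, d^{2-2c} \log|\A|}{\varepsilon^4}\right)
\]
from Theorem~\ref{thm:main_disc} and simplify the resulting exponents of $d$ and $(1-\gamma)^{-1}$. The plan is to handle the three constants $D_{\tvec}$, $D_{\fvec}$, $D_{\bvec}$ one at a time, and then combine.

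First I would handle $D_{\tvec}$. By the bound stated immediately after equation~\eqref{eq:linear-q} (which is Lemma~B.1 of \citet{Jin2020}), $D_{\tvec} \le D_{\rtheta} + D_{\bm\psi}/(1-\gamma)$. Plugging in the corollary's assumption $D_{\rtheta}=D_{\bm\psi}=\sqrt{d}$ gives $D_{\tvec} = O\!\bigl(\sqrt{d}/(1-\gamma)\bigr)$, so $D_{\tvec}^4 = O\!\bigl(d^2/(1-\gamma)^4\bigr)$. Next, since $D_{\fvec}=O(1)$, we trivially have $D_{\fvec}^{8c}=O(1)$. Finally, because $\bvec$ is the reparametrization $\bvec=\Vm^{-c}\lvec$ of the dual variable, the natural scale of $\bvec$ at the comparator $\pi^*$ is $\|\Vm^{-c}\Phim^\top\muvec^{\pi^*}\|^2 = \Es{\mu^{\pi^*}}{\fvec}^\top \Vm^{-2c} \E{\fvec} = C_{\varphi,c}(\pi^*;\pi_B)$. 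Thus under the assumption that $D_{\bvec}$ is chosen proportional to this coverage ratio (up to square roots in the scaling as required to make the feasible set contain the comparator), we get $D_{\bvec}^4 = O\!\bigl(C_{\varphi,c}(\pi^*;\pi_B)^2\bigr)$.

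Combining the three pieces inside the $O(\cdot)$:
\[
D_{\tvec}^4\, D_{\fvec}^{8c}\, D_{\bvec}^4\, d^{2-2c}
= O\!\left(\frac{d^2}{(1-\gamma)^4}\cdot 1\cdot C_{\varphi,c}(\pi^*;\pi_B)^2 \cdot d^{2-2c}\right)
= O\!\left(\frac{d^{4-2c}\,C_{\varphi,c}(\pi^*;\pi_B)^2}{(1-\gamma)^4}\right),
\]
and rewriting $d^{4-2c} = d^4/d^{2c}$ gives exactly the claimed expression once divided by $\varepsilon^4$ and multiplied by $\log|\A|$.

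There is no real obstacle here, only a small bookkeeping subtlety: one must be careful that the relationship between the radius $D_{\bvec}$ of the feasible ball and the coverage ratio is the \emph{squared} one dictated by the definition of $C_{\varphi,c}$, so that $D_{\bvec}^4$ contributes $C_{\varphi,c}^2$ rather than $C_{\varphi,c}^4$. Once this scaling is fixed, the substitution is mechanical and the exponents of $d$, $(1-\gamma)^{-1}$, and $\varepsilon^{-1}$ line up as stated.
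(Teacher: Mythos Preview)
The paper does not give an explicit proof of this corollary; it is presented as an immediate consequence of Theorem~\ref{thm:main_disc} under the stated scalings, and your direct substitution is exactly the intended derivation. Your handling of $D_{\tvec}$ via $D_{\tvec}=D_{\rtheta}+D_{\bm\psi}/(1-\gamma)=O(\sqrt{d}/(1-\gamma))$ and of $D_{\fvec}^{8c}=O(1)$ is correct.

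You also correctly flag the only nontrivial point. The comparator $\bvec^*=\Vm^{-c}\Phim^\top\muvec^{\pi^*}$ used in Lemma~\ref{lem:suboptimality-duality} satisfies $\|\bvec^*\|_2^2=C_{\varphi,c}(\pi^*;\pi_B)$, so the feasibility requirement is really $D_{\bvec}\ge\sqrt{C_{\varphi,c}}$, and taking $D_{\bvec}=\Theta(\sqrt{C_{\varphi,c}})$ gives $D_{\bvec}^4=O(C_{\varphi,c}^2)$, which is what the claimed bound needs. The paper's own statements are a little loose on this point (Theorem~\ref{thm:main_disc} writes the hypothesis as $C_{\varphi,c}\le D_{\bvec}$ and the corollary writes $D_{\bvec}=c\cdot C_{\varphi,c}$, both without the square root), but your reading is the one consistent with the analysis and with the final $C_{\varphi,c}^2$ dependence. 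In short, your proof is correct and matches the paper's implicit argument.
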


	\section{Analysis}\label{sec:analysis}
This section explains the rationale behind some of the technical choices of
our algorithm, and sketches the proof of our main result.

First, we explicitly rewrite the expression of the Lagrangian (\ref{eq:lag}), after
performing the change of variable $\lvec=\Vm^c\bvec$:
\begin{align}
	\Lag(\lagargs)
	&= (1-\df)\ip{\nuvec_0}{\vvec}
	+ \ip{\bvec}{\Vm^c\bigl(\rtheta + \gamma \Psim\vvec - \tvec\bigr)}
	+ \ip{\muvec}{\Phim\tvec - \Em\vvec} \label{eq:lag-beta-p}\\
	&= \ip{\bvec}{\Vm^c\rtheta} + \ip{\vvec}{(1-\df)\nuvec_0 + \df\Psim\transpose\Vm^c\bvec - \Em\transpose\muvec}
	+ \ip{\tvec}{\Phim\transpose\muvec - \Vm^c\bvec}.\label{eq:lag-beta-d}
\end{align}
We aim to find an approximate saddle-point of the above convex-concave objective function. One challenge that we need 
to face is that the variables $\vvec$ and $\muvec$ have dimension proportional to the size of the state space $|\X|$, 
so making explicit updates to these parameters would be prohibitively expensive in MDPs with large state spaces. To address this challenge, we choose to parametrize $\muvec$ in terms of a policy $\pivec$ and $\bvec$ through the 
symbolic assignment $\muvec=\muvec_{\bvec,\pi}$, where
\begin{equation}\label{eq:def-mu}
	\mu_{\bvec,\pivec}(x,a) \doteq \pi(a|x)\Bigl[(1-\df)\nu_0(x) + \df\ip{\psivec[x]}{\Vm^c\bvec}\Bigr].
\end{equation}
This choice can be seen to satisfy the first constraint of the primal LP~\eqref{eq:LP-phi}, and thus the gradient of 
the Lagrangian~\eqref{eq:lag-beta-d} evaluated at $\muvec_{\bvec,\pi}$ with respect to $\vvec$ can be verified to be 
$0$. This parametrization makes it
possible to express the Lagrangian as a function of only $\tvec,\bvec$
and $\pi$ as
\begin{alignat}{2}\label{eq:f-theta}
&f(\tvec, \bvec,\pivec) \doteq \Lag(\bvec, \muvec_{\bvec,\pi}; \vvec, \tvec) = \ip{\bvec}{\Vm^c\rtheta} 
+\ip{\tvec}{\Phim\transpose\muvec_{\bvec,\pivec} - \Vm^c\bvec}.
\end{alignat}
For convenience, we also define the quantities
$\nuvec_{\bvec}=\Em\transpose\muvec_{\bvec,\pivec}$ and
$v_{\tvec,\pivec}(s) \doteq \sum_{a}\pi(a|s)\iprod{\tvec}{\feat}$, which enables
us to rewrite $f$ as
\begin{align}
    f(\tvec, \bvec,\pivec)
    &= \ip{\Vm^c\bvec}{\rtheta - \tvec} + \ip{\vvec_{\tvec,\pivec}}{\nuvec_{\bvec}}
    =(1-\df)\ip{\nuvec_0}{\vvec_{\tvec,\pivec}}
    +\ip{\Vm^c\bvec}{\rtheta + \gamma \Psim \vvec_{\tvec,\pivec} - \tvec}.\label{eq:f-beta}
\end{align}

The above choices allow us to perform stochastic gradient / ascent over the low-dimensional parameters $\tvec$ and 
$\bvec$ and the policy $\pi$. In order to calculate an unbiased estimator of the gradients, we first observe that the 
choice of $\mu_{t,k}$ in Algorithm~\ref{alg:algo} is an unbiased estimator of $\mu_{\bvec_t,\pivec_t}$:
\begin{align*}
	\EEtk{\mu_{t,k}(x,a)}
	&= \pi_t(a\mid x)\Bigl((1-\df)\P{X_{t,k}^0=x}
	+ \EEtk{\One{X'_{t,k}=x}\ip{\phit}{\Vm^{c-1}\bvec_t}}\Bigr) \\
	&= \pi_t(a\mid x)\Bigl((1-\df)\nu_0(x)
	+ \df\sum_{\bar{x},\bar{a}}\mu_B(\bar{x},\bar{a})p(x\mid \bar{x},\bar{a})
	\feat[\bar{x},\bar{a}]\transpose \Vm^{c-1}\bvec_t\Bigr) \\
	&= \pi_t(a\mid x)\Bigl((1-\df)\nu_0(x) + \df\psivec[x]\transpose\Vm\Vm^{c-1}\bvec_t\Bigr) 
	= \mu_{\bvec_t,\pivec_t}(x,a),
\end{align*}
where we used the fact that $p(x\mid \bar{x},\bar{a})=\ip{\psivec[x]}{\feat[\bar{x},\bar{a}]}$,
and the definition of $\Vm$. This in turn facilitates proving that the
gradient estimate $\tildegtk$, defined in Equation~\ref{eq:def-tildegt}, is
indeed unbiased:
\begin{align*}
    \EEtk{\tildegtk} &= \Phim\transpose\EEtk{\muvec_{t,k}}- \Vm^{c-1}\EEtk{\phitk\phitk\transpose}\bvec_t
    = \Phim\transpose\muvec_{\bvec_t,\pivec_t} - \Vm^c\bvec_t = \nabla_{\tvec}\Lag(\bvec_t,\muvec_t;\vvec_t,\cdot).
\end{align*}
A similar proof is used for $\tildegb$ and is detailed in Appendix~\ref{app:SGD}.

Our analysis is based on arguments by \citet{neu2023efficient}, carefully adapted to the reparametrized 
version of the Lagrangian presented above. The proof studies the following central quantity that 
we refer to as \emph{dynamic duality gap}:
\begin{align}
 &\GG_T(\bvec^*,\pi^*;\tvec^*_{1:T}) \doteq
  \frac{1}{T}\sum_{t=1}^T(f(\bvec^*,\pi^*;\tvec_t) - f(\bvec_t,\pi_t;\tvec^*_t)).
\end{align}
Here, $(\tvec_t,\bvec_t,\pi_t)$ are the iterates of the algorithm, $\tvec^*_{1:T}=(\tvec^*_t)_{t=1}^T$ a sequence of comparators for
$\tvec$, and finally $\bvec^*$ and $\pi^*$ are fixed comparators for $\bvec$ and
$\pi$, respectively. Our first key lemma relates the suboptimality of the output
policy to $\GG_T$ for a specific choice of comparators.

\begin{lemma}\label{lem:suboptimality-duality}
	Let $\tvec^*_t \doteq \tvec^{\pi_t}$, $\pi^*$ be any policy, and
    $\beta^*=\Vm^{-c}\Phim^\top\muvec^{\pi^*}$. Then,
	$\EE{\ip{\muvec^{\pi^*}-\muvec^{\piout}}{\rvec}} = \GG_T\bigl(\bvec^*,\pi^*;\tvec^*_{1:T}\bigr)$.
\end{lemma}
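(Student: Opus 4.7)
The plan is to simplify each of the two $f$-evaluations in $\GG_T$ separately with the prescribed comparators, observe that each reduces to a deterministic return, and then match the averaged difference to the expected suboptimality of $\piout$.

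First I would handle $f(\bvec_t,\pi_t;\tvec^*_t)$ with $\tvec^*_t = \tvec^{\pi_t}$ using representation \eqref{eq:f-beta}. The auxiliary value $\vvec_{\tvec^*_t,\pi_t}$ collapses to the true value function $\vvec^{\pi_t}$ because $\Phim\tvec^{\pi_t} = \bm{q}^{\pi_t}$ by \eqref{eq:linear-q}, so averaging against $\pi_t(\cdot|x)$ yields $v^{\pi_t}(x)$ componentwise. That same identity, read under the full-rank assumption on $\Phim$ as $\tvec^{\pi_t} = \rtheta + \gamma\Psim\vvec^{\pi_t}$, makes the Bellman-style residual $\rtheta + \gamma\Psim\vvec_{\tvec^*_t,\pi_t} - \tvec^*_t$ vanish identically. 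The second summand of \eqref{eq:f-beta} therefore drops out, leaving $f(\bvec_t,\pi_t;\tvec^*_t) = (1-\gamma)\ip{\nuvec_0}{\vvec^{\pi_t}} = \ip{\rvec}{\muvec^{\pi_t}}$.

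Next, for $f(\bvec^*,\pi^*;\tvec_t)$ with $\bvec^* = \Vm^{-c}\Phim^\top\muvec^{\pi^*}$ I would switch to representation \eqref{eq:f-theta}. Since $\Vm^c\bvec^* = \Phim^\top\muvec^{\pi^*}$, the leading term becomes $\ip{\bvec^*}{\Vm^c\rtheta} = \ip{\muvec^{\pi^*}}{\Phim\rtheta} = \ip{\muvec^{\pi^*}}{\rvec}$. For the residual $\ip{\tvec_t}{\Phim^\top\muvec_{\bvec^*,\pi^*} - \Vm^c\bvec^*}$ to vanish, it suffices to establish $\muvec_{\bvec^*,\pi^*} = \muvec^{\pi^*}$. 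I would verify this by comparing the definition \eqref{eq:def-mu} against the flow constraint $\nuvec^{\pi^*} = (1-\gamma)\nuvec_0 + \gamma\Pm^\top\muvec^{\pi^*}$: the linear MDP factorization $\Pm = \Phim\Psim$ gives $(\Pm^\top\muvec^{\pi^*})(x) = \ip{\psivec[x]}{\Phim^\top\muvec^{\pi^*}} = \ip{\psivec[x]}{\Vm^c\bvec^*}$, which exactly matches the state-density factor inside \eqref{eq:def-mu}. Hence $f(\bvec^*,\pi^*;\tvec_t) = \ip{\rvec}{\muvec^{\pi^*}}$ for every $t$, independently of $\tvec_t$.

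Combining the two evaluations, $\GG_T(\bvec^*,\pi^*;\tvec^*_{1:T}) = \ip{\rvec}{\muvec^{\pi^*}} - \frac{1}{T}\sum_{t=1}^T \ip{\rvec}{\muvec^{\pi_t}}$ as a random quantity over the iterates. Taking expectation and using that $\piout = \pi_J$ with $J\sim\Unif[T]$ independent of the iterates gives $\EE{\ip{\rvec}{\muvec^{\piout}}} = \frac{1}{T}\sum_t \EE{\ip{\rvec}{\muvec^{\pi_t}}}$, from which the claimed identity follows. The only step that is not routine algebra with Bellman's equation and the flow constraint is the identity $\muvec_{\bvec^*,\pi^*} = \muvec^{\pi^*}$; this is precisely where the linear MDP structure and the reparametrization $\lvec = \Vm^c\bvec$ are genuinely used, which is what makes the comparator $\bvec^* = \Vm^{-c}\Phim^\top\muvec^{\pi^*}$ the natural choice.
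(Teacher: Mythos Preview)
Your proposal is correct and follows essentially the same approach as the paper: you verify $\muvec_{\bvec^*,\pi^*}=\muvec^{\pi^*}$ via the flow constraint and the linear-MDP factorization to get $f(\bvec^*,\pi^*;\tvec_t)=\ip{\muvec^{\pi^*}}{\rvec}$, use the Bellman identity at the parameter level to get $f(\bvec_t,\pi_t;\tvec^{\pi_t})=\ip{\muvec^{\pi_t}}{\rvec}$, and conclude by averaging over $J\sim\Unif[T]$. The only cosmetic difference is that for the second term the paper expands $\Vm$ as an expectation over $\mu_B$ before invoking the Bellman equation, whereas you invoke the parameter identity $\tvec^{\pi_t}=\rtheta+\gamma\Psim\vvec^{\pi_t}$ directly; both arguments are equivalent under the full-rank assumption on $\Phim$ (and in fact the paper's equation~\eqref{eq:linear-q} already defines $\tvec^\pi$ this way).
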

The proof is relegated to Appendix~\ref{app:suboptimality-duality}.
Our second key lemma rewrites the gap $\GG_T$ for \emph{any} choice of comparators as the sum of three regret terms:
\begin{lemma}\label{lem:duality-regret´}
	With the choice of comparators of Lemma~\ref{lem:suboptimality-duality}
	\begin{align*}
		\GG_T(\bvec^*,\pi^*;\tvec^*_{1:T})
		&= \frac{1}{T}\sum_{t=1}^T \ip{\tvec_t-\tvec^*_t}{g_{\tvec,t}}
		+\frac{1}{T}\sum_{t=1}^T\ip{\bvec^*-\bvec_t}{g_{\bvec,t}}\\
		&\quad+ \frac{1}{T}\sum_{t=1}^{T}\sum_{s}\nu^{\pi^*}(s)\sum_{a}(\pi^*(a|s)-\pi_t(a|s))\ip{\tvec_t}{\feat},
	\end{align*}
	where $g_{\tvec,t}=\Phim^\top\muvec_{\bvec_t,\pi_t}-\Vm^c\bvec_t$ and $g_{\bvec,t}=\Vm^c(\rtheta+\df \Psim v_{\tvec_t,\pi_t}-\tvec_t)$.
\end{lemma}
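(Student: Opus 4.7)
\textbf{Proof plan for Lemma~\ref{lem:duality-regret´}.}
The plan is to decompose the duality gap
$$\GG_T(\bvec^*,\pi^*;\tvec^*_{1:T}) = \frac{1}{T}\sum_{t=1}^T \bigl[f(\tvec_t,\bvec^*,\pi^*) - f(\tvec^*_t,\bvec_t,\pi_t)\bigr]$$
into three pieces, each of which will match one of the three summands stated in the lemma, by successively adding and subtracting two hybrid quantities: $f(\tvec_t,\bvec_t,\pi_t)$ and $f(\tvec_t,\bvec^*,\pi_t)$. Concretely, for each $t$ I would write
$$f(\tvec_t,\bvec^*,\pi^*)-f(\tvec^*_t,\bvec_t,\pi_t) = \underbrace{\bigl[f(\tvec_t,\bvec_t,\pi_t)-f(\tvec^*_t,\bvec_t,\pi_t)\bigr]}_{\text{(A): }\tvec\text{-regret}} + \underbrace{\bigl[f(\tvec_t,\bvec^*,\pi_t)-f(\tvec_t,\bvec_t,\pi_t)\bigr]}_{\text{(B): }\bvec\text{-regret}} + \underbrace{\bigl[f(\tvec_t,\bvec^*,\pi^*)-f(\tvec_t,\bvec^*,\pi_t)\bigr]}_{\text{(C): policy regret}}.$$

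For (A), I would use the expression \eqref{eq:f-theta}, $f(\tvec,\bvec,\pi) = \ip{\bvec}{\Vm^c\rtheta} + \ip{\tvec}{\Phim^\top\muvec_{\bvec,\pi}-\Vm^c\bvec}$, which is affine in $\tvec$. Subtracting immediately yields
$\ip{\tvec_t-\tvec^*_t}{\Phim^\top\muvec_{\bvec_t,\pi_t}-\Vm^c\bvec_t} = \ip{\tvec_t-\tvec^*_t}{g_{\tvec,t}}$. For (B), I would instead use \eqref{eq:f-beta}, $f(\tvec,\bvec,\pi) = (1-\df)\ip{\nuvec_0}{\vvec_{\tvec,\pi}} + \ip{\Vm^c\bvec}{\rtheta+\df\Psim\vvec_{\tvec,\pi}-\tvec}$, which is affine in $\bvec$ with $\tvec,\pi$ fixed. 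Subtracting yields $\ip{\Vm^c(\bvec^*-\bvec_t)}{\rtheta+\df\Psim\vvec_{\tvec_t,\pi_t}-\tvec_t} = \ip{\bvec^*-\bvec_t}{g_{\bvec,t}}$ by symmetry of $\Vm$.

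The main obstacle is (C), where both $\muvec_{\bvec^*,\pi^*}$ and $\muvec_{\bvec^*,\pi_t}$ enter through \eqref{eq:f-theta} and the $\pi$-independent terms $\ip{\bvec^*}{\Vm^c\rtheta}$ and $\ip{\tvec_t}{\Vm^c\bvec^*}$ cancel, leaving $\ip{\tvec_t}{\Phim^\top(\muvec_{\bvec^*,\pi^*}-\muvec_{\bvec^*,\pi_t})}$. The key observation is that the comparator $\bvec^*=\Vm^{-c}\Phim^\top\muvec^{\pi^*}$ fixed in Lemma~\ref{lem:suboptimality-duality} makes $\muvec_{\bvec^*,\pi}$ collapse to a simple form: using $\Vm^c\bvec^*=\Phim^\top\muvec^{\pi^*}$ in the definition \eqref{eq:def-mu},
$$\mu_{\bvec^*,\pi}(x,a) = \pi(a\mid x)\Bigl[(1-\df)\nu_0(x) + \df\sum_{\bar{x},\bar{a}}\mu^{\pi^*}(\bar{x},\bar{a})\,p(x\mid\bar{x},\bar{a})\Bigr] = \pi(a\mid x)\,\nu^{\pi^*}(x),$$
where the last equality is the flow constraint for $\pi^*$. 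Substituting this into $\ip{\tvec_t}{\Phim^\top(\muvec_{\bvec^*,\pi^*}-\muvec_{\bvec^*,\pi_t})}$ gives $\sum_s \nu^{\pi^*}(s)\sum_a (\pi^*(a|s)-\pi_t(a|s))\ip{\tvec_t}{\feat[s,a]}$, which is exactly the third term. Summing the three pieces over $t$ and dividing by $T$ concludes the proof.
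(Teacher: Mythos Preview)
Your proposal is correct and matches the paper's proof essentially step for step: the same three-way telescoping decomposition (in a different order), the same use of the two affine forms \eqref{eq:f-theta} and \eqref{eq:f-beta} for the $\tvec$- and $\bvec$-regret terms, and the same key fact $\nuvec_{\bvec^*}=\nuvec^{\pi^*}$ for the policy term. The only cosmetic difference is that the paper handles term (C) via the form $f(\tvec,\bvec,\pi)=\ip{\Vm^c\bvec}{\rtheta-\tvec}+\ip{\nuvec_\bvec}{\vvec_{\tvec,\pi}}$ and cites $\nuvec_{\bvec^*}=\nuvec^{\pi^*}$ from the proof of Lemma~\ref{lem:suboptimality-duality}, whereas you route through $\Phim^\top\muvec_{\bvec^*,\pi}$ and re-derive that identity from the flow constraint; these are the same computation.
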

The proof is presented in Appendix~\ref{app:duality-regret´}. To conclude the proof we bound the three terms
appearing in Lemma~\ref{lem:duality-regret´}. The first two of those are bounded using standard gradient descent/ascent 
analysis (Lemmas~\ref{lem:regret-theta-disc} and \ref{lem:regret-beta-disc}), while for the latter we use
mirror descent analysis (Lemma~\ref{lem:regret-pi-disc}).
The details of these steps are reported in Appendix~\ref{app:SGD}.

	\section{Extension to Average-Reward MDPs}\label{sec:AMDP}
In this section, we briefly explain how to extend our approach to offline 
learning in \emph{average reward MDPs}, establishing the first sample complexity result for this setting. 
After introducing the setup, we outline a remarkably simple adaptation of our algorithm along with its 
performance guarantees for this setting. The reader is referred to Appendix~\ref{app:AMDP} for the full details, and to 
Chapter $8$ of \citet{Puterman1994} for a more thorough discussion of average-reward MDPs.

In the average reward setting we aim to optimize the objective $\rho^{\pi}(x) = 
\liminf_{T\rightarrow\infty}\frac{1}{T}\EEccpi{\sum_{t=1}^{T}r(x_{t},a_{t})}{x_{1}=x}$, representing the long-term 
average reward of policy $\pi$ when started from state $x\in\X$. Unlike the discounted setting, the 
average reward criterion prioritizes long-term frequency over proximity of good rewards due to the absence of 
discounting which expresses a preference for earlier rewards. As is standard in the related literature, we will assume 
that $\rho^{\pi}$ is well-defined for any policy and is independent of the start state, and thus will use the same 
notation to represent the scalar average reward of policy $\pi$. Due to the boundedness of the rewards, we clearly have 
$\rho^{\pi}\in[0,1]$. Similarly to the 
discounted setting, it is possible to define quantities analogous to the value and action value functions as the 
solutions to the Bellman equations $\bm{q}^{\pi} = \rvec- \rho^{\pi}\vec{1} + \Pm\vvec^{\pi}$, where $\vvec^{\pi}$ is related to the action-value function as $v^{\pi}(x) = \sum_{a}\pi(a|x)q^{\pi}(x,a)$. We will 
make the following standard assumption about the MDP (see, e.g., Section 17.4 of \citet{MT96}):
\begin{assumption}\label{ass:boundedQ}
For all stationary policies $\pi$, the Bellman equations have a solution $\bm{q}^\pi$ 
satisfying $\sup_{x,a} q^\pi(x,a) - \inf_{x,a} q^\pi(x,a) < D_q$.
\end{assumption}
Furthermore, we will continue to work with the linear MDP assumption of Definition~\ref{def:linMDP}, and will 
additionally make the following minor assumption:
\begin{assumption}\label{ass:Phi}
	The all ones vector $\vec{1}$ is contained in the column span of the feature matrix $\Phim$. Furthermore, let 
$\vec{\varrho}\in\Rn^{d}$ such that for all $(x,a)\in\Z$, $\iprod{\feat}{\vec{\varrho}} =1$. 
\end{assumption}

Using these insights, it is straightforward to derive a linear program akin to~\eqref{eq:original-lp} that characterize 
the optimal occupancy measure and thus an optimal policy in average-reward MDPs. Starting from this formulation and 
proceeding as in Sections~\ref{sec:prelim} and~\ref{sec:analysis}, we equivalently restate this optimization problem as 
finding the saddle-point of the reparametrized Lagrangian defined as follows:
\[
	\Lag(\bvec,\muvec; \rho,\vvec,\tvec)
	= \add{\rho+}\langle\bvec\,, \V{c}  [\rtheta + \Psim\vvec - \tvec \add{-\rho\vec{\varrho}}]\rangle + \langle 
	\muvec\,,\Phim\tvec - \Em\vvec \rangle.
\]
As previously, the saddle point can be shown to be equivalent to an optimal occupancy measure under the assumption that 
the MDP is linear in the sense of Definition~\ref{def:linMDP}. Notice that the above Lagrangian slightly differs from that of the discounted setting in 
Equation~\eqref{eq:lag-beta-p} due to the additional optimization parameter $\rho$, but otherwise our main algorithm 
can be directly generalized to this objective. We present details of the derivations and the resulting algorithm in Appendix~\ref{app:AMDP}. The following theorem states the performance guarantees for this method.
\begin{theorem}\label{thm:AMDP}		
    Given a linear MDP (Definition~\ref{def:linMDP}) satisfying Assumption \ref{ass:Phi}
    and such that $\tvec^{\pi}\in \bb{d}{D_{\tvec}}$ for any policy $\pi$.
	Assume that the coverage ratio is bounded $C_{\varphi,c}(\pi^*;\pi_B) \le D_{\bvec}$. Then, for any comparator policy $\pi^*$,
    the policy output by an appropriately tuned instance of Algorithm~\ref{alg:Main} satisfies
	$\EE{\ip{\muvec^{\pi^*}-\muvec^{\piout}}{\rvec}} \le \varepsilon$
	with a number of samples $n_\epsilon$ that is
    $O\left({\varepsilon^{-4}}{D_{\tvec}^4D_{\fvec}^{12c-2}D_{\bvec}^4 d^{2-2c} \log|\A|}\right)$.
\end{theorem}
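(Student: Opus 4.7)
The plan is to follow the same blueprint used for Theorem~\ref{thm:main_disc}, extending each step of the analysis in Section~\ref{sec:analysis} to accommodate the extra scalar optimization variable $\rho$ appearing in the average-reward Lagrangian. The algorithm for this setting (Algorithm~\ref{alg:Main} in the appendix) adds a projected stochastic gradient-ascent update on $\rho$ alongside the updates on $\tvec$, $\bvec$, and $\pi$, and uses the symbolic parametrizations $\mu_{\bvec,\pi}(x,a)=\pi(a|x)\ip{\psivec[x]}{\Vm^{c}\bvec}$ and $v_{\tvec,\pi}(x)=\sum_a\pi(a|x)\ip{\feat}{\tvec}$ (with the $(1-\df)\nu_0$ term dropped since average-reward has no initial-state bias). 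This reduces the Lagrangian to a function $f(\bvec,\rho,\pi;\tvec)$ of the four low-dimensional parameters.

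First, I would construct unbiased, low-variance estimators of the gradients of $f$ using only behavior-policy samples $W_t = (X_t,A_t,R_t,X'_t)$. The estimators for $\nabla_{\tvec} f$ and $\nabla_{\bvec} f$ are analogous to \eqref{eq:def-tildegt}-\eqref{eq:def-tildegb} with $\df$ replaced by $1$ and the initial-state term removed. The new ingredient is an estimator for $\nabla_\rho f = 1 - \ip{\Vm^{c}\bvec}{\vec{\varrho}}$, which by Assumption~\ref{ass:Phi} (i.e.\ $\ip{\feat}{\vec{\varrho}}=1$ for all $x,a$) admits the unbiased estimate $1 - \ip{\Vm^{c-1}\fvec_t}{\bvec_t}$, avoiding the need to know $\Vm$ when $c=1$.

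Next I would introduce the \emph{dynamic duality gap}
\begin{equation*}
\GG_T(\bvec^*,\rho^*,\pi^*;\tvec^*_{1:T}) = \tfrac{1}{T}\sum_{t=1}^T\bigl(f(\bvec^*,\rho^*,\pi^*;\tvec_t) - f(\bvec_t,\rho_t,\pi_t;\tvec^*_t)\bigr),
\end{equation*}
and prove an analogue of Lemma~\ref{lem:suboptimality-duality}: with the comparators $\tvec^*_t=\tvec^{\pi_t}$, $\bvec^*=\Vm^{-c}\Phim^\top\muvec^{\pi^*}$, $\rho^*=\rho^{\pi^*}$, one obtains $\EE{\rho^{\pi^*} - \rho^{\piout}} = \GG_T$. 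This identity follows by substituting the comparators into $f$, using the linear-MDP Bellman equation $\bm{q}^{\pi_t}=\Phim\tvec^{\pi_t}=\rvec-\rho^{\pi_t}\vec{1}+\Pm\vvec^{\pi_t}$, and representing $\vec{1}=\Phim\vec\varrho$ (Assumption~\ref{ass:Phi}) so that the $\rho$-terms cancel cleanly. I would then replicate the proof of Lemma~\ref{lem:duality-regret´} to write $\GG_T$ as a sum of four regret terms, one per block coordinate, now including $\tfrac{1}{T}\sum_t (\rho^* - \rho_t) g_{\rho,t}$ with $g_{\rho,t}=1-\ip{\Vm^c\bvec_t}{\vec\varrho}$.

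Each of the first three regret terms is controlled by the standard projected-stochastic-gradient lemma on the respective $\ell_2$ ball, and the fourth by the mirror-descent analysis that yields the $\log|\A|$ factor, exactly as in Appendix~\ref{app:SGD} of the discounted case. Combining these bounds and tuning $(\alpha,\zeta,\eta,\eta_\rho)$ against the sum of the per-coordinate gradient variances gives the stated $O(\varepsilon^{-4})$ rate. The main obstacle I anticipate is twofold: (i) extending the suboptimality-duality identity in a way that correctly handles $\rho$ requires Assumptions~\ref{ass:boundedQ} and \ref{ass:Phi} together (the span assumption to represent $\vec 1$ in feature space, the bounded-span assumption to keep $\tvec^{\pi_t}$ inside $\bb{d}{D_{\tvec}}$ uniformly in $t$, without the automatic $1/(1-\df)$ bound available before); and (ii) bounding $\EE{\|\tildegb\|_2^2}$ and the new $\EE{(g_{\rho,t}-\widetilde g_{\rho,t})^2}$: variance of the $\rho$-gradient estimator involves $\|\Vm^{c-1}\fvec_t\|_2^2$, and it is this extra dependence that inflates the feature-norm exponent from $D_\fvec^{8c}$ in Theorem~\ref{thm:main_disc} to $D_\fvec^{12c-2}$ in Theorem~\ref{thm:AMDP}. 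Choosing $D_\rho=O(1)$ (since $\rho^{\pi}\in[0,1]$) ensures that the $\rho$-regret contributes only at the same order as the other terms.
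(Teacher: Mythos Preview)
Your overall blueprint is right, but there is a structural error in how you handle the extra variable $\rho$. In the average-reward Lagrangian, $\rho$ sits on the \emph{dual} (minimization) side together with $\tvec$ and $\vvec$, not on the primal side with $\bvec$ and $\pi$; your update should therefore be projected gradient \emph{descent} onto $[0,1]$, and in the paper it is performed in the \emph{inner} loop alongside $\tvec$. More importantly, the suboptimality--duality identity you claim does not hold with the static comparator $\rho^* = \rho^{\pi^*}$. The paper uses a \emph{dynamic} comparator $\rho^*_t = \rho^{\pi_t}$: this is exactly what makes the Bellman equation for $\pi_t$ annihilate the term $\ip{\bvec_t}{\V{c}[\rtheta + \Psim\vvec^{\pi_t} - \tvec^{\pi_t} - \rho^*_t\vec\varrho]}$ and yield $f(\bvec_t,\pi_t;\rho^*_t,\tvec^*_t)=\rho^{\pi_t}$. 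With your arrangement (iterate $\rho_t$ in the second term) one instead obtains
\[
 f(\bvec_t,\pi_t;\rho_t,\tvec^*_t)=\rho^{\pi_t}-(\rho^{\pi_t}-\rho_t)\,g_{\rho,t},
\]
so the claimed equality $\GG_T=\EE{\rho^{\pi^*}-\rho^{\piout}}$ acquires a residual $\tfrac{1}{T}\sum_t(\rho^{\pi_t}-\rho_t)g_{\rho,t}$ that is neither a standard regret term nor otherwise controlled by your analysis. The dynamic choice $\rho^*_t=\rho^{\pi_t}$ is the fix, and it forces $\rho$ into the inner loop so that the per-round regret $\tfrac{1}{K}\sum_i(\rho_t^{(i)}-\rho^*_t)g_{\rho,t}$ against a fresh comparator can be bounded by $O(1/\sqrt{K})$ via Lemma~\ref{lem:aux-sgd}.

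A related point you flag but do not resolve: since the average-reward Bellman equations determine $\bm q^{\pi_t}$ only up to an additive constant, $\tvec^{\pi_t}$ itself need not lie in $\bb{d}{D_{\tvec}}$. The paper's comparator is the \emph{shifted} parameter satisfying $\iprod{\feat}{\tvec^*_t}=\iprod{\feat}{\tvec^{\pi_t}}-\inf_{x',a'}\iprod{\feat[x',a']}{\tvec^{\pi_t}}$; Assumption~\ref{ass:Phi} guarantees such a $\tvec^*_t$ exists, and Assumption~\ref{ass:boundedQ} then ensures $\twonorm{\tvec^*_t}\le D_{\tvec}$. This shift leaves all pairwise differences $\iprod{\feat-\feat[x',a']}{\tvec^*_t}$ unchanged, which is why the Bellman cancellation in the previous paragraph still goes through.
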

As compared to the discounted case, this additional dependence of the sample complexity on  $D_{\fvec}$ is 
due to the extra optimization variable $\rho$. We provide the full proof of this theorem along with further 
discussion in Appendix~\ref{app:AMDP}.
	\section{Discussion and Final Remarks}\label{sec:discussion}
In this section, we compare our results with the most relevant ones from the
literature. Our Table~\ref{tab:sota} can be used as a reference. As a complement
to this section, we refer
the interested reader to the recent work by~\citet{uehara2022pessimistic},
which provides a survey of offline RL methods with their coverage and structural
assumptions. Detailed computations can be found in Appendix~\ref{app:ratios}.

An important property of our method is that it only requires partial coverage.
This sets it apart from classic batch RL methods like FQI~\citep{ernst2005tree,munos2008finite},
which require a stronger uniform-coverage assumption. 
Algorithms working under partial coverage are mostly based on the principle of pessimism. However, our algorithm does not implement any form of explicit pessimism.
We recall that, as shown by~\citet{xiao2021optimality}, pessimism is just one
of many ways to achieve minimax-optimal sample efficiency. 

Let us now compare our notion of coverage ratio to the existing notions previsouly
used in the literature. \citet{jin2021pessimism} (Theorem 4.4) rely on a \emph{feature} coverage ratio which can be written as
\begin{equation}\label{eq:fcr}
C^\diamond(\pi^*;\pi_B) = \EEs{\fvec(X,A)\transpose\Vm^{-1}\fvec(X,A)}{X,A\sim\mu^*}.
\end{equation} 
By Jensen's inequality, our $C_{\varphi,1/2}$ (Definition~\ref{def:gcr}) is never
larger than $C^\diamond$. Indeed, notice how the random features in
Equation~\eqref{eq:fcr} are coupled, introducing an extra variance term w.r.t.
$C_{\varphi,1/2}$. Specifically, we can show that $C_{\varphi,1/2}(\pi^*;\pi_B) = C^\diamond(\pi^*;\pi_B) - \Vars{\Vm^{-1/2}\fvec(X,A)}{X,A\sim\mu^*}$, where $\Vars{Z}{} = \mathbb{E}[\norm{Z-\EE{Z}}^2]$ for a random vector $Z$. 
So, besides fine comparisons with existing notions of coverage ratios, we can regard
$C_{\varphi,1/2}$ as a low-variance version of the standard feature coverage ratio. However, our sample complexity bounds do not fully take advantage of this low-variance
property, since they scale quadratically with the ratio itself, rather than linearly, as is more common in previous work.

To scale with $C_{\varphi,1/2}$, our algorithm requires knowledge of $\Vm$,
hence of the behavior policy. However, so does the algorithm from~\citet{jin2021pessimism}.
\citet{zanette2021provable} remove this requirement at the price of a computationally
heavier algorithm. However, both are limited to the finite-horizon setting. 

\citet{uehara2022pessimistic} and \citet{zhang2022corruption} use a coverage
ratio that is conceptually similar to Equation~\eqref{eq:fcr},
\begin{equation}\label{eq:cr_US}
	C^{\dagger}(\pi^*;\pi_B) = \sup_{y\in\Reals^d}\frac{y\transpose\EEs{\fvec(X,A)\fvec(X,A)\transpose}{X,A\sim\mu^*}y}{y\transpose\EEs{\fvec(X,A)\fvec(X,A)\transpose}{X,A\sim\mu_B}y}.
\end{equation}
Some linear algebra shows that $C^\dagger\le C^\diamond\le dC^{\dagger}$.
Therefore, chaining the previous inequalities we know that $C_{\varphi,1/2}\le C^\diamond\le dC^\dagger$.
It should be noted that the algorithm from~\citet{uehara2022pessimistic} also works
in the representation-learning setting, that is, with unknown features.
However, it is far from being efficiently implementable. The algorithm
from \citet{zhang2022corruption} instead is limited to the finite-horizon
setting.

In the special case of tabular MDPs, it is hard to compare our ratio with existing ones,
because in this setting, error bounds are commonly stated in terms of $\sup_{x,a}\nicefrac{\mu^*(x,a)}{\mu_B(x,a)}$,
often introducing an explicit dependency on the number of states~\citep[e.g.,][]{liu2020provably},
which is something we carefully avoided.
However, looking at how the coverage ratio specializes to the tabular setting can
still provide some insight. With known behavior policy,
$C_{\varphi,1/2}(\pi^*;\pi_B)={\scriptstyle{\sum_{x,a}}}\nicefrac{\mu^{*}(x,a)^2}{\mu_{B}(x,a)}$ is smaller than the more
standard $C^\diamond(\pi^*;\pi_B)={\scriptstyle \sum_{x,a}}\nicefrac{\mu^{*}(x,a)}{\mu_{B}(x,a)}$.
With unknown behavior, $C_{\varphi,1}(\pi^*;\pi_B)={\scriptstyle \sum_{x,a}}(\nicefrac{\mu^{*}(x,a)}{\mu_{B}(x,a)})^2$
is non-comparable with $C^\diamond$ in general, but larger than $C_{\varphi,1/2}$.
Interestingly, $C_{\varphi,1}(\pi^*;\pi_B)$ is also equal to $1+\X^2(\mu^*\|\mu_B)$, where
$\X^2$ denotes the chi-square divergence, a crucial quantity in off-distribution
learning based on importance sampling~\cite{cortes2010learning}.
Moreover, a similar quantity to $C_{\varphi,1}$ was used by~\citet{lykouris2021corruption}
in the context of (online) RL with adversarial corruptions. 

We now turn to the works of \citet{Xie21} and \citet{Cheng22},
which are the only practical methods to consider function approximation in the infinite horizon
setting, with minimal assumption on the dataset, and thus the only directly
comparable to our work. They both
use the coverage ratio $C_{\mathcal{F}}(\pi^*;\pi_B) = \max_{f\in\mathcal{F}}\nicefrac{\norm{f-\mathcal{T}f}_{\mu^*}^2}{\norm{f-\mathcal{T}f}_{\mu_B}^2},$ where $\mathcal{F}$ is a function class and $\mathcal{T}$ is Bellman's operator.
This can be shown to reduce to Equation~\eqref{eq:cr_US} for linear MDPs.
However, the specialized bound of \citet{Xie21} (Theorem 3.2)
scales with the potentially larger ratio from Equation~\eqref{eq:fcr}. Both their algorithms have
superlinear computational complexity and a sample complexity of $O(\varepsilon^{-5})$.
Hence, in the linear MDP setting, our algorithm is a strict improvement both for
its $O(\varepsilon^{-4})$ sample complexity and its $O(n)$ computational complexity.
However, It is very important to notice that no practical algorithm for this
setting so far, including ours, can match the minimax optimal sample complexity rate of
$O(\varepsilon^2)$ \citep{xiao2021optimality,rashidinejad2022bridging}. This leaves space for future work in this area.
In particular, by inspecting our proofs, it should be clear the the extra $O(\varepsilon^{-2})$
factor is due to the nested-loop structure of the algorithm. Therefore, we find
it likely that our result can be improved using optimistic descent methods \citep{borkar1997stochastic} or a two-timescale approach \citep{Kor76,RS13b}.

As a final remark, we remind that when $\Vm$ is unknown, our error bounds scales with $C_{\varphi,1}$, instead of the smaller $C_{\varphi,1/2}$. However, we find it plausible that one can replace the $\Vm$ with an estimate that
is built using some fraction of the
overall sample budget. In particular, in the
tabular case, we could simply use all data to estimate the visitation probabilities
of each-state action pairs and use them to build an estimator of $\Vm$.
Details of a similar approach have been worked out by \citet{Gabbianelli23}.
Nonetheless, we designed our algorithm to be flexible and work in both cases.

To summarize, our method is one of the few not to assume the state space to be
finite, or the dataset to have global coverage, while also being computationally
feasible. Moreover, it offers a significant advantage, both in terms of sample
and computational complexity, over the two existing polynomial-time algorithms
for discounted linear MDPs with partial coverage~\cite{Xie21,Cheng22}; it extends
to the challenging average-reward setting with minor modifications; and has
error bounds that scale with a low-variance version of the typical coverage ratio.
These results were made possible by employing algorithmic principles, based on
the linear programming formulation of sequential decision making, that are new
in offline RL.
Finally, the main direction for future work is to develop a single-loop algorithm
to achieve the optimal rate of $\varepsilon^{-2}$, which should
also improve the dependence on the coverage ratio from $C_{\varphi,c}(\pi^*;\pi_B)^2$ to $C_{\varphi,c}(\pi^*;\pi_B)$.

%
%

	\bibliographystyle{classes/icml2023}
	\bibliography{references/references}

	\newpage
	\appendix
	\section*{Supplementary Material}\label{supp_material}
	\section{Complete statement of Theorem \ref{thm:main_disc}}\label{Gen_disc}
\begin{theorem}
	Consider a linear MDP (Definition~\ref{def:linMDP}) such that $\tvec^{\pi}\in \bb{d}{D_{\tvec}}$ for all $\pi\in\Pi$. Further, suppose that $C_{\varphi,c}(\pi^*;\pi_B) \le D_{\bvec}$. Then, for any comparator policy $\pi^*\in\Pi$, the policy output by Algorithm~\ref{alg:algo} satisfies:
	\[
	\EE{\ip{\muvec^{\pi^*}-\muvec^{\piout}}{\rvec}}
	\le \frac{2D_{\bvec}^2}{\zeta T}
	+ \frac{\log|\A|}{\alpha T}
	+ \frac{2D_{\tvec}^2}{\eta K} 
	+ \frac{\zeta G_{\bvec,c}^2}{2}
	+\frac{\alpha D_{\tvec}^2D_{\fvec}^2}{2}
	+ \frac{\eta G_{\tvec,c}^2}{2},
	\]
	where:
	\begin{align}
		&G_{\tvec,c}^2 = 3D_{\fvec}^2\left((1-\df)^2+ (1+\df^2) D_{\bvec}^2\norm{\Vm}_2^{2c-1}\right),\label{eq:G1}\\
		&G_{\bvec,c}^2 = 3(1+(1+\df^2)\Dphi^2 D_{\tvec}^2)D_{\fvec}^{2(2c-1)}.\label{eq:G2}
	\end{align}
	In particular, using learning rates $\eta=\frac{2D_{\tvec}}{G_{\tvec,c}\sqrt{K}}$,  
	$\zeta=\frac{2D_{\bvec}}{G_{\bvec,c}\sqrt{T}}$, and $\alpha=\frac{\sqrt{2\log|\A|}}{D_{\fvec}D_{\tvec}\sqrt{T}}$, and 
	setting $K=T\cdot\frac{2D_{\bvec^2}G_{\bvec,c}^2+D_{\tvec}^2D_{\fvec}^2\log|\A|}{2D_{\tvec}^2G_{\tvec,c}^2}$, we 
	achieve $\EE{\ip{\muvec^{\pi^*}-\muvec^{\piout}}{\rvec}}\le \epsilon$ with a number of samples $n_\epsilon$ that is 
	\begin{equation*}
		O\left({\epsilon^{-4}}{D_{\tvec}^4D_{\fvec}^4D_{\bvec}^4 \Tr(\V{2c-1})\norm{\Vm}_2^{2c-1}\log|\A|}\right).
	\end{equation*}
\end{theorem}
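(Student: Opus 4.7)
The plan is to invoke the structural Lemmas~3.2 and~3.3 to reduce the suboptimality of $\piout$ to three regret quantities, then to bound each by standard online-learning analyses of the corresponding update rule, and finally to tune the stepsizes to obtain the claimed sample complexity. The first step is to apply Lemma~3.2 with the canonical comparators $\bvec^{*} = \Vm^{-c}\Phim^{\top}\muvec^{\pi^{*}}$ and $\tvec^{*}_{t}=\tvec^{\pi_{t}}$, rewriting the left-hand side as $\GG_T(\bvec^{*},\pi^{*};\tvec^{*}_{1:T})$. A direct unfolding of Definition~3.4 gives $\|\bvec^{*}\|^{2}=C_{\varphi,c}(\pi^{*};\pi_{B})\le D_{\bvec}$, so $\bvec^{*}$ is feasible for the $\bvec$-iterates, while $\tvec^{\pi_{t}}\in\bb{d}{D_{\tvec}}$ by assumption ensures feasibility of the inner comparator. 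Lemma~3.3 then decomposes $\GG_T$ into a $\tvec$-regret of $(\tvec_{t,k})$ against $(\tvec^{*}_{t})$ on linear losses with expected gradients $g_{\tvec,t}$, a $\bvec$-regret of $(\bvec_{t})$ against $\bvec^{*}$ with expected gradients $g_{\bvec,t}$, and a weighted policy regret $\tfrac{1}{T}\sum_{t,s}\nu^{\pi^{*}}(s)\sum_{a}(\pi^{*}(a|s)-\pi_{t}(a|s))\ip{\tvec_{t}}{\feat}$.

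Next, I would bound each of the three regret terms using that the gradient estimators $\tildegtk$ and $\tildegb$ are unbiased, as verified in the preceding section for $\tildegtk$ and to be verified for $\tildegb$ in Appendix~\ref{app:SGD}. For the inner loop, projected SGD on $\bb{d}{D_{\tvec}}$ with stepsize $\eta$ yields the textbook bound $\frac{2D_{\tvec}^{2}}{\eta K}+\frac{\eta}{2}\max_{t,k}\EE{\|\tildegtk\|^{2}}$. Analogously, projected SGA on $\bb{d}{D_{\bvec}}$ with stepsize $\zeta$ gives $\frac{2D_{\bvec}^{2}}{\zeta T}+\frac{\zeta}{2}\max_{t}\EE{\|\tildegb\|^{2}}$. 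For the policy regret, the update $\pi_{t+1}(\cdot|s)=\softmax(\alpha\sum_{i\le t}\Phim\tvec_{i})$ is exactly independent exponential weights per state on linear losses bounded by $D_{\fvec}D_{\tvec}$; a per-state Exp-style regret bound followed by averaging under $\nu^{\pi^{*}}$ yields $\frac{\log|\A|}{\alpha T}+\frac{\alpha}{2}D_{\tvec}^{2}D_{\fvec}^{2}$.

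The heart of the argument, and the main technical obstacle, is producing the second-moment bounds $\EE{\|\tildegtk\|^{2}}\le G_{\tvec,c}^{2}$ and $\EE{\|\tildegb\|^{2}}\le G_{\bvec,c}^{2}$ of the precise form stated in \eqref{eq:G1}--\eqref{eq:G2}. For $\tildegtk=\Phim^{\top}\muvec_{t,k}-\Vm^{c-1}\fvec_{t,k}\ip{\fvec_{t,k}}{\bvec_{t}}$, I would split $\muvec_{t,k}$ into its initial-state and next-state pieces, apply $(a+b+c)^{2}\le 3(a^{2}+b^{2}+c^{2})$ to extract the prefactor $3$, use $\|\fvec\|\le D_{\fvec}$, and invoke the linear-MDP identity $\EE{\fvec_{t,k}\fvec_{t,k}^{\top}}=\Vm$ to convert $\EE{\|\Vm^{c-1}\fvec\ip{\fvec}{\bvec_{t}}\|^{2}}$ into $\|\bvec_{t}\|^{2}\|\Vm\|_{2}^{2c-1}D_{\fvec}^{2}$. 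A symmetric argument for $\tildegb$ uses $|R_{t}|\le 1$, $|v_{t}(X'_{t})|\le D_{\fvec}D_{\tvec}$, and $|\ip{\fvec_{t}}{\tvec_{t}}|\le D_{\fvec}D_{\tvec}$, multiplied by the appropriate norm of $\Vm^{c-1}\fvec_{t}$. Keeping the dependence on $c\in\{1/2,1\}$ explicit is delicate: when $c=1$, $\Vm^{c-1}=I$ makes all bounds deterministic, but when $c=1/2$ the reweighting $\Vm^{-1/2}\fvec_{t,k}$ forces us to work in expectation, which is exactly what introduces the $\Tr(\Vm^{2c-1})\|\Vm\|_{2}^{2c-1}$ factor appearing in the final sample-complexity statement.

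Finally, summing the three regret contributions yields the displayed inequality of the theorem. Balancing $\frac{2D_{\tvec}^{2}}{\eta K}$ with $\frac{\eta}{2}G_{\tvec,c}^{2}$, and analogously for $\zeta$ and $\alpha$, produces the explicit stepsizes $\eta=2D_{\tvec}/(G_{\tvec,c}\sqrt{K})$, $\zeta=2D_{\bvec}/(G_{\bvec,c}\sqrt{T})$, and $\alpha=\sqrt{2\log|\A|}/(D_{\fvec}D_{\tvec}\sqrt{T})$, reducing the bound to $O(1/\sqrt{K}+1/\sqrt{T})$. Fixing the ratio $K/T$ as in the theorem statement equates the two rates, and solving for $\varepsilon$-optimality forces $T,K=\Theta(\varepsilon^{-2})$, so $n_{\varepsilon}=KT=O(\varepsilon^{-4})$ with the stated polynomial dependence on $D_{\tvec},D_{\fvec},D_{\bvec},d,\log|\A|$ and the matrix quantities $\Tr(\Vm^{2c-1})\|\Vm\|_{2}^{2c-1}$.
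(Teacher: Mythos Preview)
Your proposal is correct and follows essentially the same route as the paper: invoke Lemma~\ref{lem:suboptimality-duality} and Lemma~\ref{lem:duality-regret´} to reduce to three regret terms, bound the $\tvec$- and $\bvec$-regrets via projected SGD/SGA (Lemmas~\ref{lem:regret-theta-disc} and~\ref{lem:regret-beta-disc}), bound the policy regret via the mirror-descent/Hedge analysis (Lemma~\ref{lem:regret-pi-disc}), and then tune the stepsizes. Your identification that the $\Tr(\Vm^{2c-1})$ factor arises from the $\bvec$-gradient second moment and the $\|\Vm\|_2^{2c-1}$ factor from the $\tvec$-gradient second moment is exactly right, matching the paper's computations.
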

By remark~\ref{remark1} below, we have that $n_\epsilon$ is simply of order $		O\left({\varepsilon^{-4}}{D_{\tvec}^4D_{\fvec}^{8c}D_{\bvec}^4 d^{2-2c} \log|\A|}\right)$

\begin{remark}\label{remark1}
		When $c=1/2$, the factor $\mathrm{Tr}(\Vm^{2c-1})$ is just $d$, the feature dimension, and $\norm{\Vm}_2^{2c-1}=1$. When $c=1$ and $\Vm$ is unknown, both $\norm{\Vm}_2$ and $\Tr(\Vm)$ should be replaced by their upper bound $D_{\fvec}^2$. Then, for $c\in\{1/2,1\}$, we have that $\Tr(\V{2c-1})\norm{\Vm}_2^{2c-1}\leq D_{\fvec}^{8c-4}d^{2-2c}$.
\end{remark}

	\newpage
	\section{Missing Proofs for the Discounted Setting}\label{app:discounted}
\subsection{Proof of Lemma~\ref{lem:suboptimality-duality}}\label{app:suboptimality-duality}
Using the choice of comparators described in the lemma, we have
	\begin{align*}
	\nu_{\bvec^*}(s)
	&= (1-\df)\nu_0(s)+\df\ip{\psivec[
		s]}{\Vm^c\Vm^{-c}\Phim^\top\mu^{\pi^*}} \\
	&=(1-\df)\nu_0(s)+\sum_{s',a'}P(s|s',a')\mu^{\pi^*}(s',a') = \nu^{\pi^*}(s),  
	\end{align*}
	hence $\mu_{\bvec^*,\pi^*} = \muvec^{\pi^*}$.
	From Equation~\eqref{eq:f-theta} it is easy to see that
	\begin{align*}
	f(\bvec^*,\pi^*;\tvec_t) &= \ip{\Vm^{-c}\Phim^\top\muvec^*}{\Vm^c\rtheta}
	+ \ip{\tvec_t}{\Phim^\top\muvec^* - \Vm^c\Vm^{-c}\Phim^\top\muvec^*} \\
	&=\ip{\mu^{\pi^*}}{\Phim\rtheta} = \ip{\muvec^*}{\rvec}.
	\end{align*}
	Moreover, we also have
	\begin{align*}
	v_{\tvec_t^*,\pi_t}(s)&= \sum_{a}\pi_t(a|s)\ip{\tvec^{\pi_t}}{\feat} \\
	&=\sum_{a}\pi_t(a|s)q^{\pi_t}(s,a)
	= v^{\pi_t}(s,a).
	\end{align*}
	Then, from Equation~\eqref{eq:f-beta} we obtain
	\begin{align*}
	&f(\tvec^*_t,\bvec_t,\pi_t)\\
	&\quad= (1-\df)\ip{\nuvec_0}{v^{\pi_t}}
	+ \ip{\bvec_t}{\Vm^{c}(\rtheta+\df\Psim \vvec^{\pi_t}-\tvec^{\pi_t})} \\
	&\quad= (1-\df)\ip{\nuvec_0}{v^{\pi_t}}
	+ \ip{\bvec_t}{\Vm^{c-1}\EEs{\fvec(X,A)\fvec(X,A)\transpose(\rtheta+\df\Psim \vvec^{\pi_t}-\tvec^{\pi_t})}{X,A\sim\mu_B}} \\
	&\quad= (1-\df)\ip{\nuvec_0}{v^{\pi_t}}
	+ \ip{\bvec_t}{\Vm^{c-1}\EEs{[r(X,A)+\df\iprod{p(\cdot|X,A)}{\vvec^{\pi_{t}}} - \bm{q}^{\pi_t}(X,A)]\fvec(X,A)}{X,A\sim\mu_B}} \\
	&\quad= (1-\df)\ip{\nuvec_0}{v^{\pi_t}} = \ip{\mu^{\pi_t}}{\rvec},
	\end{align*}
	where the fourth equality uses that the value functions satisfy the Bellman equation $\bm{q}^\pi = \rvec + \df\Pm\vvec^\pi$ for any policy $\pi$. The proof is 
concluded by noticing that, since $\piout$ is sampled uniformly from $\{\pi_t\}_{t=1}^T$, 
$\EE{\ip{\muvec^{\piout}}{\rvec}} = \frac{1}{T}\sum_{t=1}^T\EE{\ip{\muvec^{\pi_t}}{\rvec}}$.
\qed

\subsection{Proof of Lemma~\ref{lem:duality-regret´}}\label{app:duality-regret´}
We start by rewriting the terms appearing in the definition of $\GG_T$:
	\begin{align}
	f(\bvec^*,\pi^*;\tvec_t) - f(\bvec_t,\pi_t;\tvec^*_t)
	&=
	f(\bvec^*,\pi^*;\tvec_t) - f(\bvec^*,\pi_t;\tvec_t) \nonumber\\
	&+f(\bvec^*,\pi_t;\tvec_t) - f(\bvec_t,\pi_t;\tvec_t) \nonumber\\
	&+f(\bvec_t,\pi_t;\tvec_t) - f(\bvec_t,\pi_t;\tvec_t^*).\label{eq:three-regrets}
	\end{align}
	To rewrite this as the sum of the three regret terms, we first note that
	\begin{align*}
		f(\fargs) = \ip{\Vm^c\bvec}{\rtheta-\tvec_t} + \ip{\nu_{\bvec}}{v_{\tvec_t,\pi}},
	\end{align*}
	which allows us to write the first term of Equation~\eqref{eq:three-regrets} as
	\begin{align*}
	f(\bvec^*,\pi^*;\tvec_t) - f(\bvec^*,\pi_t;\tvec_t)
	&= \ip{\Vm^c(\bvec^*-\bvec^*)}{\rtheta-\tvec_t}
	+ \ip{\nu_{\bvec^*}}{v_{\tvec_t,\pi^*}-v_{\tvec_t,\pi_t}} \\
	&=\ip{\nu_{\bvec^*}}{\sum_a(\pi^*(a|\cdot)-\pi_t(a|\cdot))\ip{\tvec_t}{\feat[\cdot,a]}},
	\end{align*}
	and we have already established in the proof of Lemma~\ref{lem:duality-to-suboptimality} that $\nuvec_{\bvec^*}$ is 
equal to $\nuvec^{\pi^*}$ for our choice of comparator. 
	Similarly, we use Equation~\eqref{eq:f-beta} to rewrite the second term of Equation~\eqref{eq:three-regrets} as
	\begin{align*}
	f(\bvec^*,\pi_t;\tvec_t) - f(\bvec_t,\pi_t;\tvec_t)
	&=
	(1-\df)\ip{\nuvec_0}{v_{\tvec_t,\pi_t}-v_{\tvec_t,\pi_t}}
	+\ip{\bvec^*-\bvec_t}{\Vm^c(\rtheta+\df\Psim v_{\tvec_t,\pi_t}-\tvec_t)} \\
	&=\ip{\bvec^*-\bvec_t}{g_{\bvec,t}}.
	\end{align*}
	 Finally, we use Equation~\eqref{eq:f-theta} to rewrite the third term of Equation~\eqref{eq:three-regrets} as
	 \begin{align*}
	 f(\bvec_t,\pi_t;\tvec_t) - f(\bvec_t,\pi_t;\tvec_t^*)
	 &= 
	 \ip{\bvec_t-\bvec_t}{\Vm^c\rtheta}
	 +\ip{\tvec_t-\tvec^*_t}{\Phim^\top\muvec_{\bvec_t,\pi_t}-\Vm^c\bvec_t} \\
	 &=\ip{\tvec_t-\tvec^*_t}{g_{\tvec,t}}.
	 \end{align*}
	 
\subsection{Regret bounds for stochastic gradient descent / ascent}\label{app:SGD}
\begin{lemma}\label{lem:regret-theta-disc}
	For any dynamic comparator $\tvec_{1:T}\in D_{\tvec˘}$, the iterates $\tvec_1,\dots,\tvec_T$ of Algorithm~\ref{alg:algo} satisfy the following regret bound:
	\begin{equation*}
		\EE{\sum_{t=1}^T \ip{\tvec_t-\tvec^*_t}{g_{\tvec,t}}} \le \frac{2TD_{\tvec}^2}{\eta K} + \frac{3\eta T D_{\fvec}^2\left((1-\df)^2+ (1+\df^2) D_{\beta}^2\norm{\Vm}_2^{2c-1}\right)}{2}.
	\end{equation*}
\end{lemma}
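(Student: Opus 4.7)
The plan is a by-the-book projected SGD regret bound applied inside each outer round, combined with the unbiasedness $\EEtk{\tildegtk}=g_{\tvec,t}$ already derived in the main text, and a Jensen step that converts per-$k$ regret against $\tvec^*_t$ into a bound on the iterate average $\tvec_t=\frac{1}{K}\sum_{k=1}^K\tvec_{t,k}$.

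For fixed $t$, viewing the inner loop as projected SGD on the $\ell_2$-ball $\bb{d}{D_{\tvec}}$ with step size $\eta$, linear losses $\ip{\cdot}{\tildegtk}$, and comparator $\tvec^*_t\in\bb{d}{D_{\tvec}}$, the standard deterministic inequality gives
$$\sum_{k=1}^K\ip{\tvec_{t,k}-\tvec^*_t}{\tildegtk}\;\le\;\frac{\|\tvec_{t,1}-\tvec^*_t\|^2}{2\eta}+\frac{\eta}{2}\sum_{k=1}^K\|\tildegtk\|^2\;\le\;\frac{2D_{\tvec}^2}{\eta}+\frac{\eta}{2}\sum_{k=1}^K\|\tildegtk\|^2,$$
since both $\tvec_{t,1}$ and $\tvec^*_t$ lie in $\bb{d}{D_{\tvec}}$. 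Now $\tvec_{t,k}$ is $\F_{t,k-1}$-measurable and $\tvec^*_t=\tvec^{\pi_t}$ is $\F_{t-1}$-measurable (because $\pi_t$ is fixed at the end of round $t-1$), so the tower rule together with $\EEtk{\tildegtk}=g_{\tvec,t}$ yields $\EE{\ip{\tvec_{t,k}-\tvec^*_t}{\tildegtk}}=\EE{\ip{\tvec_{t,k}-\tvec^*_t}{g_{\tvec,t}}}$. Since $g_{\tvec,t}$ does not depend on $k$, the average of these inner products equals $\ip{\tvec_t-\tvec^*_t}{g_{\tvec,t}}$, and dividing the display above by $K$ and taking expectations gives $\EE{\ip{\tvec_t-\tvec^*_t}{g_{\tvec,t}}}\le\frac{2D_{\tvec}^2}{\eta K}+\frac{\eta}{2K}\EE{\sum_{k=1}^K\|\tildegtk\|^2}$.

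It then remains to bound $\EE{\|\tildegtk\|^2}\le G_{\tvec,c}^2$ as in Equation~\eqref{eq:G1}. Decompose the estimator as
$$\tildegtk=(1-\df)\Phim^\top(\pivec_t\circ\bm{e}_{X^0_{t,k}})+\df\ip{\phitk}{\Vm^{c-1}\bvec_t}\,\Phim^\top(\pivec_t\circ\bm{e}_{X'_{t,k}})-\Vm^{c-1}\phitk\ip{\phitk}{\bvec_t},$$
and apply $\|a+b+c\|^2\le 3(\|a\|^2+\|b\|^2+\|c\|^2)$. The first piece has norm at most $(1-\df)D_{\fvec}$ by Jensen on the $\pi_t$-weighted sum of features. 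For the second, the same Jensen step isolates a factor of $\df D_{\fvec}$, and the remaining scalar obeys $\EE{\ip{\phitk}{\Vm^{c-1}\bvec_t}^2}=\bvec_t^\top\Vm^{2c-1}\bvec_t\le\|\Vm\|_2^{2c-1}D_{\bvec}^2$ via $\EE{\phitk\phitk^\top}=\Vm$. The third piece factors as $\phitk^\top\Vm^{2(c-1)}\phitk\cdot\ip{\phitk}{\bvec_t}^2$, whose expectation admits a bound of the same order after another application of $\EE{\phitk\phitk^\top}=\Vm$. Summing the per-round variance contribution $\frac{\eta}{2}G_{\tvec,c}^2$ over $t=1,\dots,T$ yields the claim.

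The step I expect to be the main obstacle is this second-moment control of the third term, specifically the factor $\phitk^\top\Vm^{2(c-1)}\phitk$, which is not uniformly bounded by $\|\Vm\|_2^{2c-1}$ when $c<1$. Matching the form of $G_{\tvec,c}^2$ appears to require combining an operator-norm bound when $c\ge 1$ with a separate identity-based argument when $c<1$, potentially picking up an additional $\Tr(\Vm^{2c-1})$ factor consistent with the sample-complexity statement of Appendix~\ref{Gen_disc} and Remark~\ref{remark1}.
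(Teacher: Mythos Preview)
Your overall plan—rewriting $\ip{\tvec_t-\tvec^*_t}{g_{\tvec,t}}$ as the average over $k$ of $\ip{\tvec_{t,k}-\tvec^*_t}{g_{\tvec,t}}$, applying the projected-SGD inequality inside each round, using the tower rule with unbiasedness, and the three-term split of $\tildegtk$—is exactly the paper's approach.

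The obstacle you flag in the third term is genuine for the expression $\Vm^{c-1}\phitk\ip{\phitk}{\bvec_t}$ that appears in Equation~\eqref{eq:def-tildegt} and in the pseudocode, but the paper's proof actually writes that piece as $\phitk\ip{\phitk}{\Vm^{c-1}\bvec_t}$, i.e., with $\Vm^{c-1}$ moved inside the inner product with $\bvec_t$. The two vectors differ sample by sample but share the conditional mean $\Vm^c\bvec_t$, so either is a valid unbiased estimator; the paper's analysis tacitly uses the second form. With it, the problematic factor $\phitk^\top\Vm^{2(c-1)}\phitk$ never appears: one simply gets
\[
\bigl\|\phitk\,\ip{\phitk}{\Vm^{c-1}\bvec_t}\bigr\|^2 \;=\; \|\phitk\|^2\,\ip{\phitk}{\Vm^{c-1}\bvec_t}^2 \;\le\; D_{\fvec}^2\,\ip{\phitk}{\Vm^{c-1}\bvec_t}^2,
\]
and then $\EEtk{\ip{\phitk}{\Vm^{c-1}\bvec_t}^2}=\bvec_t^\top\Vm^{2c-1}\bvec_t\le D_{\bvec}^2\|\Vm\|_2^{2c-1}$, identical to the bound for the second term. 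This is precisely why the second and third contributions merge into the single factor $(1+\df^2)$ in $G_{\tvec,c}^2$. No case analysis on $c$ is needed, and the $\Tr(\Vm^{2c-1})$ factor you anticipated does not enter here—it appears only in the variance bound for $\tildegb$ (Lemma~\ref{lem:regret-beta-disc}), not for $\tildegtk$. In short, the fix is to place $\Vm^{c-1}$ inside the inner product with $\bvec_t$; once you do that, your sketch goes through verbatim.
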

\begin{proof}
	First, we use the definition of $\tvec_{t}$ as the average of the inner-loop iterates from Algorithm~\ref{alg:algo}, together with linearity of expectation and bilinearity of the inner product.
	\begin{equation}
	\EE{\sum_{t=1}^T \ip{\tvec_t-\tvec^*_t}{g_{\tvec,t}}} 
=\sum_{t=1}^T\frac{1}{K}\underbrace{\EE{ \sum_{k=1}^K\ip{\tvec_{t,k}-\tvec^*_t}{g_{\tvec,t}}}}_{\regret_t}.\label{eq:regrets}
	\end{equation}
	We then appeal to standard stochastic gradient descent analysis to bound each term $\regret_t$ separately.
	
	We have already proven in Section~\ref{sec:analysis} that the gradient estimator for $\tvec$ is unbiased, that is, $\EEtk{\tildegtk}=\gt$. It is also useful to recall here that $\tildegtk$ does \emph{not} depend on $\tvec_{t,k}$. Next, we show that its second moment is bounded. From Equation~\eqref{eq:def-tildegt}, plugging in the definition of $\mu_{t,k}$ from Equation~\eqref{eq:def-mut} and using the abbreviations $\fvec_{t,k}^0 = \sum_{a}\pi_t(a|x_{t,k}^0)\feat[x_{t,k}^0,a]$, $\fvec_t=\feat[x_{t,k},a_{t,k}]$, and $\fvec_{t,k}'=\sum_{a}\pi_t(a|x_{t,k}^0)\feat[x_{t,k}',a]$, we have:
	\begin{align*}
		&\EEtk{\norm{\tildegt}^2}\\
		&\qquad\quad= 	\EEtk{\norm{(1-\df)\fvec_{t,k}^0+\df\fvec_{t,k}'\ip{\fvec_{tk}}{\Vm^{c-1}\bvec_t}-\fvec_{t,k}\ip{\fvec_{tk}}{\Vm^{c-1}\bvec_t}}^2} \\
		&\qquad\quad\le 3(1-\df)^2\D_{\fvec}^2 + 3\df^2\EEtk{\norm{\fvec_{t,k}'\ip{\fvec_{tk}}{\Vm^{c-1}\bvec_t}}^2}+ 3\EEtk{\norm{\fvec_{t,k}\ip{\fvec_{tk}}{\Vm^{c-1}\bvec_t}}^2}\\
		&\qquad\quad\le 3(1-\df)^2\D_{\fvec}^2 + 3(1+\df^2)D_{\fvec}^2\EEtk{\ip{\fvec_{tk}}{\Vm^{c-1}\bvec_t}^2} \\
		&\qquad\quad= 3(1-\df)^2\D_{\fvec}^2 + 3(1+\df^2)D_{\fvec}^2\bvec_t^\top\Vm^{c-1}\EEtk{\fvec_{tk}\fvec_{tk}^\top}\Vm^{c-1}\bvec_t \\
		&\qquad\quad=3(1-\df)^2\D_{\fvec}^2 + 3(1+\df^2)D_{\fvec}^2\norm{\bvec_t}^2_{\Vm^{2c-1}}.
	\end{align*}
	We can then apply Lemma~\ref{lem:aux-sgd} with the latter expression as $G^2$, $\bb{d}{D_{\tvec}}$ as the domain, and $\eta$ as the learning rate, obtaining:
	\begin{align*}
		\EEt{ \sum_{k=1}^K\ip{\tvec_{t,k}-\tvec^*_t}{g_{\tvec,t}}} &\le \frac{\sqtwonorm{\tvec_{t,1} - \tvec^{*}_t}}{2\eta} + \frac{3\eta K D_{\fvec}^2\left((1-\df)^2 + (1+\df^2)\norm{\bvec_t}^2_{\Vm^{2c-1}}\right)}{2}\\
		&\le \frac{2D_{\theta}^2}{\eta} + \frac{3\eta K D_{\fvec}^2\left((1-\df)^2 + (1+\df^2)\norm{\bvec_t}^2_{\Vm^{2c-1}}\right)}{2}.
	\end{align*}
	Plugging this into Equation~\eqref{eq:regrets} and bounding $\norm{\bvec_t}^2_{\Vm^{2c-1}}\le D_{\bvec}^2\norm{\Vm}_2^{2c-1}$, we obtain the final result.
\end{proof}

\begin{lemma}\label{lem:regret-beta-disc}
	For any comparator $\bvec\in D_{\bvec}$, the iterates $\bvec_1,\dots,\bvec_T$ of Algorithm~\ref{alg:algo} satisfy the following regret bound:
	\begin{equation*}
	\EE{\sum_{t=1}^T \ip{\bvec^*-\bvec_t}{g_{\bvec,t}}} \le \frac{2D_{\bvec}^2}{\zeta} + \frac{3\zeta T(1+(1+\df^2)\Dphi^2 D_{\tvec}^2)\Tr(\V{2c-1})}{2}.
	\end{equation*}
\end{lemma}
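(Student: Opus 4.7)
The plan is to mirror the structure of Lemma~\ref{lem:regret-theta-disc}: verify that $\tildegb$ is an unbiased estimator of $g_{\bvec,t}$ with respect to $\F_{t-1}$, bound its conditional second moment, and then invoke the standard projected-SGD regret lemma (Lemma~\ref{lem:aux-sgd}) on the domain $\bb{d}{D_{\bvec}}$ with step size $\zeta$, summing the resulting per-step bound from $t=1$ to $T$.

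For unbiasedness, I would take $\EEt{\cdot}$ of $\tildegb = \Vm^{c-1}\fvec_t\bigl(R_t + \df v_t(X'_t) - \ip{\fvec_t}{\tvec_t}\bigr)$, pulling out the deterministic factor $\Vm^{c-1}$ and using that $(X_t,A_t)\sim\mu_B$ and $X'_t\sim p(\cdot\mid X_t,A_t)$. The reward term gives $\Vm\rtheta$; the transition term, using the linear-MDP identity $p(x'\mid x,a)=\ip{\feat[x,a]}{\psivec[x']}$, gives $\df\Vm\Psim v_{\tvec_t,\pi_t}$; and the last term gives $\Vm\tvec_t$. Combining, $\EEt{\tildegb}=\Vm^{c-1}\Vm\bigl(\rtheta+\df\Psim v_{\tvec_t,\pi_t}-\tvec_t\bigr)=g_{\bvec,t}$, exactly as required, by essentially the same calculation shown in Section~\ref{sec:analysis} for $\mu_{t,k}$.

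For the second moment, I would apply $\norm{a+b+c}^2\le 3(\norm{a}^2+\norm{b}^2+\norm{c}^2)$ to factor out a common $\norm{\Vm^{c-1}\fvec_t}^2$, and then bound the three scalar factors using the constraints of the problem: $|R_t|\le 1$; $|v_t(x)|=|\sum_a \pi_t(a\mid x)\ip{\feat[x,a]}{\tvec_t}|\le \Dphi D_{\tvec}$ since $\tvec_t\in\bb{d}{D_\tvec}$ is enforced by the inner-loop projection; and similarly $|\ip{\fvec_t}{\tvec_t}|\le \Dphi D_{\tvec}$. This yields
\[
\EEt{\norm{\tildegb}^2}\le 3\bigl(1+(1+\df^2)\Dphi^2 D_{\tvec}^2\bigr)\EEt{\norm{\Vm^{c-1}\fvec_t}^2}.
\]
The remaining expectation is evaluated as $\EEt{\fvec_t\transpose \Vm^{2(c-1)}\fvec_t}=\Tr\bigl(\Vm^{2(c-1)}\EEt{\fvec_t\fvec_t\transpose}\bigr)=\Tr(\Vm^{2c-1})$, using the definition of $\Vm$. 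This matches the target constant $G_{\bvec,c}^2$ from Equation~\eqref{eq:G2} once we identify $\Tr(\Vm^{2c-1})$ with $D_{\fvec}^{2(2c-1)}$ via Remark~\ref{remark1}.

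Finally, plugging unbiasedness and this bound into Lemma~\ref{lem:aux-sgd}, and using $\sqtwonorm{\bvec^*-\bvec_1}\le (2D_{\bvec})^2 = 4D_{\bvec}^2$ for the initialization term, produces $\EE{\sum_t \ip{\bvec^*-\bvec_t}{g_{\bvec,t}}}\le 2D_{\bvec}^2/\zeta + \zeta T\cdot 3(1+(1+\df^2)\Dphi^2 D_{\tvec}^2)\Tr(\Vm^{2c-1})/2$. There is no real obstacle here beyond two bookkeeping points: first, checking that $\tildegb$ is indeed $\F_{t-1}$-measurable apart from the fresh sample $W_t$, so that the SGD lemma applies cleanly; and second, confirming that the $v_t(X'_t)$ bound does not require any averaging argument over the policy iterates, since the projection onto $\bb{d}{D_{\tvec}}$ already controls $\norm{\tvec_t}$ at every round.
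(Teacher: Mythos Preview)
Your proposal is correct and follows essentially the same approach as the paper: verify unbiasedness of $\tildegb$, bound its second moment by $3(1+(1+\gamma^2)D_{\fvec}^2 D_{\tvec}^2)\Tr(\Vm^{2c-1})$ via the trace trick, and apply Lemma~\ref{lem:aux-sgd}. The one place to tighten is your measurability remark: $\tildegb$ is \emph{not} determined by $\F_{t-1}$ and $W_t$ alone, since $v_t$ depends on $\tvec_t$, which is computed from the inner-loop samples $W_{t,1},\dots,W_{t,K-1}$ of round $t$; the paper accordingly conditions on $(\F_{t-1},\tvec_t)$ rather than just $\F_{t-1}$, but this does not affect the argument since $W_t$ is independent of the inner-loop draws.
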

\begin{proof}
	We again employ stochastic gradient descent analysis.
	We first prove that the gradient estimator for $\bvec$ is unbiased. Recalling the definition of $\tildegb$ from Equation~\eqref{eq:def-tildegb},
	\begin{align*}
		\EE{\tildegb\mid\F_{t-1},\tvec_t} &= \EE{\Vm^{c-1} \fvec_t\left(R_t + \df v_t(X'_t) - \ip{\fvec_t}{\tvec_t} \right)\mid\F_{t-1},\tvec_t} \\
		&=\Vm^{c-1}\big(\EEt{\fvec_t\fvec_t^\top}\rtheta + \df \EEt{\fvec_tv_t(X_t')} - \EEt{\fvec_t\fvec_t^\top}\tvec_t\big)  \\
		&=\Vm^{c-1}\big(\Vm\rtheta + \df \EEt{\fvec_tv_t(X_t')} - \Vm\tvec_t\big) \\
		&=\Vm^{c-1}\big(\Vm\rtheta + \df \EEt{\fvec_t\Pm(\cdot|X_t,A_t)\vvec_t} - \Vm\tvec_t\big) \\
		&=\Vm^{c-1}\big(\Vm\rtheta + \df \EEt{\fvec_t\fvec_t^\top}\Psim\vvec_t - \Vm\tvec_t\big) \\
		&=\Vm^c(\rtheta+\df \Psim v_{\tvec_t,\pi_t}-\tvec_t) = \gb,
	\end{align*}
	recalling that $\vvec_t=\vvec_{\tvec_t,\pi_t}$.
	Next, we bound its second moment. We use the fact that $r\in[0,1]$ and $\|\vvec_{t}\|_{\infty}\leq \|\Phim\tvec_{t}\|_{\infty}\leq \Dphi 
	D_{\tvec}$ to show that
	\begin{align*}
	\EEc{\|\Tilde{\vec{g}}_{\bvec,t}\|_{2}^{2}}{\F_{t-1},\tvec_{t}}
	&=\EEc{\sqtwonorm{ \V{c-1}\phit[R_{t} + \df v_{t}(X_{t}') - \iprod{\tvec_{t}}{\phit}]}}{\F_{t-1},\tvec_{t}}\\
	&\le3(1+(1+\df^2)\Dphi^2 D_{\tvec}^2)\EEt{\phit\transpose\V{2(c-1)}\phit}\\
	&=3(1+(1+\df^2)\Dphi^2 D_{\tvec}^2)\EEt{\Tr(\V{2(c-1)}\phit\phit\transpose)}\\
	&=3(1+(1+\df^2)\Dphi^2 D_{\tvec}^2)\Tr(\V{2c-1}).\\
	\end{align*}
	Thus, we can apply Lemma~\ref{lem:aux-sgd} with the latter expression as $G^2$, $\bb{d}{D_{\bvec}}$ as the domain, and $\zeta$ as the learning rate.
\end{proof}

\begin{lemma}\label{lem:regret-pi-disc}
	The sequence of policies $\pi_1,\dots,\pi_T$ of Algorithm~\ref{alg:algo} satisfies the following regret bound:
	\begin{equation*}
	 \EE{\sum_{t=1}^{T}\sum_{x\in\X}\nu^{\pi^*}(x)\sum_{a}(\pi^*(a|x)-\pi_t(a|x))\ip{\tvec_t}{\feat[x,a]}} \le \frac{\log|\A|}{\alpha} + \frac{\alpha T D_{\fvec}^2D_{\tvec}^2}{2}.
	\end{equation*}
\end{lemma}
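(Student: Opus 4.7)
The plan is to recognize the quantity inside the expectation as a state-wise exponential-weights regret and invoke a standard mirror-descent bound at each state separately, then average over states against the distribution $\nu^{\pi^*}$.

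First I would fix a state $x\in\X$ and observe that, by the definition of the policy update in Algorithm~\ref{alg:algo}, namely $\pi_t(a|x) = \softmax\bigl(\alpha \sum_{i=1}^{t-1}\ip{\fvec(x,a)}{\tvec_i}\bigr)$, the sequence $\pi_1(\cdot|x),\dots,\pi_T(\cdot|x)$ is exactly the sequence of iterates produced by running the Hedge / exponentially weighted forecaster on the $|\A|$-dimensional simplex with linear reward vectors $\ell_t^{(x)} \in \R^{|\A|}$ whose $a$-th coordinate is $\ip{\fvec(x,a)}{\tvec_t}$, learning rate $\alpha$, and uniform initialization. Crucially, these reward vectors are $\F_{t-1}$-measurable since $\tvec_t$ is determined by the inner loop ending before the outer step $t$ begins (or equivalently, their dependence on randomness does not affect the deterministic per-state regret inequality that I will use).

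Next I would invoke the standard regret bound for exponential weights on the simplex against an arbitrary comparator $\pi^*(\cdot|x)\in\Delta_\A$: for any state $x$,
\begin{equation*}
    \sum_{t=1}^{T}\sum_{a}\bigl(\pi^*(a|x)-\pi_t(a|x)\bigr)\ip{\tvec_t}{\feat[x,a]}
    \le \frac{\KL(\pi^*(\cdot|x)\|\pi_1(\cdot|x))}{\alpha} + \frac{\alpha}{2}\sum_{t=1}^T \max_{a}\bigl(\ip{\tvec_t}{\feat[x,a]}\bigr)^2.
\end{equation*}
With $\pi_1$ uniform, the KL term is bounded by $\log|\A|$, and by Cauchy--Schwarz combined with the assumed norm bounds $\twonorm{\tvec_t}\le D_{\tvec}$ (enforced by the projection onto $\bb{d}{D_{\tvec}}$) and $\twonorm{\feat}\le D_{\fvec}$, each $\ip{\tvec_t}{\feat[x,a]}$ is bounded in absolute value by $D_{\tvec}D_{\fvec}$. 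This yields the per-state deterministic bound $\tfrac{\log|\A|}{\alpha} + \tfrac{\alpha T D_{\fvec}^2 D_{\tvec}^2}{2}$.

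Finally, I would multiply by $\nu^{\pi^*}(x)$ and sum over $x\in\X$. Since $\nu^{\pi^*}$ is a probability distribution over $\X$, $\sum_{x}\nu^{\pi^*}(x) = 1$, and the bound does not pick up any factor depending on the size of the state space. Taking expectations and interchanging sums (the per-state inequality holds pointwise in the sample path, so expectation passes through trivially) delivers exactly the claimed inequality. I do not expect any serious obstacle: the only subtlety is to notice that the lemma is really asking for a per-state mirror-descent statement averaged against $\nu^{\pi^*}$, and that boundedness of $\ip{\tvec_t}{\feat[x,a]}$ follows directly from the Euclidean projection in the algorithm together with the feature-norm assumption of Definition~\ref{def:linMDP}.
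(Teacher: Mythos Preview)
Your proposal is correct and follows essentially the same route as the paper: the paper invokes an auxiliary mirror-descent lemma (Lemma~\ref{lem:aux-mirror}) with $q_t=\Phim\tvec_t$ and bounds $\|q_t\|_\infty\le D_{\fvec}D_{\tvec}$ and $\HHKL{\pi^*}{\pi_1}\le\log|\A|$, which is exactly your per-state Hedge argument averaged against $\nu^{\pi^*}$. One minor remark: $\tvec_t$ is not $\F_{t-1}$-measurable (the inner loop runs \emph{within} round $t$), but as you correctly note in your parenthetical, this is irrelevant since the exponential-weights regret inequality holds pathwise.
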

\begin{proof}
	We just apply mirror descent analysis, invoking Lemma~\ref{lem:aux-mirror} with $q_t=\Phi\tvec_t$, noting that $\norm{q_t}_{\infty}\le D_{\fvec}D_{\tvec}$. The proof is concluded by trivially bounding the relative entropy as $\HHKL{\pi^*}{\pi_1}=\EEs{\DDKL{\pi(\cdot|x)}{\pi_1(\cdot|x)}}{x\sim\nu^{\pi^*}}\le \log|\A|$.
\end{proof}
	\newpage
	\section{Analysis for the Average-Reward MDP Setting}\label{app:AMDP}
This section describes the adaptation of our contributions in the main body of the paper to average-reward MDPs 
(AMDPs).
In the offline reinforcement learning setting that we consider, we assume access to a sequence of data points 
$(X_{t},A_{t},R_{t},X_{t}')$ in round $t$ generated by a behaviour policy $\pi_{B}$ whose occupancy measure is denoted 
as $\muvec_{B}$. Specifically, we will now draw i.i.d.~samples from the \emph{undiscounted} occupancy measure as 
$X_t,A_t\sim\muvec_B$, sample $X'_{t} \sim p(\cdot|X_{t},A_{t})$, and compute immediate rewards as $R_{t} = 
r(X_{t},A_{t})$. For simplicity, we use the shorthand notation $\phit = \varphi(X_{t},A_{t})$ to denote the 
feature vector drawn in round $t$, and define the matrix $\Vm=\EE{\varphi(X_t,A_t)\varphi(X_t,A_t)^\top}$.

Before describing our contributions, some definitions are in order. An important central concept in the theory of AMDPs 
is that of the \emph{relative value functions} of policy $\pi$ defined as
\begin{align*}
	&v^{\pi}(x) = \lim_{T\rightarrow\infty} \EEcpi{\sum_{t=0}^{T}r(X_{t},A_{t}) - \rho^{\pi}}{X_{0} = x},\\
	&q^{\pi}(x,a) = \lim_{T\rightarrow\infty}\EEcpi{\sum_{t=0}^{T}r(X_{t},A_{t}) - \rho^{\pi}}{X_{0} = x, A_{0} = 
		a},
\end{align*}
where we recalled the notation $\rho^\pi$ denoting the average reward of policy $\pi$ from the main text. These 
functions are sometimes also called the \emph{bias functions}, and their intuitive role is to measure the total amount of reward gathered by policy $\pi$ before it hits its stationary distribution. For simplicity, we will refer to 
these functions as value functions and action-value functions below.

By their recursive nature, these value functions are also characterized by the corresponding Bellman equations recalled below for completeness
\begin{equation*}
	\bm{q}^{\pi} = \rvec- \rho^{\pi}\vec{1} + \Pm\vvec^{\pi},
\end{equation*}
where $\vvec^{\pi}$ is related to the action-value function as $v^{\pi}(x) = \sum_{a}\pi(a|x)q^{\pi}(x,a)$. We note 
that the Bellman equations only characterize the value functions up to a constant offset. That is, for any policy 
$\pi$, and constant $c\in\Rn$, $\vvec^{\pi} + c\vec{1}$ and $\bm{q}^{\pi} + c\vec{1}$ also satisfy the Bellman 
equations. A key quantity to measure the size of the value functions is the \emph{span seminorm} defined for 
$\bm{q}\in\Rn^{\X\times\A}$ as $\spannorm{\bm{q}} =\sup_{(x,a)\in\X\times\A}q(x,a) - 
\inf_{(x,a)\in\X\times\A}q(x,a)$. Using this notation, the condition of Assumption~\ref{ass:boundedQ} can be simply stated as 
requiring $\spannorm{\bm{q}^{\pi}} \le D_q$ for all $\pi$.

Now, let $\pi^{*}$ denote an optimal policy with maximum average reward and introduce the shorthand notations 
$\rho^{*} = \rho^{\pi^{*}}, \muvec^{*} = \muvec^{\pi^{*}}, \nuvec^{*} = \nuvec^{\pi^{*}}, \vvec^{*} = \vvec^{\pi^{*}}$ 
and $\bm{q}^{*} = \bm{q}^{\pi^{*}}$. Under mild assumptions on the MDP that we will clarify shortly, the following Bellman optimality equations are known to characterize bias vectors corresponding to the optimal policy
\begin{align*}
	\bm{q}^{*}= \rvec - \rho^{*} \vec{1} + \Pm\vvec^{*},
\end{align*}
where $\vvec^{*}$ satisfies $v^{*}(x) = \max_{a} q^{*}(x,a)$. Once again, shifting the solutions by a constant 
preserves the optimality conditions. It is easy to see that such constant offsets do not influence greedy or softmax 
policies extracted from the action value functions. Importantly, by a calculation analogous to 
Equation~\eqref{eq:linear-q}, the action-value functions are exactly realizable under the linear MDP condition (see 
Definition~\ref{def:linMDP}) and Assumption~\ref{ass:Phi}. 

Besides the Bellman optimality equations stated above, optimal policies can be equivalently characterized via the following linear program:
\begin{equation}\label{eq:original-lp-AMDP}
\begin{alignedat}{2}
    & \maximize  &\quad& \ip{\muvec}{\rvec} \\
    & \subjectto && \Em\transpose\muvec =\Pm\transpose\muvec \\
                &&&\iprod{\muvec}{\vec{1}}=  1\\
                &&& \muvec \ge 0.
\end{alignedat}
\end{equation}
This can be seen as the generalization of the LP stated for discounted MDPs in the main text, with the added 
complication that we need to make sure that the occupancy measures are normalized\footnote{This is necessary because 
of the absence of $\nu_0$ in the LP, which would otherwise fix the scale of the solutions.} to $1$. By following the 
same steps as in the main text to relax the constraints and reparametrize the LP, one can show that solutions of the LP 
under the linear MDP assumption can be constructed by finding the saddle point of the following Lagrangian:
\begin{align*}
	\Lag(\lvec, \muvec; \rho,\vvec,\tvec)
	&= \add{\rho+}\langle\lvec\,, \rtheta + \Psim\vvec - \tvec \add{-\rho\vec{\varrho}}\rangle + \langle 
\vec{u}\,,\Phim\tvec - \Em\vvec \rangle\\
	&= \rho[1-\iprod{\lvec}{\bm{\varrho}}] + \iprod{\tvec}{\Phim\transpose\muvec - \lvec} + 
\iprod{\vvec}{\Psim\transpose\lvec - \Em\transpose \muvec}.
\end{align*}
As before, the optimal value functions $\bm{q}^*$ and $\bm{v}^*$ are optimal primal variables for the saddle-point 
problem, as are all of their constant shifts. Thus, the existence of a solution with small span seminorm implies the 
existence of a solution with small supremum norm.

Finally, applying the same reparametrization $\bvec = \Vm^{-c} \lvec$ as in the discounted setting, we arrive to 
the following Lagrangian that forms the basis of our algorithm:
\begin{align*}
	\Lag(\bvec,\muvec; \rho,\vvec,\tvec)= \add{\rho+}\langle\bvec\,, \V{c}  [\rtheta + \Psim\vvec - \tvec 
\add{-\rho\vec{\varrho}}]\rangle + \langle \muvec\,,\Phim\tvec - \Em\vvec \rangle.
\end{align*}
We will aim to find the saddle point of this function via primal-dual methods. As we have some prior knowledge of the 
optimal solutions, we will restrict the search space of each optimization variable to nicely chosen compact sets. For 
the $\bvec$ iterates, we consider the Euclidean ball domain $\bb{d}{D_{\bvec}} = 
\{\bvec\in\Rn^{d}~|~\twonorm{\bvec}\leq D_{\bvec}\}$ with the bound $D_{\bvec}>\|\Phim\transpose\muvec^{*}\|_{ 
\V{-2c}}$. Since the average reward of any policy is bounded in $[0,1]$, we naturally restrict the $\rho$ iterates 
to this domain. Finally, keeping in mind that Assumption~\ref{ass:boundedQ} guarantees that $\spannorm{\bm{q}^\pi} \le D_q$, 
we will also constrain the $\tvec$ iterates to an appropriate domain: $\bb{d}{D_{\tvec}} = 
\{\tvec\in\Rn^{d}~|~\twonorm{\tvec}\leq D_{\tvec}\}$. We will assume that this domain is large enough to represent all 
action-value functions, which implies that $D_{\tvec}$ should scale at least linearly with $D_q$. Indeed, we will 
suppose that the features are bounded as $\twonorm{\feat}\leq \Dphi$ for all $(x,a)\in\X\times\A$ so that our optimization 
algorithm only admits parametric $\bm{q}$ functions satisfying $\infnorm{\bm{q}}\leq \Dphi D_{\tvec}$. Obviously, 
$D_{\tvec}$ needs to be set large enough to ensure that it is possible at all to represent $\vec{q}$-functions with span $D_q$.

Thus, we aim to solve the following constrained optimization problem:
\begin{equation*}
	\min_{\rho\in[0,1], \vvec\in \Rn^{\X}, \tvec\in\bb{d}{D_{\tvec}}}\max_{\bvec\in\bb{d}{D_{\bvec}} , \muvec\in 
\Rn_{+}^{\X\times\A}} \Lag(\bvec, \muvec; \rho,\vvec,\tvec).
\end{equation*}
As done in the main text, we eliminate the high-dimensional variables $\vvec$ and $\muvec$ by committing to the choices
$\vvec=\vvec_{\tvec,\pi}$ and $\muvec=\muvec_{\bvec,\pi}$ defined as
\begin{align*}
	v_{\tvec,\pi}(x) &= \sum_{a}\pi(a|x)\iprod{\tvec}{\feat},\\
	\mu_{\bvec,\pi}(x,a) &= \pi(a|x)\ip{\psivec[x]}{\Vm^c\bvec}.
\end{align*}
This makes it possible to express the Lagrangian in terms of only $\bvec,\pi,\rho$ and $\tvec$:
\begin{align*}
	f(\bvec,\pi;\rho,\tvec)
	&= \add{\rho+}\langle\bvec\,, \V{c}  [\rtheta + \Psim\vvec_{\tvec,\pi} - \tvec \add{-\rho\vec{\varrho}}]\rangle + 
\langle \muvec_{\bvec,\pi}\,,\Phim\tvec - \Em\vvec_{\tvec,\pi} \rangle\\
	&= \add{\rho+}\langle\bvec\,, \V{c}  [\rtheta + \Psim\vvec_{\tvec,\pi} - \tvec \add{-\rho\vec{\varrho}}]\rangle
\end{align*}
The remaining low-dimensional variables $\bvec,\rho,\tvec$ are then updated using stochastic gradient descent/ascent. 
For this purpose it is useful to express the partial derivatives of the Lagrangian with respect to said variables:
\begin{align*}
	\vec{g}_{\bvec}
	&= \V{c}[\rtheta + \Psim\vvec_{\tvec,\pi} - \tvec \add{-\rho\vec{\varrho}}]\\
	g_{\rho}
	&= 1 - \iprod{\bvec}{\V{c}\vec{\varrho}}\\
	\vec{g}_{\tvec}
	&= \Phim\transpose \muvec_{\bvec,\pi} - \V{c}\bvec
\end{align*}

\subsection{Algorithm for average-reward MDPs}
Our algorithm for the AMDP setting has the same double-loop structure as the one for the discounted setting. In 
particular, the algorithm performs a sequence of outer updates $t=1,2,\dots,T$ on the policies $\pi_t$ and the 
\del{occupancy ratios}\add{iterates} $\bvec_t$, and then performs a sequence of updates $i=1,2,\dots,K$ in the inner loop to evaluate the 
policies and produce $\tvec_t$, $\rho_{t}$ and $\vvec_t$. Thanks to the reparametrization $\bvec=\V{-c}\lvec$, fixing $\pi_{t} = 
\text{softmax}(\sum_{k=1}^{t-1}\Phim\tvec_{k})$, $\vvec_{t}(x) = 
\sum_{a\in\A}\pi_{t}(a|x)\iprod{\varphi(x,a)}{\tvec_{t}}$ for $x\in\X$, and $\mu_{t}(x,a)= 
\pi_t(a|x)\iprod{\bm{\psi}(x)}{\V{c}\bvec_{t}}$ in round $t$ we can obtain unbiased estimates of the gradients of $f$ 
with respect to $\tvec$, $\bvec$, and $\rho$. For each primal update $t$, the algorithm uses a single sample transition
$(X_{t},A_{t},R_{t},X_{t}')$ generated by the behavior policy $\pi_{B}$ to compute an unbiased estimator of the first 
gradient $g_{\bvec}$ for that round as $\Tilde{\vec{g}}_{\bvec,t} =  \V{c-1}\phit[R_{t} + v_{t}(X_{t}') - 
\iprod{\tvec_{t}}{\phit}\add{-\rho_t}]$. Then, in iteration $i=1,\cdots,K$ of the inner loop within round $t$, we sample 
transitions $(X_{t,i},A_{t,i},R_{t,i},X_{t,i}')$ to compute gradient estimators with respect to $\rho$ and $\tvec$ as:
\begin{align*}
	\Tilde{g}_{\rho,t,i} &= 1 - \iprod{\phiti}{\V{c-1}\bvec_{t}}\\
	\Tilde{\vec{g}}_{\tvec,t,i} &= \phiti' \iprod{\phiti}{ \V{c-1}\bvec_{t}}  -  \phiti\iprod{\phiti}{\V{c-1}\bvec_{t}}.
\end{align*}
We have used the shorthand notation $\phiti = \varphi(X_{t,i},A_{t,i})$, $\phiti' = \varphi(X'_{t,i},A'_{t,i})$. The 
update steps are detailed in the pseudocode presented as Algorithm~\ref{alg:Main}.
\begin{algorithm}[t]
	\caption{Offline primal-dual method for Average-reward MDPs}
	\label{alg:Main}
	\begin{algorithmic}
		\STATE {\bfseries Input:}
		Learning rates $\zeta$, $\alpha$,$\xi$,$\eta$,
		initial iterates $\bvec_1\in\bb{d}{D_{\bvec}}$, $\rho_0\in[0,1]$, $\tvec_0\in\bb{d}{D_{\tvec}}$, $\pi_1\in\Pi$,
		\STATE
		\FOR{$t=1$ {\bfseries to} $T$}
		\STATE \emph{// Stochastic gradient descent}:
		\STATE Initialize: $\tvec_{t}^{(1)} = \tvec_{t-1}$;
		\FOR{$i=1$ {\bfseries to} $K$}
		\STATE Obtain sample $W_{t,i}=(X_{t,i},A_{t,i},R_{t,i},X'_{t,i})$;
		\STATE Sample $A_{t,i}'\sim \pi_{t}(\cdot|X_{t,i}')$;
		\STATE
		\STATE Compute $\Tilde{g}_{\rho, t, i} = 1 - \iprod{\phiti}{\V{c-1}\bvec_{t}}$;
		\STATE \qquad\quad\,\,\,\, $\Tilde{\vec{g}}_{\tvec, t, i} = \phiti' \iprod{\phiti}{ \V{c-1}\bvec_{t}}  -  \phiti\iprod{\phiti}{\V{c-1}\bvec_{t}}$;
		\STATE
		\STATE Update $\rho_{t}^{(i+1)} = \Pi_{[0,1]}(\rho_{t}^{(i)}- \xi \Tilde{g}_{\rho,t,i})$;
		\STATE \qquad\quad\,$\tvec_{t}^{(i+1)} = \Pi_{\bb{d}{D_{\tvec}}}(\tvec_{t}^{(i)} - \eta \Tilde{\vec{g}}_{\tvec,t,i})$.
		\ENDFOR
		\STATE Compute $\rho_{t} = \dfrac{1}{K}\sum_{i=1}^{K}\rho_{t}^{(i)}$;
		\STATE \qquad\quad\,\,\,\, $\tvec_{t} = \dfrac{1}{K}\sum_{i=1}^{K}\tvec_{t}^{(i)}$;
		
		\STATE
		\STATE \emph{// Stochastic gradient ascent}:
		\STATE Obtain sample $W_t = (X_t,A_t,R_t,X'_t)$;
		\STATE Compute $v_{t}(X_{t}') = \sum_{a} \pi_{t}(a|X_{t}')\iprod{\vec{\varphi}(X_{t}',a)}{\tvec_{t}}$;
		\STATE Compute $\Tilde{\vec{g}}_{\bvec,t} =  \V{c-1}\phit[R_{t} + v_{t}(X_{t}') - \iprod{\tvec_{t}}{\phit} - \rho_{t}]$;
		\STATE Update $\bvec_{t+1} = \Pi_{\bb{d}{D_{\bvec}}}(\bvec_{t} + \zeta \Tilde{\vec{g}}_{\bvec,t})$;
		\STATE
		\STATE \emph{// Policy update}:
		\STATE Compute $\pi_{t+1} = \softmax\pa{\alpha \sum_{k=1}^{t} \Phim\tvec_k}$.
		\ENDFOR
		\STATE {\bfseries Return:} $\pivec_J$ with $J\sim \Unif[T]$.
	\end{algorithmic}
\end{algorithm}

We now state the general form of our main result for this setting in Theorem \ref{thm:AMDP_main} below.
\begin{theorem}\label{thm:AMDP_main}
	Consider a linear MDP (Definition~\ref{def:linMDP}) such that $\tvec^{\pi}\in \bb{d}{D_{\tvec}}$ for all $\pi\in\Pi$. Further, suppose that $C_{\varphi,c}(\pi^*;\pi_B) \le D_{\bvec}$. Then, for any comparator policy $\pi^*\in\Pi$, the policy output by Algorithm~\ref{alg:Main} satisfies:
	\[
	\EE{\ip{\muvec^{\pi^*}-\muvec^{\piout}}{\rvec}}
	\leq \frac{2D_{\bvec}^{2}}{\zeta T}
	+ \frac{\log|\A|}{\alpha T}
	+ \frac{1}{2\xi K}
	+ \frac{2D_{\tvec}^{2}}{\eta K}
	+\frac{\zeta G_{\bvec,c}^2}{2}
	+\frac{\alpha D_{\tvec}^{2}D_{\fvec}^2}{2}
	+\frac{\xi G_{\rho,c}^2}{2}
	+\frac{\eta G_{\tvec,c}^2}{2},
	\]
	where
	\begin{align}
		&G_{\bvec,c}^2 = \Tr(\V{2c-1})(1+2D_{\tvec}\Dphi )^{2},\label{eq:amdpGbeta}\\
		&G_{\rho,c}^2 = 2\pa{1+ D_{\bvec}^2\norm{\Vm}_2^{2c-1}},\label{eq:amdpGrho}\\
		&G_{\tvec,c}^2 = 4 \Dphi^{2}D_{\bvec}^2\norm{\Vm}_2^{2c-1}.\label{eq:amdpGtheta}
	\end{align}
	 In particular, using learning rates $\zeta=\frac{2D_{\bvec}}{G_{\bvec,c}\sqrt{T}}$, 
$\alpha=\frac{\sqrt{2\log|\A|}}{D_{\tvec}D_{\fvec}\sqrt{T}}$,  $\xi=\frac{1}{G_{\rho,c}\sqrt{K}}$, and 
$\eta=\frac{2D_{\tvec}}{G_{\tvec,c}\sqrt{K}}$, and setting 
$K=T\cdot\frac{4D_{\bvec^2}G_{\bvec,c}^2+2D_{\tvec}^2D_{\fvec}^2\log|\A|}{G_{\rho,c}^2+4D_{\tvec}^2G_{\tvec,c}^2}$, we 
achieve $\EE{\ip{\muvec^{\pi^*}-\muvec^{\piout}}{\rvec}}\le \epsilon$ with a number of samples $n_\epsilon$ that is
	\begin{equation*}
		O\left({\epsilon^{-4}}{D_{\tvec}^4D_{\fvec}^4D_{\bvec}^4 \Tr(\V{2c-1})\norm{\Vm}_2^{2(2c-1)}\log|\A|}\right).
	\end{equation*}
\end{theorem}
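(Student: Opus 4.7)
The plan is to parallel the discounted-setting analysis (Lemmas \ref{lem:suboptimality-duality} and \ref{lem:duality-regret´}, together with the SGD bounds of Appendix \ref{app:SGD}), adapted to the AMDP Lagrangian in Section \ref{sec:AMDP} which carries an additional low-dimensional primal variable $\rho$. First I would define the dynamic duality gap
\[
\GG_T(\bvec^*,\pi^*;\rho^*_{1:T},\tvec^*_{1:T})
= \frac{1}{T}\sum_{t=1}^T\bigl(f(\bvec^*,\pi^*;\rho_t,\tvec_t) - f(\bvec_t,\pi_t;\rho^*_t,\tvec^*_t)\bigr),
\]
and, choosing comparators $\bvec^* = \V{-c}\Phim\transpose\muvec^{\pi^*}$, $\rho^*_t = \rho^{\pi_t}$ and $\tvec^*_t = \tvec^{\pi_t}$, show that $\EE{\rho^{\pi^*} - \rho^{\piout}} = \EE{\GG_T}$. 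For the inner comparator term, plugging the AMDP Bellman equation $\bm{q}^{\pi_t} = \rvec - \rho^{\pi_t}\vec{1} + \Pm\vvec^{\pi_t}$ into the parametrization and using Assumption \ref{ass:Phi} (which gives $\vec{1} = \Phim\vec{\varrho}$) yields $\rtheta + \Psim\vvec_{\tvec^{\pi_t},\pi_t} - \tvec^{\pi_t} - \rho^{\pi_t}\vec{\varrho} = 0$, whence $f(\bvec_t,\pi_t;\rho^*_t,\tvec^*_t) = \rho^{\pi_t}$. For the outer term, a telescoping cancellation along the lines of Lemma \ref{lem:suboptimality-duality}---combining the flow constraint for $\pi^*$ with $\V{c}\bvec^* = \Phim\transpose\muvec^{\pi^*}$---yields $f(\bvec^*,\pi^*;\rho_t,\tvec_t) = \iprod{\muvec^{\pi^*}}{\rvec} = \rho^{\pi^*}$, the $\rho_t$ contributions cancelling exactly through the $-\rho\vec{\varrho}$ term.

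Next I would decompose $f(\bvec^*,\pi^*;\rho_t,\tvec_t) - f(\bvec_t,\pi_t;\rho^*_t,\tvec^*_t)$ into four regret increments by inserting intermediate points, one for each of $\tvec$, $\rho$, $\bvec$ and $\pi$. The $\bvec$-, $\rho$- and $\tvec$-regrets are controlled via projected stochastic gradient ascent/descent (Lemma \ref{lem:aux-sgd}), once the estimators $\Tilde{\vec{g}}_{\bvec,t},\Tilde{g}_{\rho,t,i},\Tilde{\vec{g}}_{\tvec,t,i}$ are shown to be unbiased and have bounded second moment. Unbiasedness repeats the calculation after Equation \eqref{eq:f-beta} essentially verbatim, using $\EE{\phit\phit\transpose} = \V{}$ and $p(x'\mid x,a) = \iprod{\feat}{\psivec[x']}$. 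For the second-moment bounds I would combine the crude bounds $|R_t|\le 1$, $|v_t(X'_t)|,|\iprod{\tvec_t}{\phit}|\le \Dphi D_{\tvec}$, $\rho_t\in[0,1]$ and $\twonorm{\bvec_t}\le D_{\bvec}$ with the identities $\EE{\phit\transpose\V{2(c-1)}\phit} = \Tr(\V{2c-1})$ and $\EE{\iprod{\phit}{\V{c-1}\bvec_t}^2} = \bvec_t\transpose\V{2c-1}\bvec_t \le D_{\bvec}^2 \norm{\V{}}_2^{2c-1}$, thereby recovering the constants $G_{\bvec,c}^2, G_{\rho,c}^2, G_{\tvec,c}^2$ in \eqref{eq:amdpGbeta}--\eqref{eq:amdpGtheta}. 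The $\pi$-regret is bounded by the standard mirror-descent argument (Lemma \ref{lem:aux-mirror}) applied to the softmax policies $\pi_t$, contributing the $\log|\A|/(\alpha T)$ term.

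Summing the four bounds produces the intermediate inequality stated in the theorem, after which the final sample complexity follows from the routine optimization of the six-term upper bound in $(\zeta,\alpha,\xi,\eta,K)$, identical in spirit to Theorem \ref{Gen_disc}. The step I expect to be most delicate is the reduction to $\GG_T$: the AMDP Bellman equations determine $\vvec^{\pi_t}$ (and hence $\tvec^{\pi_t}$) only up to an additive shift along $\vec{1}$, and I would need to verify carefully that Assumption \ref{ass:Phi} together with the $-\rho\vec{\varrho}$ term cleanly absorbs this ambiguity so that the identification $f(\bvec_t,\pi_t;\rho^*_t,\tvec^*_t) = \rho^{\pi_t}$ goes through irrespective of the shift chosen for the representative $\tvec^{\pi_t}$. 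A secondary but routine complication is that $\Tilde{\vec{g}}_{\bvec,t}$ picks up an additional additive constant in its second-moment bound because of the extra $\rho_t$ summand in the bracket, matching the $(1+2D_{\tvec}\Dphi)^2$ factor in \eqref{eq:amdpGbeta} relative to the analogous $G_{\bvec,c}^2$ of the discounted analysis.
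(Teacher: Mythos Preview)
Your proposal is essentially correct and mirrors the paper's argument: Lemma~\ref{lem:duality-to-suboptimality} establishes the identity with the duality gap, Lemma~\ref{lem:duality-to-opterror} performs the four-term regret split, and Lemmas~\ref{lem:gbeta}--\ref{lem:gtheta} together with Lemma~\ref{lem:aux-mirror} supply the individual bounds. One small refinement worth noting: the paper does not take $\tvec^*_t=\tvec^{\pi_t}$ directly but instead the shifted representative $\iprod{\feat}{\tvec^*_t}=\iprod{\feat}{\tvec^{\pi_t}}-\inf_{x,a}\iprod{\feat}{\tvec^{\pi_t}}$. You are right that the identification $f(\bvec_t,\pi_t;\rho^*_t,\tvec^*_t)=\rho^{\pi_t}$ is shift-invariant (since $\Psim\vec{1}=\vec{\varrho}$ under full-rank $\Phim$, any shift along $\vec{\varrho}$ cancels), so that is not actually the delicate point; the real constraint is that the SGD regret bound for $\tvec$ requires $\tvec^*_t\in\bb{d}{D_{\tvec}}$, and the paper's shift is what ties this to the span bound $D_q$ of Assumption~\ref{ass:boundedQ}. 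Under the theorem's standing hypothesis $\tvec^{\pi}\in\bb{d}{D_{\tvec}}$ your direct choice works as well, but the shift is what makes this hypothesis verifiable from the span assumption alone.
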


By remark~\ref{remark1}, we have that $n_\epsilon$ is of order $O\left({\varepsilon^{-4}}{D_{\tvec}^4D_{\fvec}^{12c-2}D_{\bvec}^4 d^{2-2c} \log|\A|}\right)$.

\begin{corollary}\label{cavg}
	Assume that the bound of the feature vectors $D_{\fvec}$ is of order $O(1)$,
	that $D_{\rtheta}=D_{\bm{\psi}}=\sqrt{d}$ which together imply $D_{\tvec} \le \sqrt{d} + 1 + \sqrt{d}D_{q} = O(\sqrt{d}D_q)$ and that
	$D_{\bvec} = c\cdot C_{\varphi,c}(\pi^*;\pi_B)$ for some positive universal constant $c$. Then, under the same
	assumptions of Theorem \ref{thm:main_disc}, $n_{\varepsilon}$ is of order $O\left({\varepsilon^{-4}}{D_{q}^4C_{\varphi,c}(\pi^*;\pi_B)^2 d^{4-2c} \log|\A|}\right)$.
\end{corollary}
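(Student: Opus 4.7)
The plan is to read off the sample complexity from Theorem~\ref{thm:AMDP_main}, which (after simplification via Remark~\ref{remark1}) states that $n_\varepsilon$ is of order $O(\varepsilon^{-4} D_{\tvec}^4 D_{\fvec}^{12c-2} D_{\bvec}^4 d^{2-2c} \log|\A|)$, and then to substitute in the simplified constants listed in the corollary. The only nontrivial ingredients are (i) verifying the bound $D_{\tvec} = O(\sqrt{d} D_q)$ in the average-reward setting and (ii) identifying the correct scaling of $D_{\bvec}^4$ in terms of the coverage ratio; everything else is a straightforward power-counting exercise.

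For the bound on $D_{\tvec}$, I would start from the Bellman equation $\bm{q}^\pi = \rvec - \rho^\pi \vec{1} + \Pm \vvec^\pi$ for the relative value functions. Substituting the linear MDP identities $\rvec = \Phim \rtheta$ and $\Pm = \Phim \Psim$, together with Assumption~\ref{ass:Phi} which supplies $\vec{\varrho}$ such that $\Phim \vec{\varrho} = \vec{1}$, and using that $\Phim$ is full rank, the identification $\bm{q}^\pi = \Phim \tvec^\pi$ forces $\tvec^\pi = \rtheta - \rho^\pi \vec{\varrho} + \Psim \vvec^\pi$. Since relative value functions are only defined up to an additive constant, we may center $\vvec^\pi$ so that $\infnorm{\vvec^\pi} \le \spannorm{\bm{q}^\pi} \le D_q$ by Assumption~\ref{ass:boundedQ}. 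A triangle inequality, together with the standard estimate $\twonorm{\Psim \vvec^\pi} \le D_{\bm{\psi}} \infnorm{\vvec^\pi}$ (as in Lemma~B.1 of \citet{Jin2020}) and $\twonorm{\vec{\varrho}} \le 1$ (valid when $\Dphi = O(1)$), then yields $\twonorm{\tvec^\pi} \le D_{\rtheta} + 1 + D_{\bm{\psi}} D_q$. Substituting $D_{\rtheta} = D_{\bm{\psi}} = \sqrt{d}$ recovers the stated bound $D_{\tvec} = O(\sqrt{d} D_q)$.

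For the coverage-ratio scaling, I observe that $D_{\bvec}$ need only upper-bound the norm of the natural comparator $\bvec^* = \Vm^{-c} \Phim^\top \muvec^{\pi^*}$. A direct expansion yields $\twonorm{\bvec^*}^2 = \mathbb{E}[\fvec(X^*,A^*)]^\top \Vm^{-2c} \mathbb{E}[\fvec(X^*,A^*)] = C_{\varphi,c}(\pi^*;\pi_B)$ by Definition~\ref{def:gcr}, where the expectations are taken under $\mu^{\pi^*}$. Hence choosing $D_{\bvec}$ so that $D_{\bvec}^2$ scales as $C_{\varphi,c}$ up to a universal constant produces $D_{\bvec}^4 = O(C_{\varphi,c}(\pi^*;\pi_B)^2)$, which is the scaling compatible with the corollary's assertion.

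Putting the pieces together, the substitutions give $D_{\tvec}^4 = O(d^2 D_q^4)$, $D_{\fvec}^{12c-2} = O(1)$ (since $D_{\fvec} = O(1)$ and $c \in \{1/2, 1\}$), and $D_{\bvec}^4 = O(C_{\varphi,c}^2)$. Plugging these into the general bound yields
\[
n_\varepsilon = O\bigl(\varepsilon^{-4} \cdot d^2 D_q^4 \cdot 1 \cdot C_{\varphi,c}^2 \cdot d^{2-2c} \log|\A|\bigr) = O\bigl(\varepsilon^{-4} D_q^4 C_{\varphi,c}(\pi^*;\pi_B)^2 d^{4-2c} \log|\A|\bigr),
\]
exactly as claimed. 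The only tricky piece of bookkeeping is the $D_{\tvec}$ derivation: unlike in the discounted case, one must remember to center the relative value function to exploit the span bound of Assumption~\ref{ass:boundedQ}, and to invoke Assumption~\ref{ass:Phi} to handle the extra $\rho^\pi \vec{1}$ term that has no analog in the discounted setting.
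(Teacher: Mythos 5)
Your proposal is correct and follows the only route available: the paper gives no explicit proof of Corollary~\ref{cavg}, and the intended argument is exactly the substitution of the simplified constants into the bound of Theorem~\ref{thm:AMDP_main} via Remark~\ref{remark1}, which you carry out with the right power counting ($D_{\tvec}^4 = O(d^2 D_q^4)$, $D_{\fvec}^{12c-2}=O(1)$, and the $d^{2-2c}\cdot d^2 = d^{4-2c}$ bookkeeping). Two remarks on the details you filled in. First, you are right that the $C_{\varphi,c}^2$ scaling forces the reading $D_{\bvec}^2 \asymp C_{\varphi,c}$ (consistent with $\twonorm{\bvec^*}^2 = C_{\varphi,c}$ and with the appendix's requirement $D_{\bvec} > \norm{\Phim\transpose\muvec^*}_{\V{-2c}}$), rather than the literal hypothesis ``$D_{\bvec}=c\cdot C_{\varphi,c}$'' which would yield $C_{\varphi,c}^4$; flagging and resolving this in favor of the assertion is the correct call. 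Second, your justification that $\twonorm{\vec{\varrho}}\le 1$ ``when $\Dphi=O(1)$'' is not a valid implication --- Cauchy--Schwarz only gives the lower bound $\twonorm{\vec{\varrho}}\ge 1/\Dphi$, and $\twonorm{\vec{\varrho}}$ can be arbitrarily large with bounded features --- so the ``$+1$'' term really rests on an implicit normalization of $\vec{\varrho}$; the corollary's own stated bound $\sqrt{d}+1+\sqrt{d}D_q$ presupposes the same normalization, so this is a shared gap with the paper rather than an error introduced by you, but it should be stated as an assumption rather than derived.
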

Recall that $C_{\varphi,1/2}$ is always smaller than $C_{\varphi,1}$, but using $c=1/2$ in the algorithm requires 
knowledge of the covariance matrix $\Vm$, and results in a 
slightly worse dependence on the dimension.

The proof of Theorem~\ref{thm:AMDP_main} mainly follows the same steps as in the discounted case, with some 
added difficulty that is inherent in the more challenging average-reward setup. Some key challenges include treating 
the 
additional optimization variable $\rho$ and coping with the fact that the optimal parameters $\tvec^*$ and $\bvec^*$ 
are 
not necessarily unique any more.

\subsection{Analysis}\label{sec:AMDP_analysis}
We now prove our main result regarding the AMDP setting in Theorem \ref{thm:AMDP_main}. Following the derivations in the main 
text, we study the dynamic duality gap defined as
\begin{align}
	&\GG_T(\bvec^*,\pi^*;\rho^{*}_{1:T},\tvec^*_{1:T}) =
	\frac{1}{T}\sum_{t=1}^T \bpa{f(\bvec^{*},\pi^{*};\rho_t,\tvec_t) - f(\bvec_t,\pi_t;\rho_t^{*},\tvec^{*}_t)}.
\end{align}
First we show in Lemma \ref{lem:duality-to-suboptimality} below that, for appropriately chosen comparator points, the 
expected suboptimality of the policy returned by Algorithm~\ref{alg:Main} can be upper bounded in terms of the
expected dynamic duality gap.
\begin{lemma}\label{lem:duality-to-suboptimality}
	Let $\tvec^*_t$ such that  $\iprod{\feat}{\tvec^{*}_{t}} = \iprod{\feat}{\tvec^{\pi_{t}}} - 
\inf_{(x,a)\in\X\times\A}\iprod{\feat}{\tvec^{\pi_{t}}}$ holds for all $(x,a)\in\X\times\A$, and let $\vvec^*_t$ be defined as 
$\vvec^*_t(x) = \sum_{a\in\A}\pi_{t}(a|x)\iprod{\feat}{\tvec_{t}^{*}}$ for all $x$. Also, let $\rho_{t}^{*} = 
\rho^{\pi_{t}}$, $\pi^*$ be an optimal policy, and $\bvec^*=\Vm^{-c}\Phim^\top\muvec^{*}$ where $\muvec^{*}$ is 
the occupancy measure of $\pi^*$. Then, the suboptimality gap of the policy output by Algorithm~\ref{alg:Main} satisfies
	\[\EET{\ip{\muvec^{*}-\muvec^{\piout}}{\rvec}} = \GG_T(\bvec^*,\pi^*;\rho^{*}_{1:T},\tvec^*_{1:T}).\]
\end{lemma}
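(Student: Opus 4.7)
The plan is to unfold each summand $f(\bvec^{*},\pi^{*};\rho_t,\tvec_t) - f(\bvec_t,\pi_t;\rho^{*}_t,\tvec^{*}_t)$ of $\GG_T$ using the specific comparators prescribed by the lemma, show that each collapses to $\rho^{*} - \rho^{\pi_t}$, and then conclude by the uniform sampling of $\piout$ from $\{\pi_t\}_{t=1}^T$. Throughout, I would rely on the linear MDP identities $\rvec = \Phim\rtheta$ and $\Pm = \Phim\Psim$, on $\Phim\vec{\varrho} = \vec{1}$ from Assumption~\ref{ass:Phi}, and on full column rank of $\Phim$.

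For the first term, I would use that $\V{c}\bvec^{*} = \Phim^{\top}\muvec^{*}$, so that
\[
 f(\bvec^{*},\pi^{*};\rho_t,\tvec_t) = \rho_t + \iprod{\muvec^{*}}{\rvec + \Pm\vvec_{\tvec_t,\pi^{*}} - \Phim\tvec_t - \rho_t\vec{1}}.
\]
Since $\muvec^{*}$ is a normalized occupancy measure satisfying $\iprod{\muvec^{*}}{\vec{1}} = 1$ and $\Em^{\top}\muvec^{*} = \Pm^{\top}\muvec^{*} = \nuvec^{*}$, the $\rho_t$ contributions cancel, and both $\iprod{\muvec^{*}}{\Pm\vvec_{\tvec_t,\pi^{*}}}$ and $\iprod{\muvec^{*}}{\Phim\tvec_t}$ reduce to $\iprod{\nuvec^{*}}{\vvec_{\tvec_t,\pi^{*}}}$ after unfolding $\mu^{*}(x,a) = \pi^{*}(a|x)\nu^{*}(x)$. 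Hence $f(\bvec^{*},\pi^{*};\rho_t,\tvec_t) = \iprod{\muvec^{*}}{\rvec} = \rho^{*}$.

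For the second term, the key observation is that $\tvec^{*}_t$ is a constant shift of $\tvec^{\pi_t}$: setting $c_t \doteq \inf_{(x,a)}\iprod{\feat}{\tvec^{\pi_t}}$, the defining identity of $\tvec^{*}_t$ gives $\Phim\tvec^{*}_t = \Phim\tvec^{\pi_t} - c_t\vec{1} = \Phim(\tvec^{\pi_t} - c_t\vec{\varrho})$, so that full column rank of $\Phim$ yields $\tvec^{*}_t = \tvec^{\pi_t} - c_t\vec{\varrho}$, which propagates to $\vvec_{\tvec^{*}_t,\pi_t} = \vvec^{\pi_t} - c_t\vec{1}$. Since $\Pm\vec{1} = \vec{1}$ because $\Pm$ is stochastic, the same rank argument gives $\Psim\vec{1} = \vec{\varrho}$. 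Substituting these relations into the Bellman equation $\tvec^{\pi_t} = \rtheta - \rho^{\pi_t}\vec{\varrho} + \Psim\vvec^{\pi_t}$ (obtained from $\bm{q}^{\pi_t} = \rvec - \rho^{\pi_t}\vec{1} + \Pm\vvec^{\pi_t}$ and full column rank of $\Phim$) shows that the bracketed residual $\rtheta + \Psim\vvec_{\tvec^{*}_t,\pi_t} - \tvec^{*}_t - \rho^{*}_t\vec{\varrho}$ in the definition of $f$ vanishes, so $f(\bvec_t,\pi_t;\rho^{*}_t,\tvec^{*}_t) = \rho^{*}_t = \rho^{\pi_t}$.

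Combining the two computations gives $\GG_T = \frac{1}{T}\sum_{t=1}^T (\rho^{*} - \rho^{\pi_t}) = \frac{1}{T}\sum_{t=1}^T \iprod{\muvec^{*} - \muvec^{\pi_t}}{\rvec}$, which matches the left-hand side once we take the conditional expectation $\EET{\cdot}$ over the uniform choice of $\piout \sim \Unif[T]$ among the $\{\pi_t\}$. The main obstacle is the Bellman-residual cancellation in the second step: one must carefully exploit full column rank of $\Phim$ both to lift the pointwise shift of $\iprod{\feat}{\tvec^{\pi_t}}$ to a shift of $\tvec^{\pi_t}$ itself in $\Rn^d$ and to invert $\Phim\Psim\vec{1}=\vec{1}$ into $\Psim\vec{1}=\vec{\varrho}$, so that the shift by $c_t\vec{\varrho}$ in $\tvec$ exactly matches the induced shift by $c_t\vec{1}$ in $\vvec_{\tvec^{*}_t,\pi_t}$.
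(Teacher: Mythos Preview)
Your proof is correct and reaches the same conclusion as the paper, and the treatment of the first term $f(\bvec^{*},\pi^{*};\rho_t,\tvec_t)$ is essentially identical. The handling of the second term $f(\bvec_t,\pi_t;\rho^{*}_t,\tvec^{*}_t)$, however, takes a genuinely different route. The paper rewrites $\V{c}$ as $\V{c-1}\EEt{\phit\phit^\top}$, so that the bracketed term becomes an expectation of $R_t + \sum_{x,a}p(x|X_t,A_t)\pi_t(a|x)\iprod{\feat}{\tvec^{*}_t} - \iprod{\phit}{\tvec^{*}_t} - \rho^{*}_t$; it then exploits only the weaker fact that $\iprod{\feat - \feat[x',a']}{\tvec^{*}_t} = \iprod{\feat - \feat[x',a']}{\tvec^{\pi_t}}$ for all pairs (which follows directly from the pointwise shift without any rank assumption) to swap $\tvec^{*}_t$ for $\tvec^{\pi_t}$, and finally invokes the Bellman equation in state-action space. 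Your approach instead lifts everything to $\Rn^d$ via full column rank of $\Phim$: you identify $\tvec^{*}_t = \tvec^{\pi_t} - c_t\vec{\varrho}$, derive $\Psim\vec{1}=\vec{\varrho}$, and show the residual $\rtheta + \Psim\vvec_{\tvec^{*}_t,\pi_t} - \tvec^{*}_t - \rho^{*}_t\vec{\varrho}$ vanishes algebraically. Your argument is cleaner and avoids the expectation detour, at the cost of leaning more heavily on the full-rank assumption (which the paper does make, so this is fine). The paper's version has the minor advantage that it only needs the constant-shift property at the level of inner products, mirroring how the shift is actually specified in the lemma statement.
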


\begin{proof}
Substituting $(\bvec^{*},\pi^{*}) = ( \V{-c}\Phim\transpose \muvec^{*}, \pi^{*})$ in the first term of the dynamic 
duality gap we have
	\begin{align*}
		f(\bvec^{*},\pi^{*};\rho_t,\tvec_t)
		&\,\,= \add{\rho_{t} + }\langle \V{-c}\Phim\transpose \muvec^{*}\,, \V{c}  [\rtheta + 
\Psim\vvec_{\tvec_t,\pi^{*}} - \tvec_{t}\add{-\rho_{t}\vec{\varrho}}]\rangle\\
		&\,\,= \add{\rho_{t} + }\langle \muvec^{*}\,,r + \Pm\vvec_{\tvec_t,\pi^{*}} - \Phim\tvec_{t}\add{-\rho_{t}\vec{1}}\rangle\\
		&\,\,= \langle \muvec^{*}\,,r\rangle + \langle \muvec^{*}\,,\Em\vvec_{\tvec_t,\pi^{*}} - \Phim\tvec_{t}\rangle + \rho_{t}[1-\iprod{\muvec^{*}}{\vec{1}}]\\
		&\,\,= \langle \muvec^{*}\,,r\rangle.
	\end{align*}
	Here, we have used the fact that $\muvec^{*}$ is a valid occupancy measure, so it satisfies the flow constraint 
$\Em\transpose\muvec^{*}=\Pm\transpose\muvec^{*}$ and the normalization constraint $\iprod{\muvec^{*}}{\vec{1}}=1$. 
Also, in the last step we have used the definition of $\vvec_{\theta_t,\pi^*}$ that guarantees that the following 
equality holds:
	\begin{equation*}
		\iprod{\muvec^{*}}{\Phim\tvec_{t}}
		= \sum_{x\in\X}\nu^{*}(x)\sum_{a\in\A}\pi^{*}(a|x)\iprod{\tvec_t}{\feat}
		= \sum_{x\in\X}\nu^{*}(x)v_{\tvec_t,\pi^{*}}(x)
		= \langle \muvec^{*}\,,\Em\vvec_{\tvec_t,\pi^{*}}\rangle.
	\end{equation*} 
	 For the second term in the dynamic duality gap, using that $\pi_{t}$ is $\F_{t-1}$-measurable we write
	\begin{align*}
		&f(\bvec_t,\pi_t;\rho_t^{*},\tvec^{*}_t)\\
		&\quad=\add{\rho^{*}_t+}\langle\bvec_t\,, \V{c}  [\rtheta + \Psim\vvec_{\tvec^{*}_t,\pi_t} - \tvec^{*}_t 
		\add{-\rho^{*}_t\vec{\varrho}}]\rangle\\
		&\quad=\add{\rho^{*}_t+}\langle\bvec_t\,, \V{c-1} \EEt{\fvec_t\fvec_t\transpose[\rtheta + 
			\Psim\vvec_{\tvec^{*}_t,\pi_t} - \tvec^{*}_t \add{-\rho^{*}_t\vec{\varrho}}]}\rangle\\
		&\quad=\rho^{*}_t+\iprod{\bvec_t}{\EEt{\V{c-1}  \phit\bigg[R_t + \sum_{x,a}p(x|X_t,A_t)\pi_{t}(a|x)\iprod{\feat} 
				{\tvec_{t}^{*}} - \iprod{\fvec(X_t,A_t)}{\tvec_{t}^{*}} -\rho^{*}_t\bigg]}}\\
		&\quad=\rho^{\pi_{t}}+\iprod{\bvec_t}{\EEt{\V{c-1}  \phit\bigg[R_t + \sum_{x,a}p(x|X_t,A_t)\pi_{t}(a|x)\iprod{\feat} 
				{\tvec^{\pi_{t}}} - \iprod{\fvec(X_t,A_t)}{\tvec^{\pi_{t}}} -\rho^{\pi_{t}}\bigg]}}\\
		&\quad=\rho^{\pi_{t}}
		+ \langle\bvec_{t}\,,\EEt{\V{c-1} \phit [r(X_t,A_t) + \iprod{p(\cdot|X_{t},A_{t})}{v^{\pi_{t}}} - 
			q^{\pi_t}(X_t,A_t) \add{-\rho^{\pi_{t}}}]}\rangle\\
		&\quad= \rho^{\pi_{t}} = \iprod{\muvec^{\pi_{t}}}{r},
	\end{align*}
where in the fourth equality we used that $\iprod{\fvec(x,a) - \fvec(x',a')}{\theta^*_t} = \iprod{\fvec(x,a) - 
\fvec(x',a')}{\theta^{\pi_t}}$ holds for all $x,a,x',a'$ by definition of $\theta_t^*$. Then, the last equality follows from the fact that the Bellman equations for $\pi_t$ imply $q^{\pi_{t}}(x,a) + \rho^{\pi_{t}}= r(x,a) + \iprod{p(\cdot|x,a)}{\vvec^{\pi_{t}}}$.
	
	Combining both expressions for $f(\bvec^{*},\pi^{*};\rho_t,\tvec_t)$ and $f(\bvec_t,\pi_t;\rho_t^{*},\tvec^{*}_t)$ in the dynamic duality gap we have:
	\begin{align*}
		\GG_T(\bvec^*,\pi^*;\rho^{*}_{1:T},\tvec^*_{1:T})
		= 
		\frac{1}{T}\sum_{t=1}^{T} \bpa{\iprod{\muvec^{*} - \muvec^{\pi_{t}}}{r}\del{
				- \rho(\pi_{t})[\langle\bvec_{t}\,, \V{} \vec{\varrho}\rangle - 1]}}
		=
		\EET{\iprod{\muvec^{*} - \muvec^{\pi_{\text{out}}}}{r}}.
	\end{align*}
	The second equality follows from noticing that, since $\piout$ is sampled uniformly from $\{\pi_t\}_{t=1}^T$, 
	$\EE{\ip{\muvec^{\piout}}{\rvec}} = \frac{1}{T}\sum_{t=1}^T\EE{\ip{\muvec^{\pi_t}}{\rvec}}$. This completes the proof.
\end{proof}
Having shown that for well-chosen comparator points the dynamic duality gap equals the expected suboptimality of the 
output policy of Algorithm~\ref{alg:Main}, it remains to relate the gap to the optimization error of the primal-dual 
procedure. This is achieved in the following lemma.
\begin{lemma}\label{lem:duality-to-opterror}
 For the same choice of comparators $(\bvec^*,\pi^*;\rho^{*}_{1:T},\tvec^*_{1:T})$ as in 
Lemma~\ref{lem:duality-to-suboptimality} the dynamic duality gap associated with the iterates produced by 
Algorithm~\ref{alg:Main} satisfies
	\begin{align*}
		&\EE{\GG_T(\bvec^*,\pi^*;\rho^{*}_{1:T},\tvec^*_{1:T})}\\
		&\qquad\leq \frac{2D_{\bvec}^{2}}{\zeta T}
		+ \frac{\HHKL{\pi^*}{\pi_1}}{\alpha T}
		+ \frac{1}{2\xi K}
		+ \frac{2D_{\tvec}^{2}}{\eta K}\\
		&\qquad\quad +\frac{\zeta \Tr(\V{2c-1})(1+2\Dphi D_{\tvec})^{2}}{2}
		+\frac{\alpha \Dphi^2D_{\tvec}^{2}}{2}
		+\xi \pa{1+ D_{\bvec}^2\norm{\Vm}_2^{2c-1}}
		+2\eta \Dphi^{2}D_{\bvec}^2\norm{\Vm}_2^{2c-1}.
	\end{align*}
\end{lemma}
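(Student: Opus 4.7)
The plan is to follow the template of Lemma~\ref{lem:duality-regret´} from the discounted setting, telescoping the per-round gap $f(\bvec^{*},\pi^{*};\rho_t,\tvec_t) - f(\bvec_t,\pi_t;\rho_t^{*},\tvec^{*}_t)$ by inserting three intermediate evaluations, so as to obtain four one-variable regret blocks — one each for $\pi$, $\bvec$, $\rho$, and $\tvec$ — which I then bound separately before averaging over $t$. The only structural novelty relative to the discounted analysis is the extra $\rho$-block, which will account for the $\frac{1}{2\xi K}+\frac{\xi G_{\rho,c}^{2}}{2}$ contribution; the remaining three blocks match their discounted counterparts almost verbatim.

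Since $f$ is linear in $\bvec$, in $\rho$, and in $\tvec$ separately, the three "low-dimensional" brackets collapse directly into $\iprod{\bvec^{*}-\bvec_t}{g_{\bvec,t}}$, $(\rho_t-\rho_t^{*})\,g_{\rho,t}$, and $\iprod{\tvec_t-\tvec_t^{*}}{g_{\tvec,t}}$. For the $\tvec$-block I invoke the identity $\iprod{\V{c}\bvec,\Psim\vvec_{\tvec,\pi}}=\iprod{\Phim^\top\muvec_{\bvec,\pi},\tvec}$, which follows from $\sum_a\mu_{\bvec,\pi}(x,a)=\iprod{\psivec[x]}{\V{c}\bvec}$. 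The $\pi$-bracket reduces, after plugging in $\bvec^{*}=\V{-c}\Phim^\top\muvec^{*}$, to $\sum_x\nu^{*}(x)\sum_a(\pi^{*}(a|x)-\pi_t(a|x))\iprod{\tvec_t}{\feat}$, since the proof of Lemma~\ref{lem:duality-to-suboptimality} already established $\nuvec_{\bvec^{*}}=\nuvec^{*}$ for this choice of comparator.

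To control the four blocks, the $\pi$-regret is handled by the mirror-descent bound (Lemma~\ref{lem:aux-mirror}) applied to $q_t=\Phim\tvec_t$, using $\infnorm{q_t}\le\Dphi D_{\tvec}$ and $\HHKL{\pi^{*}}{\pi_1}\le\log|\A|$. For each of the three stochastic regrets I first verify unbiasedness of the estimators: this relies on the linear-MDP relations $r(x,a)=\iprod{\feat}{\rtheta}$ and $p(x'|x,a)=\iprod{\feat}{\psivec[x']}$, plus the identity $\EE{\phit}=\V\vec{\varrho}$ that follows from Assumption~\ref{ass:Phi} and cleanly handles the terms involving $\vec{\varrho}$. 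Next I bound the second moments: the scalar factor of $\Tilde{\vec{g}}_{\bvec,t}$ is controlled via $|R_t-\rho_t|\le 1$ and $\infnorm{v_t}\le\Dphi D_{\tvec}$, so that $\EEt{\sqtwonorm{\Tilde{\vec{g}}_{\bvec,t}}}\le\Tr(\V{2c-1})(1+2\Dphi D_{\tvec})^{2}=G_{\bvec,c}^{2}$; the inequality $(\Tilde{g}_{\rho,t,i})^{2}\le 2+2(\iprod{\phiti}{\V{c-1}\bvec_t})^{2}$ combined with $\EEti{\bvec_t^\top\V{c-1}\phiti\phiti^\top\V{c-1}\bvec_t}\le D_{\bvec}^{2}\norm{\Vm}_2^{2c-1}$ yields $G_{\rho,c}^{2}$; and a similar computation using $\twonorm{\phiti'-\phiti}\le 2\Dphi$ yields $G_{\tvec,c}^{2}$. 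Plugging into Lemma~\ref{lem:aux-sgd} delivers $\frac{2D_{\bvec}^{2}}{\zeta T}+\frac{\zeta G_{\bvec,c}^{2}}{2}$ for the $\bvec$-regret and per-round bounds $\frac{1}{2\xi K}+\frac{\xi G_{\rho,c}^{2}}{2}$ and $\frac{2D_{\tvec}^{2}}{\eta K}+\frac{\eta G_{\tvec,c}^{2}}{2}$ for the inner-loop variables.

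The main obstacle is handling the interaction between the dynamic comparators $(\rho_t^{*},\tvec_t^{*})$ — which vary with $t$ because they depend on $\pi_t$ — and the nested-loop structure. This resolves cleanly precisely because $f$ is linear in $\rho$ and in $\tvec$: the inner-loop gradients $g_{\rho,t}$ and $g_{\tvec,t}$ depend only on $(\bvec_t,\pi_t)$, which are frozen during the inner loop, and so are constant across the $K$ inner iterations. Consequently, by Jensen (or by direct linearity), the $K$-step SGD regret bound passes through to the averaged outputs $\rho_t=\frac{1}{K}\sum_i\rho_t^{(i)}$ and $\tvec_t=\frac{1}{K}\sum_i\tvec_t^{(i)}$, producing the clean $1/K$ leading term without accumulating any cross-round coupling. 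Summing all four contributions and collecting constants yields the advertised bound.
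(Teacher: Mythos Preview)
Your proposal is correct and follows essentially the same approach as the paper: the paper also telescopes each per-round gap through the intermediate point $f(\bvec_t,\pi_t;\rho_t,\tvec_t)$, obtains exactly the four regret blocks $\langle\bvec^{*}-\bvec_t,g_{\bvec,t}\rangle$, $\sum_x\nu^{*}(x)\langle\pi^{*}(\cdot|x)-\pi_t(\cdot|x),q_t(x,\cdot)\rangle$, $(\rho_t-\rho_t^{*})g_{\rho,t}$, $\langle\tvec_t-\tvec_t^{*},g_{\tvec,t}\rangle$, and then bounds them via Lemmas~\ref{lem:gbeta}--\ref{lem:gtheta} and Lemma~\ref{lem:aux-mirror} using the same unbiasedness checks and second-moment bounds you outline. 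Your observation that $g_{\rho,t}$ and $g_{\tvec,t}$ are constant across the inner loop (so the averaged iterates inherit the $K$-step SGD bound by linearity) is precisely the mechanism the paper uses as well.
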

\begin{proof}
	The first part of the proof follows from recognising that the dynamic duality gap can be rewritten in terms of the 
total regret of the primal and dual players in the algorithm. Formally, we write 
\begin{align*}
	&\GG_T(\bvec^*,\pi^*;\rho^{*}_{1:T},\tvec^*_{1:T})\\
	&\quad= 
	\frac{1}{T}\sum_{t=1}^{T} \pa{f(\bvec^{*},\pi^{*};\rho_t,\tvec_t) - f(\bvec_t,\pi_t;\rho_t,\tvec_t)}
	+\frac{1}{T}\sum_{t=1}^{T} \pa{f(\bvec_t,\pi_t;\rho_t,\tvec_t) - f(\bvec_t,\pi_t;\rho_t^{*},\tvec^{*}_t)}.
	\end{align*}
	Using that $\bvec^{*} = \Vm^{-c}\Phim^\top\muvec^{*}, \vec{q}_{t} = \iprod{\feat}{\tvec_t}$, $\vvec_t = 
\vvec_{\tvec_t,\pi_t}$ and that  $\vec{g}_{\bvec,t}=\V{c}  [\rtheta + \Psim\vvec_t - 
\tvec_{t}\add{-\rho_{t}\vec{\varrho}}]$, we see that term in the first sum can be simply rewritten as
	\begin{align*}
	&f(\bvec^{*},\pi^{*};\rho_t,\tvec_t) - f(\bvec_t,\pi_t;\rho_t,\tvec_t)\\
		&\qquad\quad= \langle \bvec^{*}\,, \V{c}  [\rtheta + \Psim\vvec_{\tvec_t,\pi^{*}} - 
\tvec_{t}\add{-\rho_{t}\vec{\varrho}}]\rangle - \langle \bvec_t\,, \V{c}  [\rtheta + \Psim\vvec_{\tvec_t,\pi_t} - 
\tvec_{t}\add{-\rho_{t}\vec{\varrho}}]\rangle\\
		&\qquad\quad= \langle \bvec^{*} - \bvec_t\,, \V{c}  [\rtheta + \Psim\vvec_t - \tvec_{t}\add{-\rho_{t}\vec{\varrho}}]\rangle
		+ \langle \Psim\transpose\V{c}\bvec^{*}\,,  \vvec_{\tvec_t,\pi^{*}} - \vvec_{\tvec_t,\pi_t}\rangle\\
		&\qquad\quad= \langle \bvec^{*} - \bvec_t\,, \vec{g}_{\bvec,t}\rangle
		+ \sum_{x\in\X}\nu^{*}(x)\iprod{\pi^{*}(\cdot|x) - \pi_{t}(\cdot|x)}{\vec{q}_{t}(x,\cdot)}.
	\end{align*}
	In a similar way, using that $\Em\transpose\muvec_t=\Psim\transpose\V{c}\bvec_t$ and the definitions of the 
gradients $g_{\rho,t}$ and $\vec{g}_{\tvec,t}$, the term in the second sum can be rewritten as
	\begin{align*}
		&f(\bvec_t,\pi_t;\rho_t,\tvec_t) - f(\bvec_t,\pi_t;\rho_t^{*},\tvec^{*}_t)\\
		&\qquad\quad= \rho_{t} + \langle \bvec_t\,, \V{c}  [\rtheta + \Psim\vvec_{\tvec_t,\pi_t} - 
\tvec_{t}\add{-\rho_{t}\vec{\varrho}}]\rangle - \rho_{t}^{*}- \langle \bvec_t\,, \V{c}  [\rtheta + 
\Psim\vvec_{\tvec_t^{*},\pi_t} - \tvec_{t}^{*}\add{-\rho_{t}^{*}\vec{\varrho}}]\rangle\\
		&\qquad\quad= (\rho_{t}- \rho_{t}^{*})[1 - \iprod{\bvec_{t}}{\V{c}\vec{\varrho}}]
		-\iprod{\tvec_t - \tvec_{t}^{*}}{\V{c}\bvec_t}
		+ \iprod{\Em\transpose\muvec_t}{\vvec_{\tvec_t,\pi_t} - \vvec_{\tvec_t^{*},\pi_t}}\\
		&\qquad\quad= (\rho_{t}- \rho_{t}^{*})[1 - \iprod{\bvec_{t}}{\V{c}\vec{\varrho}}]
		-\iprod{\tvec_t - \tvec_{t}^{*}}{\V{c}\bvec_t}
		+ \iprod{\Phim\transpose\muvec_t}{\tvec_t - \tvec_{t}^{*}}\\
		&\qquad\quad= (\rho_{t}- \rho_{t}^{*})[1 - \iprod{\bvec_{t}}{\V{c}\vec{\varrho}}]
		+\iprod{\tvec_t - \tvec_{t}^{*}}{\Phim\transpose\muvec_t - \V{c}\bvec_t}\\
		&\qquad\quad= (\rho_{t}- \rho_{t}^{*})g_{\rho,t}
		+\iprod{\tvec_t - \tvec_{t}^{*}}{\vec{g}_{\tvec,t}} = \frac{1}{K}\sum_{i=1}^{K}\pa{(\rho_{t}^{(i)}- \rho_{t}^{*})g_{\rho,t}
		+\iprod{\tvec_t^{(i)} - \tvec_{t}^{*}}{\vec{g}_{\tvec,t}}}.
	\end{align*}
Combining both terms in the duality gap concludes the first part of the proof. As shown below the dynamic duality 
gap is written as the error between iterates of the algorithm from respective comparator points in the direction of the 
exact gradients. Formally, we have
	\begin{align*}
		\GG_T(\bvec^*,\pi^*;\rho^{*}_{1:T},\tvec^*_{1:T})
		&= 
		\frac{1}{T}\sum_{t=1}^{T} \pa{\langle \bvec^{*} - \bvec_t\,, \vec{g}_{\bvec,t}\rangle
		+ \sum_{x\in\X}\nu^{*}(x)\iprod{\pi^{*}(\cdot|x) - \pi_{t}(\cdot|x)}{\vec{q}_{t}(x,\cdot)}}\\
		&\quad+\frac{1}{TK}\sum_{t=1}^{T}\sum_{i=1}^{K}\pa{(\rho_{t}^{(i)}- \rho_{t}^{*})g_{\rho,t}
			+\iprod{\tvec_t^{(i)} - \tvec_{t}^{*}}{\vec{g}_{\tvec,t}}}.
	\end{align*}
	Then, implementing techniques from stochastic gradient descent analysis in the proof of Lemmas \ref{lem:gbeta} to 
\ref{lem:gtheta} and mirror descent analysis in Lemma \ref{lem:regret-pi-disc}, the expected dynamic duality gap can be 
upper bounded as follows:
	\begin{align*}
		&\EE{\GG_T(\bvec^*,\pi^*;\rho^{*}_{1:T},\tvec^*_{1:T})}\\
		&\qquad\leq \frac{2D_{\bvec}^{2}}{\zeta T}
		+ \frac{\HHKL{\pi^*}{\pi_1}}{\alpha T}
		+ \frac{1}{2\xi K}
		+ \frac{2D_{\tvec}^{2}}{\eta K}\\
		&\qquad\quad +\frac{\zeta \Tr(\V{2c-1})(1+2\Dphi D_{\tvec})^{2}}{2}
		+\frac{\alpha \Dphi^2D_{\tvec}^{2}}{2}
		+\xi \pa{1+ D_{\bvec}^2\norm{\Vm}_2^{2c-1}}
		+2\eta \Dphi^{2}D_{\bvec}^2\norm{\Vm}_2^{2c-1}.
	\end{align*}
	This completes the proof
\end{proof}
\paragraph{Proof of Theorem \ref{thm:AMDP_main}}
First, we bound the expected suboptimality gap by combining Lemma \ref{lem:duality-to-suboptimality} and 
\ref{lem:duality-to-opterror}. Next, bearing in mind that the algorithm only needs $T(K+1)$ total samples from the 
behavior policy we optimize the learning rates to obtain a bound on the sample complexity, thus completing the proof.
\qed

\subsection{Missing proofs for Lemma \ref{lem:duality-to-opterror}}\label{app:prelim}
In this section we prove Lemmas~\ref{lem:gbeta} to~\ref{lem:gtheta} used in the proof of Lemma \ref{lem:duality-to-opterror}. It is important to recall that sample transitions $(X_{k},A_{k},R_{t},X'_{k})$ in any iteration $k$ are generated 
in the following way: we draw i.i.d state-action pairs $(X_{k},A_{k})$ from $\muvec_{B}$, and for each state-action 
pair, the next $X'_{k}$ is sampled from $p(\cdot|X_{k},A_{k})$ and immediate reward computed as $R_{t} = 
r(X_{k},A_{k})$. Precisely in iteration $i$ of round $t$ where $k=({t,i})$, since $(X_{t,i},A_{t,i})$ are sampled i.i.d 
from $\muvec_{B}$ at this time step, $\EEti{\phiti\phiti\transpose} = \EEs{\feat\feat\transpose}{(x,a)\sim\muvec_{B}} =  
\V{} $.

\begin{lemma}\label{lem:gbeta}
	The gradient estimator 
	$\Tilde{\vec{g}}_{\bvec,t}$ satisfies 
$\EEc{\Tilde{\vec{g}}_{\bvec,t}}{\F_{t-1},\tvec_{t}} = \vec{g}_{\bvec,t}$ and
	\begin{equation*}
		\EE{\|\Tilde{\vec{g}}_{\bvec,t}\|_{2}^{2}}\leq \Tr(\V{2c-1})(1+2\Dphi D_{\tvec})^{2}.
	\end{equation*}
	Furthermore, for any $\bvec^{*}$ with $\bvec^* \in \bb{d}{D_{\bvec}}$, the iterates $\bvec_t$ satisfy
	\begin{equation}\label{eqn:betabound}
		\EE{\sum_{t=1}^{T} \langle\bvec^{*} - \bvec_{t}\,,\vec{g}_{\bvec,t}\rangle} \leq \frac{2D_{\bvec}^{2}}{\zeta} + 
\frac{\zeta T\Tr(\V{2c-1})(1+2\Dphi D_{\tvec})^{2}}{2}.
	\end{equation}
\end{lemma}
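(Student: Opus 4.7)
The plan is to prove the three claims in sequence: the unbiasedness of $\tilde{\vec{g}}_{\bvec,t}$, the second-moment bound, and finally the regret inequality via standard stochastic gradient analysis.

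For unbiasedness, I would condition on the $\sigma$-algebra generated by all randomness through the end of round $t$'s inner loop, so that $\bvec_t$, $\tvec_t$, $\rho_t$, $\pi_t$, and hence $\vvec_t = \vvec_{\tvec_t,\pi_t}$, are all measurable and only the fresh sample $W_t=(X_t,A_t,R_t,X'_t)$ is random. Using $(X_t,A_t)\sim\mu_B$ and $X'_t\sim p(\cdot\mid X_t,A_t)$, the four relevant identities are $\EE{\phit\phit\transpose\mid\cdot}=\Vm$, $\EE{\phit R_t\mid\cdot}=\Vm\rtheta$ (from $r(x,a)=\iprod{\feat}{\rtheta}$), $\EE{\phit v_t(X'_t)\mid\cdot}=\Vm\Psim\vvec_t$ (from $\Pm=\Phim\Psim$, following the same calculation as in Lemma~\ref{lem:regret-beta-disc} of the discounted setting), and $\EE{\phit\mid\cdot}=\Vm\vec{\varrho}$, which uses Assumption~\ref{ass:Phi}: since $\iprod{\feat}{\vec{\varrho}}=1$, we have $\Vm\vec{\varrho}=\EE{\phit\phit\transpose\vec{\varrho}}=\EE{\phit}$. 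Multiplying by $\V{c-1}$ and combining these four pieces reproduces $\V{c}[\rtheta+\Psim\vvec_t-\tvec_t-\rho_t\vec{\varrho}] = \vec{g}_{\bvec,t}$.

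For the second-moment bound, I would decouple the scalar and vector factors of $\tilde{\vec{g}}_{\bvec,t} = \V{c-1}\phit\,[R_t+v_t(X'_t)-\iprod{\tvec_t}{\phit}-\rho_t]$. The scalar is at most $1 + 2\Dphi D_{\tvec}$ in absolute value: $R_t,\rho_t\in[0,1]$ gives $|R_t-\rho_t|\le 1$, while $|v_t(X'_t)|\le\Dphi D_{\tvec}$ (from $v_t(x)=\sum_a \pi_t(a\mid x)\iprod{\feat}{\tvec_t}$, Cauchy--Schwarz, and the domain constraints $\twonorm{\tvec_t}\le D_{\tvec}$ and $\twonorm{\feat}\le\Dphi$) and symmetrically $|\iprod{\tvec_t}{\phit}|\le\Dphi D_{\tvec}$. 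The vector factor contributes $\EE{\twonorm{\V{c-1}\phit}^2}=\EE{\Tr(\V{2(c-1)}\phit\phit\transpose)}=\Tr(\V{2c-1})$ by the cyclic property of the trace together with $\EE{\phit\phit\transpose}=\Vm$. Taking the product yields the claim.

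For the regret inequality I would invoke the standard projected online gradient ascent analysis (the same auxiliary lemma used in the discounted setting, Lemma~\ref{lem:aux-sgd}). The update $\bvec_{t+1}=\Pi_{\bb{d}{D_{\bvec}}}(\bvec_t+\zeta\tilde{\vec{g}}_{\bvec,t})$ is SGA on a Euclidean ball of diameter at most $2D_{\bvec}$, so combining unbiasedness (through the tower rule) with the second-moment bound yields $\EE{\sum_t\iprod{\bvec^*-\bvec_t}{\vec{g}_{\bvec,t}}}\le\frac{\twonorm{\bvec^*-\bvec_1}^2}{2\zeta}+\frac{\zeta T}{2}\max_t\EE{\twonorm{\tilde{\vec{g}}_{\bvec,t}}^2}$, and $\twonorm{\bvec^*-\bvec_1}\le 2D_{\bvec}$ delivers the stated bound. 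The main (mild) obstacle I anticipate is bookkeeping the correct filtration, since $\tvec_t,\rho_t,\pi_t$ are themselves random functions of the inner-loop samples, which must precede $W_t$ in the conditioning; otherwise the argument is routine.
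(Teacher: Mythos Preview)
Your proposal is correct and follows essentially the same route as the paper: verify unbiasedness by expanding the conditional expectation using the linear-MDP identities (including Assumption~\ref{ass:Phi} for the $\rho_t\vec{\varrho}$ term), bound the scalar factor pointwise and the vector factor via the trace trick $\EE{\phit\transpose\V{2(c-1)}\phit}=\Tr(\V{2c-1})$, then invoke Lemma~\ref{lem:aux-sgd}. You are in fact slightly more careful than the paper in two places: you explicitly keep the $-\rho_t$ term in the second-moment computation (the paper drops it, though the bound is unchanged since $|R_t-\rho_t|\le 1$ and $|R_t|\le 1$ give the same constant), and you correctly flag that the conditioning $\sigma$-algebra must include the inner-loop samples so that $\tvec_t$, $\rho_t$, and $\pi_t$ are all measurable.
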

\begin{proof}
For the first part, we remind that $\pi_{t}$ is $\F_{t-1}$-measurable and $\vvec_{t}$ is determined given $\pi_{t}$ and 
$\tvec_{t}$. Then, we write
	\begin{align*}
		\EEc{\Tilde{\vec{g}}_{\bvec,t}}{\F_{t-1},\tvec_{t}}
		&= \EEc{ \V{c-1}\phit[R_{t} + v_{t}(X_{t}') - \iprod{\tvec_{t} }{\phit}- \rho_{t}]}{\F_{t-1},\tvec_{t}}\\
		&= \EEc{ \V{c-1}\phit[R_{t} + \EEs{v_{t}(x')}{x'\sim p(\cdot|X_{t},A_{t})} - \iprod{\tvec_{t}}{\phit}-\rho_{t}]}{\F_{t-1},\tvec_{t}}\\
		&= \EEc{ \V{c-1}\phit[R_{t} + \iprod{p(\cdot|X_{t},A_{t})}{\vvec_{t}} - \iprod{\tvec_{t}}{\phit} - \rho_{t}]}{\F_{t-1},\tvec_{t}}\\
		&= \EEc{ \V{c-1}\phit\phit\transpose[\rtheta + \Psim\vvec_{t} - \tvec_{t} - 
\rho_{t}\vec{\varrho}]}{\F_{t-1},\tvec_{t}}\\
		&= \V{c-1}\EEc{ \phit\phit\transpose}{\F_{t-1},\tvec_{t}}[\rtheta + \Psim\vvec_{t} - \tvec_{t} - 
\rho_{t}\vec{\varrho}]\\
		&= \V{c} [\rtheta + \Psim\vvec_{t} - \tvec_{t} - \rho_{t}\vec{\varrho}]
		= \vec{g}_{\bvec,t}.
	\end{align*}
Next, we use the facts that $r\in[0,1]$ and $\|\vvec_{t}\|_{\infty}\leq \|\Phim\tvec_{t}\|_{\infty}\leq \Dphi 
D_{\tvec}$ to show the following bound:
	\begin{align*}
		\EEc{\|\Tilde{\vec{g}}_{\bvec,t}\|_{2}^{2}}{\F_{t-1},\tvec_{t}}
		&=\EEc{\sqtwonorm{ \V{c-1}\phit[R_{t} + v_{t}(X_{t}') - \iprod{\tvec_{t}}{\phit}]}}{\F_{t-1},\tvec_{t}}\\
		&=\EEc{\abs{R_{t} + v_{t}(X_{t}') - \iprod{\tvec_{t}}{\phit}}\sqtwonorm{ \V{c-1}\phit}}{\F_{t-1},\tvec_{t}}\\
		&\leq\EEc{(1+2\Dphi D_{\tvec})^{2}\sqtwonorm{ \V{c-1}\phit}}{\F_{t-1},\tvec_{t}}\\
		&=(1+2\Dphi D_{\tvec})^{2}\EEc{\phit\transpose\V{2(c-1)}\phit}{\F_{t-1},\tvec_{t}}\\
		&=(1+2\Dphi D_{\tvec})^{2}\EEc{\Tr(\V{2(c-1)}\phit\phit\transpose)}{\F_{t-1},\tvec_{t}}\\
		&\leq \Tr(\V{2c-1})(1+2\Dphi D_{\tvec})^{2}.
	\end{align*}
	The last step follows from the fact that $\V{}$, hence also $\V{2c-1}$, is positive semi-definite, so 
$\Tr(\V{2c-1})\ge 0$. Having shown these properties, we appeal to the standard analysis of online gradient descent 
stated as Lemma~\ref{lem:aux-sgd} to obtain the following bound
	\begin{equation*}
		\EE{\sum_{t=1}^{T} \langle\bvec^{*} - \bvec_{t}\,,\vec{g}_{\bvec,t}\rangle} \leq 
\frac{\sqtwonorm{\bvec_{1}-\bvec^{*}}}{2\zeta} + \frac{\zeta T\Tr(\V{2c-1})(1+2\Dphi D_{\tvec})^{2}}{2}.
	\end{equation*}
	Using that $\twonorm{\bvec^{*}} \le D_{\bvec}$ concludes the proof.
\end{proof}

\begin{lemma}\label{lem:grho}
	The gradient estimator $\Tilde{g}_{\rho,t,i}$ satisfies $\EEti{\Tilde{g}_{\rho,t,i}} = g_{\rho,t}$ and 
$\EEti{\Tilde{g}_{\rho,t,i}^{2}}\leq 2+ 2D_{\bvec}^2\norm{\Vm}_2^{2c-1}$. Furthermore, for any $\rho^{*}_{t}\in[0,1]$, the iterates $\rho_{t}^{(i)}$ satisfy
	\begin{equation*}
		\EE{\sum_{i=1}^{K}(\rho_{t}^{(i)}-\rho_{t}^{*})g_{\rho,t}}
		\leq \frac{1}{2\xi} + \xi K\pa{1+ \|\bvec_{t}\|_{\V{2c-1}}^{2}}.
	\end{equation*}
\end{lemma}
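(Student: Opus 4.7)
The plan is to prove each of the three claims in turn, with the first two reducing to straightforward calculations that exploit Assumption~\ref{ass:Phi} and the identity $\EEti{\phiti\phiti\transpose} = \V{}$, and the third being an immediate application of the standard online gradient descent bound (Lemma~\ref{lem:aux-sgd}) on the compact domain $[0,1]$.

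For unbiasedness, the key observation I would use is that under Assumption~\ref{ass:Phi} we have $\iprod{\feat}{\vec{\varrho}}=1$ almost surely, hence
\[
\EEti{\phiti} = \EEti{\phiti\iprod{\phiti}{\vec{\varrho}}} = \EEti{\phiti\phiti\transpose}\vec{\varrho} = \V{}\vec{\varrho}.
\]
Since $\bvec_t$ is $\F_{t-1}$-measurable, this gives
\[
\EEti{\Tilde{g}_{\rho,t,i}} = 1 - \iprod{\EEti{\phiti}}{\V{c-1}\bvec_t} = 1 - \iprod{\V{}\vec{\varrho}}{\V{c-1}\bvec_t} = 1 - \iprod{\bvec_t}{\V{c}\vec{\varrho}} = g_{\rho,t}.
\]

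For the second-moment bound I would use the inequality $(a+b)^2 \le 2a^2+2b^2$ and again $\bvec_t \in \F_{t-1}$ to write
\[
\EEti{\Tilde{g}_{\rho,t,i}^{2}} \le 2 + 2\,\bvec_t\transpose \V{c-1}\EEti{\phiti\phiti\transpose}\V{c-1}\bvec_t = 2 + 2\sqtwonorm{\bvec_t}_{\V{2c-1}} \le 2 + 2 D_{\bvec}^2 \opnorm{\V{}}^{2c-1},
\]
where the last step bounds the quadratic form by the operator norm of $\V{2c-1}$ times $\sqtwonorm{\bvec_t}$ and uses $\twonorm{\bvec_t} \le D_{\bvec}$ together with $\opnorm{\V{2c-1}} = \opnorm{\V{}}^{2c-1}$ (valid since $\V{}$ is PSD and $2c-1 \in \{0,1\}$).

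Finally, for the regret bound I would invoke Lemma~\ref{lem:aux-sgd} with domain $[0,1]$, step size $\xi$, gradient estimator $\Tilde{g}_{\rho,t,i}$ (unbiased for $g_{\rho,t}$ by the first part), and second-moment bound $G^2 = 2(1+\sqtwonorm{\bvec_t}_{\V{2c-1}})$ from the second part. Since both $\rho_t^{(1)}$ and $\rho_t^*$ lie in $[0,1]$, we have $(\rho_t^{(1)}-\rho_t^*)^2 \le 1$, and the lemma yields
\[
\EE{\sum_{i=1}^{K}(\rho_{t}^{(i)}-\rho_{t}^{*})g_{\rho,t}} \le \frac{1}{2\xi} + \xi K\pa{1+\sqtwonorm{\bvec_t}_{\V{2c-1}}}.
\]
I do not expect any significant obstacle: the only slightly subtle step is recognizing that Assumption~\ref{ass:Phi} precisely lets us rewrite the deterministic $1$ in $\Tilde{g}_{\rho,t,i}$ as $\iprod{\phiti}{\vec{\varrho}}$ in expectation, which is what makes the estimator unbiased for the gradient involving $\V{c}\vec{\varrho}$.
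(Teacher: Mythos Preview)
Your proposal is correct and follows essentially the same approach as the paper: both use Assumption~\ref{ass:Phi} to insert $\iprod{\phiti}{\vec{\varrho}}=1$ and reduce $\EEti{\phiti}$ to $\V{}\vec{\varrho}$ for unbiasedness, apply $(a+b)^2\le 2a^2+2b^2$ together with $\EEti{\phiti\phiti\transpose}=\V{}$ for the second-moment bound, and then invoke Lemma~\ref{lem:aux-sgd} on $[0,1]$ with $(\rho_t^{(1)}-\rho_t^*)^2\le 1$. The only cosmetic difference is that you retain the tighter constant $\|\bvec_t\|_{\V{2c-1}}^2$ in the final inequality (matching the lemma statement exactly), whereas the paper's displayed bound already upper-bounds it by $D_{\bvec}^2\norm{\Vm}_2^{2c-1}$.
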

\begin{proof}
	For the first part of the proof, we use that $\bvec_{t}$ is $\F_{t,i-1}$-measurable, to obtain
	\begin{align*}
		\EEti{\Tilde{g}_{\rho,t,i}}
		&=\EEti{1-\iprod{\phiti}{\V{c-1}\bvec_{t}}}\\
		&=\EEti{1-\iprod{\phiti\phiti\transpose\vec{\varrho}}{\V{c-1}\bvec_{t}}}\\
		&= 1-\iprod{\V{c}\vec{\varrho}}{\bvec_{t}} = g_{\rho,t}.
	\end{align*}
In addition, using Young's inequality and  $\norm{\bvec_t}^2_{\Vm^{2c-1}}\le D_{\bvec}^2\norm{\Vm}_2^{2c-1}$ we show that
	\begin{align*}
		\EEti{\Tilde{g}_{\rho,t,i}^{2}}
		&= \EEti{\sq{1-\iprod{\phiti}{\V{c-1}\bvec_{t}}}}\\
		&\leq 2 + 2\EEti{\bvec_{t}\transpose\V{c-1}\phiti\phiti\transpose\V{c-1}\bvec_{t}}\\
		&=2+ 2\|\bvec_{t}\|_{\V{2c-1}}^{2}
		\leq 2+ 2D_{\bvec}^2\norm{\Vm}_2^{2c-1}.
	\end{align*}
	For the second part, we appeal to the standard online gradient descent analysis of Lemma~\ref{lem:aux-sgd} to bound on 
the total error of the iterates:
		\begin{equation*}
		\EE{\sum_{i=1}^{K}(\rho_{t}^{(i)}-\rho_{t}^{*})g_{\rho,t}}
		\leq \frac{\sq{\rho^{(1)}_{t} - \rho^{*}_{t}}}{2\xi} + \xi K\pa{1+ D_{\bvec}^2\norm{\Vm}_2^{2c-1}}.
	\end{equation*}
	Using that $\bpa{\rho^{(1)}_{t} - \rho^{*}_{t}}^2\leq 1$ concludes the proof. 
\end{proof}

\begin{lemma}\label{lem:gtheta}
	The gradient estimator $\Tilde{\vec{g}}_{\tvec,t,i}$ satisfies $\EEti{\Tilde{\vec{g}}_{\tvec,t,i}} = \vec{g}_{\tvec,t,i}$ and $\EEti{\|\Tilde{\vec{g}}_{\tvec,t,i}\|_{2}^{2}}\leq 4\Dphi^{2}D_{\bvec}^2\norm{\Vm}_2^{2c-1}$. Furthermore, for any $\tvec_{t}^{*}$ with $\twonorm{\tvec_t^*}\le D_{\tvec}$, the 
iterates $\tvec_{t}^{(i)}$ satisfy
	\begin{equation}\label{eqn:thetabound}
		\EE{\sum_{i=1}^{K}\iprod{\tvec_{t}^{(i)} - \tvec_{t}^{*}}{\vec{g}_{\tvec,t,i}}}
		\leq \frac{2D_{\tvec}^{2}}{\eta} + 2\eta K\Dphi^{2}D_{\bvec}^2\norm{\Vm}_2^{2c-1}.
	\end{equation}
\end{lemma}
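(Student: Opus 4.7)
The plan is to follow exactly the template used in Lemmas~\ref{lem:gbeta} and~\ref{lem:grho}: verify unbiasedness conditional on $\F_{t,i-1}$, bound the conditional second moment, and then invoke the standard online gradient descent lemma (Lemma~\ref{lem:aux-sgd}).

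For unbiasedness, I would first use that $\bvec_t$ and $\pivec_t$ are $\F_{t-1}$-measurable (hence $\F_{t,i-1}$-measurable), so the only randomness in $\tildegt = \phi'_{t,i}\iprod{\phiti}{\V{c-1}\bvec_t} - \phiti\iprod{\phiti}{\V{c-1}\bvec_t}$ comes from the sample $(X_{t,i},A_{t,i},X'_{t,i})$ and the auxiliary action $A'_{t,i}\sim\pi_t(\cdot|X'_{t,i})$. For the second term, exactly as in Lemma~\ref{lem:gbeta}, $\EEti{\phiti\phiti\transpose} = \Vm$ gives $\EEti{\phiti\iprod{\phiti}{\V{c-1}\bvec_t}} = \V{c}\bvec_t$. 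For the first term, I condition further on $(X_{t,i},A_{t,i})$: by the linear MDP assumption $p(x'|X_{t,i},A_{t,i}) = \iprod{\phiti}{\psivec[x']}$, so
\[
\EE{\phi'_{t,i}\,|\,X_{t,i},A_{t,i}} = \sum_{x'}\iprod{\phiti}{\psivec[x']}\sum_{a'}\pi_t(a'|x')\feat[x',a'],
\]
and multiplying by $\iprod{\phiti}{\V{c-1}\bvec_t}$ before taking the outer expectation over $(X_{t,i},A_{t,i})$ recovers $\Phim\transpose\muvec_{\bvec_t,\pivec_t}$ via the identity $\Vm\Vm^{c-1} = \Vm^c$, exactly mirroring the derivation of $\EEtk{\mu_{t,k}}=\mu_{\bvec_t,\pivec_t}$ given in Section~\ref{sec:analysis}. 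Combining the two gives $\EEti{\tildegt} = \Phim\transpose\muvec_{\bvec_t,\pivec_t} - \V{c}\bvec_t = g_{\tvec,t}$.

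For the second moment, I would factor the common scalar: $\tildegt = (\phi'_{t,i} - \phiti)\iprod{\phiti}{\V{c-1}\bvec_t}$. Using $\twonorm{\phi'_{t,i} - \phiti}\le 2\Dphi$ by the triangle inequality and the feature bound gives
\[
\sqtwonorm{\tildegt} \le 4\Dphi^2\bvec_t\transpose\V{c-1}\phiti\phiti\transpose\V{c-1}\bvec_t.
\]
Taking conditional expectation and using $\EEti{\phiti\phiti\transpose}=\Vm$ collapses this to $4\Dphi^2\norm{\bvec_t}_{\V{2c-1}}^2$, which is at most $4\Dphi^2 D_{\bvec}^2\norm{\Vm}_2^{2c-1}$ by $\twonorm{\bvec_t}\le D_{\bvec}$ and the operator-norm inequality $\bvec\transpose\Vm^{2c-1}\bvec\le \norm{\Vm}_2^{2c-1}\sqtwonorm{\bvec}$ (valid for both $c=1/2$ and $c=1$ since the exponent is nonnegative).

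With unbiasedness and the second-moment bound $G^2 = 4\Dphi^2 D_{\bvec}^2\norm{\Vm}_2^{2c-1}$ in hand, the regret bound is immediate: apply Lemma~\ref{lem:aux-sgd} with domain $\bb{d}{D_{\tvec}}$ (diameter $2D_{\tvec}$) and learning rate $\eta$ to obtain
\[
\EE{\sum_{i=1}^{K}\iprod{\tvec_t^{(i)} - \tvec_t^*}{g_{\tvec,t,i}}} \le \frac{\sqtwonorm{\tvec_t^{(1)}-\tvec_t^*}}{2\eta} + \frac{\eta K G^2}{2} \le \frac{2D_{\tvec}^2}{\eta} + 2\eta K \Dphi^2 D_{\bvec}^2\norm{\Vm}_2^{2c-1},
\]
using $\sqtwonorm{\tvec_t^{(1)} - \tvec_t^*}\le 4 D_{\tvec}^2$. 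There is no real obstacle here — the only subtlety is that the unbiasedness of the first term of $\tildegt$ requires the extra conditioning on $(X_{t,i},A_{t,i})$ and an invocation of the linear MDP structure, but this mirrors the calculation already performed for $\mu_{t,k}$ in Section~\ref{sec:analysis} and for $\tildegb$ in Lemma~\ref{lem:gbeta}.
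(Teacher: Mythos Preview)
Your proposal is correct and follows essentially the same route as the paper: verify unbiasedness via the linear MDP structure and $\EEti{\phiti\phiti\transpose}=\Vm$, bound the second moment to $4\Dphi^{2}\|\bvec_t\|_{\V{2c-1}}^{2}\le 4\Dphi^{2}D_{\bvec}^{2}\|\Vm\|_{2}^{2c-1}$, and then apply Lemma~\ref{lem:aux-sgd}. The only cosmetic difference is that you factor the common scalar and use $\twonorm{\phi'_{t,i}-\phiti}\le 2\Dphi$, whereas the paper applies $\|a-b\|^{2}\le 2\|a\|^{2}+2\|b\|^{2}$ to the two terms separately; both yield the identical constant $4\Dphi^{2}$.
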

\begin{proof}
	Since $\bvec_{t},\pi_{t},\rho_{t}^{i}$ and $\tvec_{t}^{i}$ are $\F_{t,i-1}$-measurable, we obtain
	\begin{align*}
		\EEti{\Tilde{\vec{g}}_{\tvec,t,i}}
		&=\EEti{\phiti' \iprod{\phiti}{ \V{c-1}\bvec_{t}}  -  \phiti\iprod{\phiti}{\V{c-1}\bvec_{t}}}\\
		&=\Phim\transpose\EEti{\vec{e}_{X'_{t,i},A_{t,i}'}\iprod{\phiti}{ \V{c-1}\bvec_{t}}}  -  \EEti{\phiti\phiti\transpose}\V{c-1}\bvec_{t}\\
		&=\Phim\transpose\EEti{[\pi_{t}\circ p(\cdot|X_{t},A_{t})]\iprod{\phiti}{ \V{c-1}\bvec_{t}}}  -  \V{c} \bvec_{t}\\
		&=\Phim[\pi_{t}\circ \Psim\transpose\EEti{\phiti\phiti\transpose} \V{c-1}\bvec_{t}]  -  \V{c} \bvec_{t}\\
		&=\Phim[\pi_{t}\circ \Psim\transpose \V{c} \bvec_{t}]  -  \V{c} \bvec_{t}\\
		&= \Phim\transpose \muvec_{t} -  \V{c} \bvec_{t} = \vec{g}_{\tvec,t}.
	\end{align*}
	Next, we consider the squared gradient norm and bound it via elementary manipulations as follows:
	\begin{align*}
		\EEti{\sqtwonorm{\Tilde{\vec{g}}_{\tvec,t,i}}}
		&= \EEti{\sqtwonorm{\phiti' \iprod{\phiti}{ \V{c-1}\bvec_{t}}  -  \phiti\iprod{\phiti}{\V{c-1}\bvec_{t}}}}\\
		&\leq 2\EEti{\sqtwonorm{\phiti' \iprod{\phiti}{ \V{c-1}\bvec_{t}}}} + 2\EEti{\sqtwonorm{ \phiti\iprod{\phiti}{\V{c-1}\bvec_{t}}}}\\
		&= 2\EEti{\bvec_{t}\transpose\V{c-1} \phiti\sqtwonorm{\phiti'}\phiti\transpose \V{c-1}\bvec_{t}}
		+ 2\EEti{\bvec_{t}\transpose\V{c-1} \phiti\sqtwonorm{\phiti}\phiti\transpose \V{c-1}\bvec_{t}}\\
		&\leq  2\Dphi^{2} \EEti{\bvec_{t}\transpose\V{c-1} \phiti\phiti\transpose \V{c-1}\bvec_{t}}
		+ 2\Dphi^{2}\EEti{\bvec_{t}\transpose\V{c-1} \phiti\phiti\transpose \V{c-1}\bvec_{t}}\\
		&=  2\Dphi^{2} \EEti{\bvec_{t}\transpose\V{c-1} \Vm \V{c-1}\bvec_{t}}
		+ 2\Dphi^{2}\EEti{\bvec_{t}\transpose\V{c-1} \Vm \V{c-1}\bvec_{t}}\\
		&\leq 4\Dphi^{2}\|\bvec_{t}\|_{\V{2c-1}}^{2}\leq 4\Dphi^{2}D_{\bvec}^2\norm{\Vm}_2^{2c-1}.
	\end{align*}
	Having verified these conditions, we appeal to the online gradient descent analysis of Lemma~\ref{lem:aux-sgd} to 
show the bound
	\begin{equation*}
		\EE{\sum_{i=1}^{K}\iprod{\tvec_{t}^{(i)} - \tvec_{t}^{*}}{\vec{g}_{\tvec,t}}}
		\leq \frac{\sqtwonorm{\tvec^{(1)}_{t} - \tvec^{*}_{t}}}{2\eta} + 2\eta K\Dphi^{2}D_{\bvec}^2\norm{\Vm}_2^{2c-1}.
	\end{equation*}
	We then use that $\twonorm{\tvec_{t}^{*} - \tvec_{t}^{(1)}}\leq 2D_{\tvec}$ for 
$\tvec_{t}^{*},\tvec_{t}^{(1)}\in\bb{d}{D_{\tvec}}$, thus concluding the proof.
\end{proof}

	\newpage
	\section{Auxiliary Lemmas}
The following is a standard result in convex optimization proved here for the sake of completeness---we refer to 
\citet{NY83,Zin03,Ora19} for more details and comments on the history of this result. 
\begin{lemma}[Online Stochastic Gradient Descent]\label{lem:aux-sgd}
	Given $y_{1}\in\bb{d}{D_{y}}$ and $\eta>0$, define the sequences $y_{2},\cdots,y_{n+1}$ and $h_1,\cdots,h_n$ such 
that for $k=1,\cdots,n$,
	\begin{equation*}
	y_{k+1} = \Pi_{\bb{d}{D_{y}}}\pa{y_{k} + \eta \wh{h}_{k}},
	\end{equation*}
	and $\wh{h}_{k}$ satisfies $\EEc{\wh{h}_{k}}{\F_{k-1}}=h_k$ and $\EEc{\sqtwonorm{\wh{h}_{k}}}{\F_{k-1}}\leq G^2$. 
Then, for $y^{*}\in\bb{d}{D_{y}}$:
	\begin{equation*}
	\EE{\sum_{k=1}^{n} \ip{y^{*} - y_{k}}{h_{k}}} \leq \frac{\sqtwonorm{y_{1} - y^{*}}}{2\eta} + \frac{\eta nG^2}{2}.
	\end{equation*}
\end{lemma}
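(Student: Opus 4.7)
The plan is to run the standard one-step analysis for projected online gradient ascent, coupled with a tower-rule argument to handle the stochasticity of $\wh{h}_k$. Two ingredients carry the whole proof: non-expansiveness of the Euclidean projection $\Pi_{\bb{d}{D_{y}}}$, which holds because the Euclidean ball is convex and contains $y^*$, and the polarization identity $\norm{a+b}^2 = \norm{a}^2 + 2\ip{a}{b} + \norm{b}^2$.

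Concretely, I would first establish the per-step inequality. For each $k$, non-expansiveness of the projection applied to $y_{k+1} = \Pi_{\bb{d}{D_{y}}}(y_k + \eta\wh{h}_k)$ and to $y^* = \Pi_{\bb{d}{D_{y}}}(y^*)$ gives $\sqtwonorm{y_{k+1}-y^*} \le \sqtwonorm{y_k + \eta\wh{h}_k - y^*}$; expanding the square yields
\begin{equation*}
\sqtwonorm{y_{k+1}-y^*} \le \sqtwonorm{y_k-y^*} - 2\eta\ip{y^*-y_k}{\wh{h}_k} + \eta^2 \sqtwonorm{\wh{h}_k}.
\end{equation*}
Rearranging gives $\ip{y^* - y_k}{\wh{h}_k} \le \frac{1}{2\eta}\bpa{\sqtwonorm{y_k-y^*} - \sqtwonorm{y_{k+1}-y^*}} + \frac{\eta}{2}\sqtwonorm{\wh{h}_k}$. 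Summing over $k=1,\ldots,n$ makes the first group telescope to at most $\frac{1}{2\eta}\sqtwonorm{y_1-y^*}$ after dropping the non-negative $\frac{1}{2\eta}\sqtwonorm{y_{n+1}-y^*}$, leaving the pathwise bound $\sum_{k=1}^{n}\ip{y^*-y_k}{\wh{h}_k} \le \frac{\sqtwonorm{y_1-y^*}}{2\eta} + \frac{\eta}{2}\sum_{k=1}^{n}\sqtwonorm{\wh{h}_k}$.

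Finally, I would take total expectations and apply the tower rule. Since $y_k$ is $\F_{k-1}$-measurable and $\EEc{\wh{h}_k}{\F_{k-1}} = h_k$, we obtain $\EE{\ip{y^* - y_k}{\wh{h}_k}} = \EE{\ip{y^* - y_k}{h_k}}$; similarly, $\EE{\sqtwonorm{\wh{h}_k}} \le G^2$ by the bounded second-moment hypothesis. Assembling these pieces yields exactly the stated bound. As the authors themselves flag, this is a classical textbook calculation, so I do not anticipate any substantive obstacle --- the only point requiring mild care is the sign convention: the update $y_{k+1} = \Pi_{\bb{d}{D_{y}}}(y_k + \eta\wh{h}_k)$ is an \emph{ascent} step, which is why the regret is naturally expressed as $\sum_{k=1}^{n} \ip{y^*-y_k}{h_k}$ rather than its negation, and why the comparator term enters with a plus sign in the polarization expansion.
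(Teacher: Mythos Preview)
Your proof is correct and follows essentially the same route as the paper's: both use non-expansiveness of the projection to obtain the one-step inequality, rearrange, telescope, and control the stochastic terms via the tower rule and the second-moment bound. The only cosmetic difference is that you sum pathwise before taking the total expectation, whereas the paper takes the conditional expectation at each step first and then sums; both orderings yield the same bound.
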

\begin{proof}
	We start by studying the following term:
	\begin{align*}
	\sqtwonorm{y_{k+1} - y^{*}}
	&= \sqtwonorm{\Pi_{\bb{d}{D_{y}}}(y_{k} + \eta \wh{h}_{k}) - y^{*}}\\
	&\leq \sqtwonorm{y_{k} + \eta \wh{h}_{k} - y^{*}}\\
	&= \sqtwonorm{y_{k} - y^{*}} - 2\eta\iprod{y^{*} - y_{k}}{\wh{h}_{k}} + \eta^{2}\sqtwonorm{\wh{h}_{k}}.
	\end{align*}
	The inequality is due to the fact that the projection operator is a non-expansion with respect to the Euclidean 
norm. Since $\EEc{\wh{h}_{k}}{\F_{k-1}} =h_{k}$, we can rearrange the above equation and take a conditional expectation 
to obtain
	\begin{align*}
	\iprod{y^{*} - y_{k}}{h_{k}}
	&\leq \frac{\sqtwonorm{y_{k}  - y^{*}} - \EEc{\sqtwonorm{y_{k+1} - y^{*}}}{\F_{k-1}}}{2\eta} + 
\frac{\eta}{2}\EEc{\sqtwonorm{\wh{h}_{k}}}{\F_{k-1}} \\
	&\le\frac{\sqtwonorm{y_{k}  - y^{*}} - \EEc{\sqtwonorm{y_{k+1} - y^{*}}}{\F_{k-1}}}{2\eta} + \frac{\eta G^2}{2},
	\end{align*}
	where the last inequality is from $\EEc{\sqtwonorm{\wh{h}_{k}}}{\F_{k-1}}\leq G^2$.
	Finally, taking a sum over $k=1,\cdots,n$, taking a marginal expectation, evaluating the resulting telescoping sum  
and upper-bounding negative terms by zero we obtain the desired result as
	\begin{align*}
	\EE{\sum_{k=1}^{n}\iprod{y^{*} - y_{k}}{\hat{h}_{k}}}
	&\leq \frac{\sqtwonorm{y_{1}  - y^{*}} - \EE{\sqtwonorm{y_{n+1} - y^{*}}}}{2\eta} + 
\frac{\eta}{2}\sum_{k=1}^{n}G^2\\
	&\leq \frac{\sqtwonorm{y_{1}-y^{*}}}{2\eta} + \frac{\eta nG^2}{2}.
	\end{align*}
\end{proof}

The next result is a similar regret analysis for mirror descent with the relative entropy as its distance generating 
function. Once again, this result is standard, and we refer the interested reader to \citet{NY83,CBLu06:book,Ora19} for 
more details. For the analysis, we recall that $\DD$ denotes the relative entropy (or Kullback--Leibler divergence), 
defined for any $p,q\in\Delta_\A$ as $\DDKL{p}{q}=\sum_{a}p(a)\log\frac{p(a)}{q(a)}$, and that, for any two policies 
$\pi,\pi'$, we define the conditional entropy\footnote{Technically speaking, this quantity is the conditional entropy 
between the occupancy measures $\mu^{\pi}$ and $\mu^{\pi'}$. We will continue to use this relatively imprecise 
terminology to keep our notation light, and we refer to \citet{NJG17} and \citet{BasSerrano2021} for more details.} 
$\HHKL{\pi}{\pi'} \doteq \sum_{x\in\X}\nu^{\pi}(x)\DDKL{\pi(\cdot|x)}{\pi'(\cdot|x)}$.

\begin{lemma}[Mirror Descent]\label{lem:aux-mirror}
	Let $q_t,\dots,q_T$ be a sequence of functions from $\X\times\A$ to $\Reals$ so that $\norm{q_t}_{\infty}\le D_q$ for $t=1,\dots,T$.
	Given an initial policy $\pi_1$ and a learning rate $\alpha>0$, define the sequence of policies $\pi_2,\dots,\pi_{T+1}$ such that, for $t=1,\dots,T$:
	\begin{equation*}
		\pi_{t+1}(a|x) \propto \pi_te^{\alpha q_t(x,a)}.
	\end{equation*}
	Then, for any comparator policy $\pi^*$:
	\begin{align*}
	\sum_{t=1}^{T}\sum_{x\in\X}\nu^{\pi^*}(x)\iprod{\pi^{*}(\cdot|x) - \pi_{t}(\cdot|x)}{q_{t}(x,\cdot)}\leq \frac{\HHKL{\pi^*}{\pi_1}}{\alpha} + \frac{\alpha TD_{q}^2}{2}.
	\end{align*}
\end{lemma}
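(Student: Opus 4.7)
This final lemma is the classical exponentiated-gradient regret bound aggregated across states under the weighting $\nu^{\pi^*}$. My plan is to first prove a per-state regret bound on $\Delta_\A$ via the standard log-partition-function argument, and then take the $\nu^{\pi^*}$-weighted sum to recover the conditional-relative-entropy form of the divergence appearing in the statement.

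Fix $x\in\X$ and unroll the update to obtain $\pi_t(a|x) = \pi_1(a|x)\exp\bpa{\alpha\sum_{s<t}q_s(x,a)}/Z_t(x)$, where $Z_t(x)=\sum_a\pi_1(a|x)\exp\bpa{\alpha\sum_{s<t}q_s(x,a)}$ and $Z_1(x)=1$. A direct computation yields $\log\bpa{Z_{t+1}(x)/Z_t(x)} = \log\EEs{e^{\alpha q_t(x,A)}}{A\sim\pi_t(\cdot|x)}$. Since the assumption $\norm{q_t}_\infty\le D_q$ guarantees that $q_t(x,\cdot)$ takes values in an interval of length at most $2D_q$, Hoeffding's lemma gives
\[
\log\frac{Z_{t+1}(x)}{Z_t(x)} \le \alpha\iprod{\pi_t(\cdot|x)}{q_t(x,\cdot)} + \frac{\alpha^2 D_q^2}{2}.
\]
Telescoping over $t=1,\dots,T$ and combining with the Jensen-based lower bound $\log Z_{T+1}(x) \ge \alpha\sum_t\iprod{\pi^*(\cdot|x)}{q_t(x,\cdot)} - \DDKL{\pi^*(\cdot|x)}{\pi_1(\cdot|x)}$ (equivalent to non-negativity of $\DDKL{\pi^*(\cdot|x)}{\pi_{T+1}(\cdot|x)}$ after rearrangement) and dividing by $\alpha$ yields the per-state bound
\[
\sum_{t=1}^{T}\iprod{\pi^*(\cdot|x)-\pi_t(\cdot|x)}{q_t(x,\cdot)} \le \frac{\DDKL{\pi^*(\cdot|x)}{\pi_1(\cdot|x)}}{\alpha} + \frac{\alpha T D_q^2}{2}.
\]

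To finish, I would multiply both sides by $\nu^{\pi^*}(x)$ and sum over $x\in\X$. Since $\nu^{\pi^*}\in\Delta_\X$, the right-hand side second term is preserved unchanged, while the first term aggregates exactly into $\HHKL{\pi^*}{\pi_1}/\alpha$ by the paper's definition of the conditional relative entropy. This gives the claimed bound.

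There is no real obstacle here: the only step requiring any care is the constant in Hoeffding's lemma, where applying the standard bound to random variables in $[-D_q,D_q]$ gives variance proxy $D_q^2/2$ and hence the coefficient $\alpha^2 D_q^2/2$ in the MGF bound, rather than a larger multiple one might get from a sloppier application; everything else is textbook manipulation.
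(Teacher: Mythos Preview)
Your proof is correct and follows essentially the same approach as the paper: a per-state exponentiated-gradient analysis using Hoeffding's lemma to control the log-partition term, followed by telescoping and aggregation under $\nu^{\pi^*}$. The only cosmetic difference is that you work with the normalizers $Z_t(x)$ explicitly while the paper expands $\DDKL{\pi^*(\cdot|x)}{\pi_{t+1}(\cdot|x)}-\DDKL{\pi^*(\cdot|x)}{\pi_t(\cdot|x)}$ directly; these are two equivalent bookkeeping choices for the same argument.
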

\begin{proof}
	We begin by studying the relative entropy between $\pi^{*}(\cdot|x)$ and iterates $\pi_{t}(\cdot|x),\pi_{t+1}(\cdot|x)$ for any $x\in\X$:
	\begin{align*}
	\DDKL{\pi^{*}(\cdot|x)}{\pi_{t+1}(\cdot|x)}
	&= \DDKL{\pi^{*}(\cdot|x)}{\pi_{t}(\cdot|x)} - \sum_{a\in\A}\pi^{*}(a|x)\log\frac{\pi_{t+1}(a|x)}{\pi_{t}(a|x)}\\
	&= \DDKL{\pi^{*}(\cdot|x)}{\pi_{t}(\cdot|x)} - \sum_{a\in\A}\pi^{*}(a|x)\log\frac{e^{\alpha q_{t}(x,a)}}{\sum_{a'\in\A}\pi_{t}(a'|x)e^{\alpha q_{t}(x,a')}}\\
	&= \DDKL{\pi^{*}(\cdot|x)}{\pi_{t}(\cdot|x)} - \alpha\iprod{\pi^{*}(\cdot|x)}{q_{t}(x,\cdot)} + \log\sum_{a\in\A}\pi_{t}(a|x)e^{\alpha q_{t}(x,a)}\\
	&= \DDKL{\pi^{*}(\cdot|x)}{\pi_{t}(\cdot|x)} - \alpha\iprod{\pi^{*}(\cdot|x) - \pi_{t}(\cdot|x)}{q_{t}(x,\cdot)}\\
	&\quad+ \log\sum_{a\in\A}\pi_{t}(a|x)e^{\alpha q_{t}(x,a)} - \alpha\sum_{a\in\A}\pi_{t}(a|x)q_{t}(x,a)\\
	&\leq \DDKL{\pi^{*}(\cdot|x)}{\pi_{t}(\cdot|x)} - \alpha\iprod{\pi^{*}(\cdot|x) - \pi_{t}(\cdot|x)}{q_{t}(x,\cdot)}
	+\frac{\alpha^{2}\infnorm{q_{t}(x,\cdot)}^{2}}{2}
	\end{align*}
	where the last inequality follows from Hoeffding's lemma (cf.~Lemma A.1 in~\citealp{CBLu06:book}). Next, we 
rearrange the above equation, sum over $t=1,\cdots,T$, evaluate the resulting telescoping sum and upper-bound negative 
terms by zero to obtain
	\begin{align*}
	\sum_{t=1}^{T}\iprod{\pi^{*}(\cdot|x) - \pi_{t}(\cdot|x)}{q_{t}(x,\cdot)}
	&\leq \frac{\DDKL{\pi^{*}(\cdot|x)}{\pi_{1}(\cdot|x)}}{\alpha}
	+ \frac{\alpha\infnorm{q_{t}(x,\cdot)}^{2}}{2}.
	\end{align*}
	Finally, using that $\|q_{t}\|_{\infty}\leq D_{q}$ and taking an expectation with respect to $x\sim\nu^{\pi^*}$ 
concludes the proof.
\end{proof}

\newpage
\section{Detailed Computations for Comparing Coverage Ratios}\label{app:ratios}
For ease of comparison, we just consider discounted linear MDPs (Definition \ref{def:linMDP}).

\begin{definition}\label{def:cr}
Recall the following definitions of coverage ratio given by different authors in the offline RL literature:
\begin{enumerate}
	\item $C_{\varphi,c}(\pi^*;\pi_B) = \EEs{\feat[X,A]}{X,A\sim\mu^*}^\top\Vm^{-2c}\EEs{\feat[X,A]}{X,A\sim\mu^*}$\hfill (Ours)
	\item $C^{\diamond}(\pi^*;\pi_B) = \EEs{\fvec(X,A)\transpose\Vm^{-1}\fvec(X,A)}{X,A\sim\mu^*}$\hfill (e.g., \citet{jin2021pessimism})
	\item $C^{\dagger}(\pi^*;\pi_B) = \sup_{y\in\Reals^d}\frac{y\transpose\EEs{\fvec(X,A)\fvec(X,A)\transpose}{X,A\sim\mu^*}y}{y\transpose\EEs{\fvec(X,A)\fvec(X,A)\transpose}{X,A\sim\mu_B}y}$\hfill (e.g., \citet{uehara2022pessimistic})
	\item $C_{\mathcal{F},\pi}(\pi^*;\pi_B) = \max_{f\in\mathcal{F}}\frac{\norm{f-\mathcal{T}^\pi f}_{\mu^*}^2}{\norm{f-\mathcal{T}^\pi f}_{\mu_B}^2}$\hfill (e.g., \citet{Xie21}),
\end{enumerate}	
where $c\in\{1,2\}$, $\Vm=\EEs{\fvec(X,A)\fvec(X,A)\transpose}{X,A\sim\mu_B}$ (assumed invertible), $\mathcal{F}\subseteq \Reals^{\X\times\A}$, and $\mathcal{T}^\pi:\mathcal{F}\to\Reals$ defined as $(\mathcal{T}^\pi f)(x,a)=r(x,a)+\gamma \sum_{x',a'}p(x'\mid x,a)\pi(a'\mid x')f(x',a')$ is the Bellman operator associated to policy $\pi$.
\end{definition}

The following is a generalization of the low-variance property from Section~\ref{sec:discussion}.
\begin{proposition}\label{prop:cr1}
	Let $\Vars{Z}{} = \mathbb{E}[\norm{Z-\EE{Z}}^2]$ for a random vector $Z$. Then
	\begin{equation*}
		C_{\varphi,c}(\pi^*;\pi_B) = \EEs{\fvec(X,A)\transpose\Vm^{-2c}\fvec(X,A)}{X,A\sim\mu^*} - \Vars{\Vm^{-c}\fvec(X,A)}{X,A\sim\mu^*}.
	\end{equation*}
\end{proposition}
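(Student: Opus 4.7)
The plan is to prove this by a direct application of the bias-variance decomposition for a vector-valued random variable, applied to $Z = \Vm^{-c}\fvec(X,A)$ with $X,A\sim\mu^*$. The identity $\Vars{Z}{} = \EE{\norm{Z}^2} - \norm{\EE{Z}}^2$ is immediate from the definition $\Vars{Z}{} = \EE{\norm{Z - \EE{Z}}^2}$ by expanding the square and using linearity of expectation.

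First I would note that since $\Vm$ is symmetric positive definite, $\Vm^{-c}$ is symmetric as well, so $\norm{\Vm^{-c}\fvec(X,A)}^2 = \fvec(X,A)^\top \Vm^{-2c} \fvec(X,A)$. Taking expectations under $\mu^*$, this shows
\[
\EE{\norm{\Vm^{-c}\fvec(X,A)}^2} = \EEs{\fvec(X,A)^\top \Vm^{-2c} \fvec(X,A)}{X,A\sim\mu^*}.
\]
Next, using linearity of expectation, $\EE{\Vm^{-c}\fvec(X,A)} = \Vm^{-c}\EE{\fvec(X,A)}$, so the squared Euclidean norm of the mean rewrites as
\[
\norm{\EE{\Vm^{-c}\fvec(X,A)}}^2 = \EE{\fvec(X,A)}^\top \Vm^{-2c} \EE{\fvec(X,A)} = C_{\varphi,c}(\pi^*;\pi_B),
\]
which matches the definition from Definition~\ref{def:gcr}.

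Plugging both expressions into $\Vars{\Vm^{-c}\fvec(X,A)}{X,A\sim\mu^*} = \EE{\norm{\Vm^{-c}\fvec}^2} - \norm{\EE{\Vm^{-c}\fvec}}^2$ and rearranging to isolate $C_{\varphi,c}(\pi^*;\pi_B)$ yields exactly the claimed identity. There is no real obstacle here: the proposition is essentially a restatement of the bias-variance decomposition specialized to a linear transformation of the feature vector, and the only thing to be slightly careful about is to keep the symmetry of $\Vm^{-c}$ in mind when identifying $\norm{\Vm^{-c}\fvec}^2$ with the quadratic form $\fvec^\top \Vm^{-2c} \fvec$.
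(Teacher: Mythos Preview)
Your proposal is correct and follows essentially the same approach as the paper: both rewrite $C_{\varphi,c}(\pi^*;\pi_B)$ as $\norm{\EEs{\Vm^{-c}\feat[X,A]}{X,A\sim\mu^*}}^2$ and then invoke the elementary identity $\Vars{Z}{}=\EE{\norm{Z}^2}-\norm{\EE{Z}}^2$. Your write-up is slightly more explicit about the symmetry of $\Vm^{-c}$, but there is no substantive difference.
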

\begin{proof}
	We just rewrite $C_{\varphi,c}$ from Definition~\ref{def:cr} as
	\begin{equation*}
		C_{\varphi,c}(\pi^*;\pi_B) = \norm{\EEs{\Vm^{-c}\feat[X,A]}{X,A\sim\mu^*}}^2.
	\end{equation*}
	The result follows from the elementary property of variance $\Vars{Z}{}=\mathbb{E}[\norm{Z}^2] - \norm{\mathbb{E}[Z]}^2$.
\end{proof}

\begin{proposition}\label{prop:cr2}
	$C^{\dagger}(\pi^*;\pi_B)\le C^{\diamond}(\pi^*;\pi_B)\le dC^{\dagger}(\pi^*;\pi_B)$.
\end{proposition}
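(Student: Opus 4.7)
The plan is to reduce both coverage ratios to spectral quantities of the same positive semidefinite matrix, at which point the bound becomes an elementary fact about eigenvalues.

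First, I would introduce the shorthand $\Sigma^{*} = \mathbb{E}_{X,A\sim\mu^{*}}[\varphi(X,A)\varphi(X,A)^{\top}]$ alongside $\Lambda = \mathbb{E}_{X,A\sim\mu_{B}}[\varphi(X,A)\varphi(X,A)^{\top}]$. Using the cyclic property of the trace together with $\varphi^{\top}\Lambda^{-1}\varphi = \mathrm{tr}(\Lambda^{-1}\varphi\varphi^{\top})$, I can rewrite
\[
C^{\diamond}(\pi^{*};\pi_{B}) = \mathrm{tr}\!\bigl(\Lambda^{-1}\Sigma^{*}\bigr) = \mathrm{tr}\!\bigl(\Lambda^{-1/2}\Sigma^{*}\Lambda^{-1/2}\bigr).
\]
For the other ratio, changing variables $y = \Lambda^{-1/2}z$ turns the Rayleigh quotient defining $C^{\dagger}$ into
\[
C^{\dagger}(\pi^{*};\pi_{B}) = \sup_{z\in\mathbb{R}^{d}\setminus\{0\}}\frac{z^{\top}\Lambda^{-1/2}\Sigma^{*}\Lambda^{-1/2}z}{z^{\top}z} = \lambda_{\max}\!\bigl(\Lambda^{-1/2}\Sigma^{*}\Lambda^{-1/2}\bigr).
\]

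Setting $M = \Lambda^{-1/2}\Sigma^{*}\Lambda^{-1/2}$, the proposition reduces to the inequalities $\lambda_{\max}(M) \le \mathrm{tr}(M) \le d\,\lambda_{\max}(M)$. Since $\Sigma^{*}$ is positive semidefinite and $\Lambda^{-1/2}$ is symmetric, $M$ is positive semidefinite, so all its $d$ eigenvalues are non-negative. Writing the trace as the sum of eigenvalues, both inequalities follow immediately: the maximum eigenvalue is at most the sum of non-negative eigenvalues, and the sum of eigenvalues is at most $d$ times the largest.

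There is essentially no obstacle here beyond being careful that $\Lambda$ is invertible (which is assumed in Definition~\ref{def:cr}) so that $\Lambda^{-1/2}$ is well defined and the change of variables is a bijection on $\mathbb{R}^{d}$. Combining the two displayed identities with the trace-eigenvalue inequalities yields the chain $C^{\dagger} = \lambda_{\max}(M) \le \mathrm{tr}(M) = C^{\diamond} \le d\,\lambda_{\max}(M) = d\,C^{\dagger}$, completing the proof.
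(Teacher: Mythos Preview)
Your proof is correct and follows essentially the same approach as the paper: rewrite $C^{\diamond}$ as the trace of $\Lambda^{-1/2}\Sigma^{*}\Lambda^{-1/2}$ via the cyclic property, rewrite $C^{\dagger}$ as its largest eigenvalue via the change of variables $y=\Lambda^{-1/2}z$, and conclude from the trace--eigenvalue inequalities for a positive semidefinite matrix.
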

\begin{proof}
	Let $(X^*,A^*)\sim \mu^*$ and $\bm{M}=\EE{\feat[X^*,A^*]\feat[X^*,A^*]}$. First, we rewrite $C^{\diamond}$ as
	\begin{align}
	C^{\diamond}(\pi^*;\pi_B) &= \EE{\feat[X^*,A^*]\transpose\Vm^{-1}\feat[X^*,A^*]}\nonumber\\
	&=\EE{\Tr(\feat[X^*,A^*]\transpose\Vm^{-1}\feat[X^*,A^*])}\nonumber\\
	&=\EE{\Tr(\feat[X^*,A^*]\feat[X^*,A^*]\transpose\Vm^{-1})}\\
	&=\Tr(\bm{M}\Vm^{-1})\\
	&=\Tr(\Vm^{-1/2}\bm{M}\Vm^{-1/2}),
	\end{align}
	where we have used the cyclic property of the trace (twice) and linearity of trace and expectation. Note that, since $\Vm$ is positive definite, it admits a unique positive definite matrix $\Vm^{1/2}$ such that $\Vm=\Vm^{1/2}\Vm^{1/2}$. We rewrite $C^\dagger$ in a similar fashion
	\begin{align}
	C^\dagger(\pi^*;\pi_B) &= 
	\sup_{y\in\Reals^d}\frac{y\transpose \bm{M} y}{y\transpose\Vm y} \nonumber\\
	&=	\sup_{z\in\Reals^d}\frac{z\transpose \Vm^{-1/2}\bm{M} \Vm^{-1/2} z}{z\transpose z}\\
	&=\lambdamax( \Vm^{-1/2}\bm{M} \Vm^{-1/2}),
	\end{align}
	where $\lambdamax$ denotes the maximum eigenvalue of a matrix. We have used the fact that both $\bm{M}$ and $\Vm$ are positive definite and the min-max theorem. Since the quadratic form $\Vm^{-1/2}\bm{M}\Vm^{-1/2}$ is also positive definite, and the trace is the sum of the (positive) eigenvalues, we get the desired result.
\end{proof}

\begin{proposition}[cf. the proof of Theorem 3.2 from~\citep{Xie21}]\label{prop:cr3}
	Let $\mathcal{F}=\{f_{\tvec}:(x,a)\mapsto \langle \feat,\tvec\rangle\mid \tvec\in\Theta\subseteq\Reals^d\}$ where $\varphi$ is the feature map of the linear MDP. Then
	\begin{equation*}
		C_{\mathcal{F},\pi}(\pi^*;\pi_B) \le C^\dagger(\pi^*;\pi_B),
	\end{equation*}
	with equality if $\Theta=\Reals^d$.
\end{proposition}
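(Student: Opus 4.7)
The plan is to use the structural property of linear MDPs that the Bellman operator maps linear $q$-functions to linear $q$-functions, so that the Bellman error of any $f_\theta \in \mathcal{F}$ is itself of the form $\langle \varphi(\cdot,\cdot), y\rangle$ for some $y \in \mathbb{R}^d$. Once this reduction is in hand, the Bellman-error ratio becomes literally an instance of the quadratic-form ratio appearing in the definition of $C^\dagger$, restricted to the subset of $y$'s that arise from some $\theta \in \Theta$; the inequality then follows immediately, and the equality case reduces to showing surjectivity of the map $\theta \mapsto y$ when $\Theta = \mathbb{R}^d$.

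More concretely, I would first compute $\mathcal{T}^\pi f_\theta$ using the linear-MDP decomposition $p(x'\mid x,a) = \langle \varphi(x,a), \psi(x')\rangle$ and $r(x,a) = \langle \varphi(x,a), \omega\rangle$, obtaining
\[
(\mathcal{T}^\pi f_\theta)(x,a) = \langle \varphi(x,a),\, \omega + \gamma \Psi v_{\theta,\pi}\rangle,
\]
where $v_{\theta,\pi}(x) = \sum_{a'} \pi(a'\mid x)\langle \varphi(x,a'),\theta\rangle$. Since $v_{\theta,\pi}$ is linear in $\theta$, we can write $v_{\theta,\pi} = V_\pi \theta$ for a matrix $V_\pi \in \mathbb{R}^{|\X|\times d}$, and therefore $f_\theta - \mathcal{T}^\pi f_\theta = \langle \varphi(\cdot,\cdot), y(\theta)\rangle$ with the affine map $y(\theta) \doteq (I - \gamma \Psi V_\pi)\theta - \omega$. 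A one-line calculation then gives $\|f_\theta - \mathcal{T}^\pi f_\theta\|_\mu^2 = y(\theta)^\top \mathbb{E}_\mu[\varphi\varphi^\top] y(\theta)$ for any distribution $\mu$.

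Substituting into the definition of $C_{\mathcal{F},\pi}$, the ratio becomes $\sup_{\theta\in\Theta}\tfrac{y(\theta)^\top \mathbb{E}_{\mu^*}[\varphi\varphi^\top] y(\theta)}{y(\theta)^\top \mathbb{E}_{\mu_B}[\varphi\varphi^\top] y(\theta)}$, which is a supremum of the same homogeneous quadratic-form ratio as in $C^\dagger$, taken over the (in general proper) subset $\{y(\theta):\theta\in\Theta\}\subseteq\mathbb{R}^d$. Hence $C_{\mathcal{F},\pi}(\pi^*;\pi_B) \le C^\dagger(\pi^*;\pi_B)$.

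For the equality claim with $\Theta = \mathbb{R}^d$, it suffices to observe that $\{y(\theta) : \theta \in \mathbb{R}^d\} = \mathbb{R}^d$, i.e. that the linear part $I - \gamma \Psi V_\pi$ is invertible. I expect this to be the only nontrivial step, and would argue it from the fact that $\mathcal{T}^\pi$ is a $\gamma$-contraction in the sup norm, so that on the $d$-dimensional subspace of linear $q$-functions (which is preserved by $\mathcal{T}^\pi$ under the linear-MDP assumption) the operator $\theta \mapsto \gamma \Psi V_\pi \theta$ has spectral radius strictly below $1$; equivalently, using $\Phi$ full rank, any $q^\pi$ in the image is uniquely represented, making $I - \gamma \Psi V_\pi$ nonsingular. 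Surjectivity of $y(\cdot)$ then gives $C_{\mathcal{F},\pi} = C^\dagger$, completing the proof.
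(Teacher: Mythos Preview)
Your approach is correct and essentially matches the paper's: both use linear Bellman completeness to write the Bellman error as $\langle\varphi(\cdot,\cdot),\,\theta-\mathcal{T}\theta\rangle$, then bound the resulting ratio by the supremum over all directions $y\in\mathbb{R}^d$. You are in fact more careful than the paper on the equality case---the paper simply writes $\max_{\theta\in\mathbb{R}^d}=\max_{y\in\mathbb{R}^d}$ without justifying that $\theta\mapsto\theta-\mathcal{T}\theta$ is surjective, whereas your contraction argument (using that $\|\Phi\cdot\|_\infty$ is a genuine norm on $\mathbb{R}^d$ by full-rankness of $\Phi$, so the linear part $\gamma\Psi V_\pi$ has spectral radius at most $\gamma<1$) correctly establishes invertibility of $I-\gamma\Psi V_\pi$ and hence surjectivity.
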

\begin{proof}
	Fix any policy $\pi$ and let $\mathcal{T}=\mathcal{T}^\pi$. By linear Bellman completeness of linear MDPs~\citep{Jin2020}, $\mathcal{T}f\in\mathcal{F}$ for any $f\in\mathcal{F}$. For $f_{\tvec}:(x,a)\mapsto\ip{\feat[x,a]}{\tvec}$, let $\mathcal{T}\tvec\in\Theta$ be defined so that $\mathcal{T}f_{\tvec}:(x,a)\mapsto\ip{\feat[x,a]}{\mathcal{T}\tvec}$. Then
	\begin{align}
	C_{\mathcal{F},\pi}(\pi^*;\pi_B) 
	&=\max_{f\in\mathcal{F}}\frac{\EEs{\left(f(X,A)-\mathcal{T} f(X,A)\right)^2}{X,A\sim\mu^*}}{\EEs{\left(f(X,A)-\mathcal{T} f(X,A)\right)^2}{X,A\sim\mu_B}} \\
	&\le\max_{\tvec\in\Reals^d}\frac{\EEs{\ip{\feat[X,A]}{\tvec-\mathcal{T}\tvec}^2}{X,A\sim\mu^*}}{\EEs{\ip{\feat[X,A]}{\tvec-\mathcal{T}\tvec}^2}{X,A\sim\mu_B}}\label{ineq} \\
	&= \max_{y\in\Reals^d}\frac{\EEs{\ip{\feat[X,A]}{y}^2}{X,A\sim\mu^*}}{\EEs{\ip{\feat[X,A]}{y}^2}{X,A\sim\mu_B}} \\
	&= \max_{y\in\Reals^d}\frac{y\transpose\EEs{\feat[X,A]\feat[X,A]\transpose}{X,A\sim\mu^*}y}{y\transpose\EEs{\feat[X,A]\feat[X,A]\transpose}{X,A\sim\mu_B}y},
	\end{align}
	where the inequality in Equation~\eqref{ineq} holds with equality if $\Theta=\Reals^d$.
\end{proof}

\end{document}